\documentclass[11pt]{article}
\usepackage{amsmath,amssymb,amsthm,amsfonts}
\usepackage{latexsym,bbm,graphicx,float,mathtools}
\usepackage{algorithm}
\usepackage[noend]{algpseudocode}
\usepackage{framed}
\usepackage{verbatim}
\usepackage[shortlabels]{enumitem}
\usepackage{hyperref}
\usepackage{cleveref}
\usepackage[round]{natbib}
\usepackage{subcaption}
\usepackage[colorinlistoftodos,prependcaption,textsize=small]{todonotes}
\hypersetup{
    colorlinks,
    citecolor=blue,
    filecolor=black,
    linkcolor=black,
    urlcolor=black,
    linktoc=all
}

\newtheorem{theorem}{Theorem}[section]
\newtheorem{lemma}{Lemma}[section]

\newtheorem{proposition}[lemma]{Proposition}

\newtheorem{corollary}[lemma]{Corollary}

\newtheorem{problem}{Problem}

\newtheorem{definition}{Definition}

\newcommand{\costmax}{C_{\max}}

\newcommand{\calA}{\mathcal A}

\newcommand{\calS}{\mathcal S}

\newcommand{\calD}{\mathcal{D}}
\newcommand{\calP}{\mathcal{P}}

\newcommand{\calG}{\mathcal{G}}

\newcommand{\E}{\mathbb{E}}

\newcommand{\TV}{\mathrm{TV}}

\usepackage[dvipsnames]{xcolor}
\newcommand{\Regret}{\mathrm{Regret}}
\newcommand{\SeqRejectron}{\mathsf{SeqRejectron}}

\title{Learning When to Stop: Selective Imitation Learning Under Arbitrary Dynamics Shift}
\author{Surbhi Goel, Jonathan Pei, James Wang}

\begin{document}

\bibliographystyle{plainnat}

\maketitle

\begin{abstract}
Behavior cloning provides strong imitation learning guarantees when training and test environments share the same dynamics. However, in many deployment settings the test environment's transitions differ from training, and classical offline IL offers no recourse: the learner must commit to an action at every state, even when its demonstrations are uninformative and could lead to arbitrary degradation of performance. This motivates the study of \emph{selective} imitation, where the learner may choose to \emph{stop} when it cannot act reliably. We introduce a model for selective imitation under arbitrary dynamics shift: given labeled expert demonstrations from a training environment and unlabeled state trajectories from the same expert in a test environment, the learner outputs a \emph{selective policy} that is \emph{complete} (rarely stops in training) and \emph{sound} (incurs low regret before stopping in test). Our algorithm, $\SeqRejectron$, constructs a stopping rule using a small set of \emph{validator policies} whose size is independent of the horizon or policy class. For deterministic policies, this yields horizon-free $\tilde{O}(\log|\Pi|/\epsilon^2)$ sample complexity, assuming sparse costs. For stochastic policies, we obtain analogous horizon-free guarantees using a cumulative Hellinger stopping time. We extend the framework to misspecified experts and different expert policies across train and test and obtain results that gracefully degrade with the amount of misspecification.
\end{abstract}

\section{Introduction} \label{sec: introduction}
A central challenge in deploying learned policies is \emph{environment shift}: the dynamics governing state transitions at test time may differ from those encountered during training. This arises ubiquitously: in sim-to-real transfer for robotics \citep{tobin2017domain}, in autonomous driving where conditions vary between simulation and the road \citep{codevilla2019exploring}, and in any domain where a policy trained on demonstrations must act in a changed environment.

Classical imitation learning offers strong guarantees when training and test environments coincide, but these guarantees become vacuous under even modest dynamics shift. Even with infinite data, a policy may perform arbitrarily poorly if the test environment's state support differs from the training environment's. The fundamental issue is that it cannot recognize when its training data no longer provides reliable guidance, and has no mechanism to stop before things go wrong. This motivates a basic question: \emph{can a policy learn not only what to do, but when to stop?}

We develop a framework for \emph{learning to imitate with the option to abstain}, drawing on the \emph{PQ learning} paradigm from selective classification \citep{goldwasser2020beyond}. We consider a practical setting where the learner is given labeled (state, action) expert demonstrations from a training environment $M$, but only unlabeled (state-only) expert trajectories from a test environment $N$. This asymmetry naturally models settings where passive observation is cheap and abundant, but action telemetry is costly or proprietary. For instance, in autonomous driving, a learner might train on rich kinematic data from a sensor-equipped fleet, but must generalize to new cities using massive datasets of public dashcam footage---which provide valid visual states of expert driving, but completely lack the underlying steering commands \citep{xu2017end, torabi2018behavioral, codevilla2019exploring}. In healthcare, a sepsis treatment policy trained on a fully instrumented ICU system may be deployed to a setting where precise dosage decisions are unavailable \citep{komorowski2018artificial, finlayson2021clinician}. In algorithmic trading, a model trained on historical market conditions with full access to proprietary executed trades must often navigate novel macroeconomic regimes where only public ticker data is observable, leaving the exact buy and sell actions of top experts completely hidden.

The goal is to learn a \emph{selective policy} that acts when its training data supports confident prediction and stops execution otherwise. We require \emph{completeness} (the policy rarely stops when deployed in the training environment) and \emph{soundness} (whenever the policy chooses to act in the test environment, it matches the expert's performance). A policy that always stops is trivially sound but useless; one that never stops is trivially complete but unsafe. The tension makes the problem nontrivial.

\subsection{Our approach and results}

We certify trajectory prefixes using \emph{validator policies}: competing explanations of the expert drawn from the training-consistent version space. The learner executes a base policy $\pi_0$, continuing only while every validator agrees and abstaining at the first disagreement. Because a small validator set always suffices, the resulting sample complexity depends on the horizon $H$ only through the trajectory cost scale $\costmax$ and the complexity of the policy class. We call this approach $\SeqRejectron$. It generalizes the selective classification algorithm of \citet{goldwasser2020beyond} to sequential decision-making, matching recent horizon-independent rates for imitation learning \citep{foster2024behavior} while accommodating dynamics shift. We analyze it in three increasingly general settings.

\paragraph{Deterministic policies (\Cref{section: deterministic}).} We first develop our framework for finite classes of deterministic policies. In this setting, each validator triggers at the first state where it disagrees with the base policy. We prove that a surprisingly sparse set of these validators is always sufficient to safely stop execution. Consequently, we achieve horizon-free sample complexity: when $m\asymp n$, both the stopping rate and the regret scale as $\tilde O(\sqrt{\log|\Pi|/n})$, where the horizon enters only through the trajectory cost scale $\costmax$. Finally, we demonstrate that this approach is oracle-efficient by providing an exact reduction to weighted realizable multiple-instance learning.

\paragraph{Stochastic policies (\Cref{sec: stochastic}).} The binary disagreement test is too coarse for stochastic policies, where two distributions may differ softly at every step without any single step being alarming. We replace it with a cumulative Hellinger stopping time that tracks a trajectory-level divergence budget, preserving the validator template. The main result gives finite-samples bound on both the stopping rate and the regret. This bound has both a \emph{cost-driven} and \emph{variance-driven} exponent. We prove an $\Omega(1/\epsilon^3)$ labeled-sample lower bound already in the single-step stochastic setting, matching the cost-driven part of the upper bound and leaving the variance-driven exponent open.

\paragraph{Misspecified policy classes (\Cref{sec: misspecification}).} Under misspecification, the exact-consistency version space does not contain the expert. We replace it with a symmetrically regularized game that softly penalizes source-data disagreement, ensuring the validator distribution remains well-defined and sparse. We show that the guarantees degrade gracefully under misspecification.

\subsection{Related work}
Our work builds on three strands of literature.\footnote{We defer a more comprehensive literature review to \Cref{sec: additional related work}.} First, it is closely related to imitation learning and recent analyses of behavior cloning. Behavior cloning reduces imitation learning to supervised prediction, but is classically limited by compounding error under distribution shift \citep{pomerleau1988alvinn,ross2010efficient, ross2011reduction}. Recent theory has further characterized the statistical limits of offline imitation learning and identified settings in which horizon-independent guarantees are possible \citep{rajaraman2020toward, swamy2022minimax, foster2024behavior}. We study a modified setting, where the learner is trained offline, but the test dynamics may shift arbitrarily. 

Second, our framework is inspired by selective classification and learning under arbitrary distribution shift. Closest to our setting is the PQ framework, which studies source-labeled, target-unlabeled learning with abstention under arbitrary shift \citep{goldwasser2020beyond,kalai2021efficient,goel2024tolerant}.
Our framework can be viewed as a sequential analogue of PQ learning, where the learner abstains along a trajectory rather than on a single prediction.

Third, our work is related to robustness and uncertainty in reinforcement learning and robotics, including robust control, sim-to-real transfer, domain randomization, and OOD detection \citep{iyengar2005robust,nilim2005robust,wiesemann2013robust,tobin2017domain,peng2018sim,chebotar2019closing,chae2022robust,haider2023out}. These approaches aim to make policies robust or adaptive to changing environments, but are typically heuristic, make strong assumptions about the nature of the shift, or require online access to the test distribution. By contrast, we study distribution-free guarantees under arbitrary dynamics shift using only labeled source demonstrations and unlabeled target trajectories, with abstention\footnote{Throughout, will will interchangeably refer to \emph{abstention} as \emph{stopping}.} as the safety mechanism.

\section{Problem Formulation} \label{sec: problem setup}
\paragraph{Markov decision processes.}
A finite-horizon Markov decision process (MDP) is a tuple $M = (\calS, \calA, P, c, H)$, where $\calS$ is the state space, $\calA$ is the action space\footnote{To simplify presentation, we assume throughout that $\calS$ and $\calA$ are countable; the definitions and results extend to general spaces with the appropriate measure-theoretic treatment.}, $H$ is the horizon, $P = \{P_h\}_{h=0}^{H-1}$ specifies the state evolution, and $c = \{c_h\}_{h=1}^H$ are per-step cost functions $c_h : \calS \times \calA \to [0,1]$. Here $P_0 \in \Delta(\calS)$ is the initial-state distribution, and for each $h = 1,\dots,H-1$, $P_h(\cdot \mid s,a) \in \Delta(\calS)$ is the transition kernel. A (possibly nonstationary, possibly randomized) policy is $\pi = \{\pi_h\}_{h=1}^H$ with $\pi_h : \calS \to \Delta(\calA)$. Together, $(M,\pi)$ induce a distribution over trajectories $(s_1,a_1),\dots,(s_H,a_H)$ via
\[
s_1 \sim P_0, \qquad a_h \sim \pi_h(\cdot \mid s_h), \qquad s_{h+1} \sim P_h(\cdot \mid s_h,a_h) \quad \text{for } h=1,\dots,H-1.
\]
We denote the expected cumulative cost of $\pi$ in $M$ by $J_M(\pi;c)\coloneqq\E^{M,\pi}\left[\sum_{h=1}^H c_h(s_h,a_h)\right]$.

\paragraph{Stopping times.}
A key feature of our framework is that the learner may stop execution mid-trajectory. We formalize this decision to abstain via a \emph{stopping time} adapted to the observed history. Denote the canonical sample space of full state-action trajectories by $\Omega = (\calS \times \calA)^H$. Let $\mathbb{G} = \{\mathcal{G}_h\}_{h=1}^H$ be the pre-action filtration on $\Omega$, where $\mathcal{G}_h=\sigma(s_1,a_1,\dots,s_{h-1},a_{h-1},s_h)$. Thus $\mathcal{G}_h$ contains exactly the information available at step $h$, just before the learner chooses $a_h$. For convenience, we also let $\mathcal{G}_{H+1} = \sigma(s_1,a_1,\dots,s_H,a_H)$.

We define the point of abstention as a stopping time $\tau : \Omega \to \{1,\dots,H,H+1\}$ adapted to the filtration $\mathbb{G}$. The event $\{\tau = h\}$ signifies that the learner stops execution at step $h$, having observed the history up to $s_h$. We adopt the convention that $\tau = H+1$ if the learner completes the full trajectory without abstaining.

\subsection{Problem statement}

Let $M \coloneqq (\calS,\calA,P,c,H)$ and $N \coloneqq (\calS,\calA,Q,c,H)$ be MDPs that share state space, action space, costs, and horizon, but may differ in their dynamics $P = \{P_h\}_{h=0}^{H-1}$ and $Q = \{Q_h\}_{h=0}^{H-1}$. We refer to $M$ as the \emph{training environment} and $N$ as the \emph{test environment}. For any environment $E \in \{M,N\}$ and policy $\pi$, we write $\calP_E^\pi$ for the law of the trajectory under $(E,\pi)$, and $\calP_{E,\mathrm{state}}^\pi$ for the law of the state trajectory $s_{1:H}$. Let $\costmax$ denote an upper bound on the total trajectory cost $\sum_{h=1}^H c_h(s_h,a_h)$.

Let $\Pi$ be a policy class and let $\pi^{\star}\in \Pi$ denote an expert policy whose behavior we seek to imitate. We do not assume that $\pi^\star$ is optimal for $c$. The learner observes demonstrations generated by $\pi^\star$, but does not observe realized costs or expert actions in $N$:
\begin{itemize}
    \item \textbf{Labeled training rollouts:} drawn i.i.d.\ from $(M,\pi^\star)$, $\mathcal{D}_{\mathrm{train}}=\{(s_{k,1:H},a_{k,1:H})\}_{k=1}^m$.
    \item \textbf{Unlabeled test rollouts:} state-only trajectories drawn i.i.d.\ from $(N,\pi^\star)$, $\mathcal{D}_{\mathrm{test}}=\{t_{l,1:H}\}_{l=1}^n$.
\end{itemize}

Without the option to abstain, the problem is poorly posed under environment shift: if the expert-induced state supports of $M$ and $N$ are disjoint, even infinite labeled data from $M$ cannot determine the expert's actions on test-time states. To maintain low regret across all valid expert hypotheses, the learner is forced to \emph{abstain} on unseen test states. We therefore allow the learner to stop execution mid-trajectory. We formalize this via the notion of a \emph{selective policy}.

\begin{definition}[Selective policy, stopped regret, and stopping rate]
A \emph{selective policy} is a pair $(\pi, \tau)$, where $\pi \in \Pi$ is a policy and $\tau$ is a stopping time adapted to the pre-action filtration $\mathcal G_h = \sigma(s_1, a_1, \dots, s_{h - 1}, a_{h - 1}, s_h)$. When deployed, it incurs cost only on the time steps before $\tau$. We define the \emph{stopped cost} and \emph{stopped regret} (in the test environment $N$) as
\begin{align}
    J_N(\pi,\tau;c) &\coloneqq \E^{N,\pi}\left[\sum_{h=1}^{\tau-1} c_h(s_h,a_h)\right], \qquad \Regret_N(\pi,\tau;c) \coloneqq J_N(\pi,\tau;c) - J_N(\pi^\star, \tau; c).
\end{align}
Additionally, we define the \emph{stopping rate} (in the training environment $M$) as 
\begin{align}
    \alpha_M(\pi,\tau) \coloneqq \Pr_{M,\pi}(\tau \le H).
\end{align}
\end{definition}

Intuitively, a selective policy can be thought of as an agent equipped with a fail-safe, executing its learned behavior only as long as it remains confident. For example, consider a self-driving car trained exclusively to navigate dry, sunny roads. If it is suddenly deployed in a blizzard where tire traction and visibility drastically change, a standard imitation policy might blindly attempt a normal turn and crash. A selective policy, by contrast, recognizes that these icy conditions fall outside its training support and safely pulls over before a catastrophic error occurs.
\begin{problem}[Selective imitation learning: stopped regret formulation]\label{prob: selective imitation}
Given labeled training rollouts $\mathcal{D}_{\mathrm{train}}$ and unlabeled test rollouts $\mathcal{D}_{\mathrm{test}}$, output a selective policy $(\pi,\tau)$ satisfying:
\begin{itemize}
    \item \textbf{Completeness:} $\alpha_M(\pi,\tau) \le \epsilon$. (The policy rarely abstains in $M$).
    \item \textbf{Soundness:} $\Regret_N(\pi,\tau;c) \le \epsilon$. (The policy incurs low stopped regret in $N$).
\end{itemize}
\end{problem}

\section{Selective Execution for Deterministic Policies}\label{section: deterministic}
We first present our results for the case when $\pi^{\star}$ and $\Pi$ are deterministic policies. We first analyze a per-step baseline that suffers an avoidable horizon factor $H$ by treating rollouts as independent decisions at each time step. We then resolve this via a trajectory-level construction based on \emph{validator policies} that certifies entire execution prefixes, yielding an improved sample complexity.

\subsection{First attempt: Stepwise reduction to PQ-learning}\label{subsec: deterministic policies first attempt}

Given that our framework is a natural generalization of PQ-learning \citep{goldwasser2020beyond} to sequential decision making, a natural question is to ask if it is possible to formulate the problem at \emph{each step} as an independent PQ-learning problem. A naive baseline applies Rejectron \cite{goldwasser2020beyond} independently at each step $h \in [H]$, yielding a base policy $\tilde{\pi}_h$ and selection set $S_h$. The learner executes $\tilde{\pi}$ and abstains at the stopping time $\tau = \min\{h : s_h \notin S_h\}$.

Running Rejectron with tolerance $\gamma$ gives $O(\gamma)$ train abstention, $O(\gamma)$ train error, and $O(\gamma)$ test selective error \emph{per step}. However, to convert these to a \emph{trajectory-wise} guarantee, we must union bound over $H$ steps. To achieve regret $\epsilon$, we need $\gamma = O(\epsilon /
(H\costmax))$, which leads to a requirement of $\tilde{O}\left(\frac{H^2 \costmax^2 \log|\Pi|}{\epsilon^2}\right)$ trajectories. For completeness, we formalize this argument in \Cref{subsec: per step pq}.

This $H^2$ penalty persists even when costs are sparse ($\costmax = O(1)$), whereas offline behavior cloning has horizon-independent trajectory complexity under sparse costs~\citep{foster2024behavior}.

\subsection{Trajectory-level validation via validator policies}\label{subsec: deterministic seqrejectron}

The per-step reduction treats a trajectory as $H$ independent decisions, introducing an avoidable horizon factor. However, sequential rollouts have inherent structure: until the learner diverges from the expert, stopping incurs only a prefix of the expert's cost. The danger is failing to stop before this first unrejected deviation. Therefore, our goal is to certify an entire execution prefix at once.

\subsubsection{Validator sets and the existence of sparse validator distributions}

Fix a base policy $\pi_0 \in \Pi_{\mathsf{version}}$. To determine when it is safe to continue executing $\pi_0$, we compare it against other policies in the version space. For any $\pi \in \Pi_{\mathsf{version}}$, the first state where $\pi$ disagrees with $\pi_0$ marks a point where $\pi_0$ is no longer justified by that competing explanation of the expert. Since no single competitor reliably detects all problematic rollouts across all trajectories, we compare $\pi_0$ against a \emph{set} of validators, continuing only while every validator agrees with $\pi_0$.

\begin{definition}[Validator-induced stopping time]
For any base policy $\pi_0$ and set of policies $\Phi \subseteq \Pi_{\mathsf{version}}$, define the stopping time $\tau_{\pi_0, \Phi}$ as the first time along the realized trajectory at which at least one validator in $\Phi$ disagrees with $\pi_0$, or $H+1$ if no such disagreement occurs:\begin{align}
    \tau_{\pi_0, \Phi}(T) \coloneqq \min\left(\big\{h \in [H] : \exists \pi \in \Phi,\ \pi_h(s_h) \neq \pi_{0,h}(s_h)\big\} \cup\{H+1\}\right).\label{eq: validator stopping time}
\end{align}
\end{definition}

The key question is therefore how to choose the validator set $\Phi$. Ideally, we would like $\Phi$ to be such that $\Pr_{\pi_0, N}[\tau_{\pi_0, \Phi} \leq \tau_{\pi_0,\{\pi^{\star}\}}]$ is close to $1$. That is, with high probability, $\Phi$ tells us to stop before $\pi_0$ deviates from $\pi^{\star}$. This tells us that until the stopping time $\tau_{\pi_0, \Phi}$, the policies $\pi_0$ and $\pi^{\star}$ are approximately coupled, and hence incur similar cost. We would also like $\Phi$ to be sparse; this prevents us from rejecting too many trajectories, which is a form of overfitting. 

The key difficulty, however, is that $\Phi$ must be computed \emph{without knowledge of $\pi^{\star}$}. We will thus ask: \emph{Can we find a sparse validator set which stops before \emph{any} policy in $\Pi_{\mathsf{version}}$ deviates from $\pi_0$?} Unfortunately, there is no obvious reason that one small deterministic validator set should work uniformly against every policy in $\Pi_\mathsf{version}$. Fortunately, the following result shows that it is always possible to compute a \emph{distribution} over sparse validator sets which satisfies this guarantee with high probability. We thus sample from this distribution to obtain a sparse validator set for $\pi^{\star}$.

\begin{definition}[$\rho$-valid validator distribution]
Fix $\pi_0 \in \Pi_{\mathsf{version}}$ and unlabeled test trajectories $\mathcal{D}_{\mathsf{test}}=\{T_j\}_{j=1}^n$. A distribution $q$ over subsets $\Phi \subseteq \Pi_{\mathsf{version}}$ is called \emph{$\rho$-valid} for $\pi_0$ if for every $\pi \in \Pi_{\mathsf{version}}$,
\[
\E_{\Phi \sim q}\left[\frac{1}{n}\sum_{j=1}^n \mathbf{1}\left[\tau_{\pi_0,\Phi}(T_j) > \tau_{\pi_0,\{\pi\}}(T_j)\right]\right] \le \rho.
\]
\end{definition}

In words, a $\rho$-valid distribution places mass on validator sets that, on almost all empirical test trajectories, stop no later than when \emph{any} fixed competing policy in the version space diverges from $\pi_0$. The next lemma shows that this averaged notion of coverage can always be achieved by distributions supported on \emph{small} validator sets. Importantly, the support-size bound depends only on the target violation level $\rho$; downstream, $\rho$ will only depend on $\epsilon$ and $\costmax$.

\begin{lemma}[Existence of sparse valid validator distributions]
\label{lemma:sparse-valid-validator-distribution}
For every $\rho \in (0,1)$, there exists a $\rho$-valid distribution $q^\star$ over subsets $\Phi \subseteq \Pi_{\mathsf{version}}$ such that every $\Phi \in \mathsf{supp}(q^\star)$ satisfies $|\Phi| \le \left\lceil \frac{1}{\rho} \right\rceil$.
\end{lemma}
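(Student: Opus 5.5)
The plan is to cast the existence of $q^\star$ as a finite zero-sum game and apply von Neumann's minimax theorem. The crucial structural fact is that the validator stopping time is a minimum. Directly from \eqref{eq: validator stopping time}, $\tau_{\pi_0,\Phi}(T) = \min_{\varphi\in\Phi}\tau_{\pi_0,\{\varphi\}}(T)$. Hence a violation $\tau_{\pi_0,\Phi}(T) > \tau_{\pi_0,\{\pi\}}(T)$ occurs if and only if \emph{every} $\varphi\in\Phi$ satisfies $\tau_{\pi_0,\{\varphi\}}(T) > \tau_{\pi_0,\{\pi\}}(T)$. In other words, $\pi$ deviates from $\pi_0$ strictly before all validators do.

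Set $k=\lceil 1/\rho\rceil$, and work with the finite class $\Pi$, so that $\Pi_{\mathsf{version}}$ is finite. The game is as follows.
\begin{itemize}
\item The learner picks a multiset $\Phi$ of at most $k$ policies from $\Pi_{\mathsf{version}}$; there are finitely many such choices.
\item The adversary picks $\pi\in\Pi_{\mathsf{version}}$.
\item The learner's loss is $L(\Phi,\pi)=\frac1n\sum_{j}\mathbf 1[\tau_{\pi_0,\Phi}(T_j)>\tau_{\pi_0,\{\pi\}}(T_j)]$.
\end{itemize}
Both strategy sets are finite, so the minimax theorem gives
\[
\min_{q}\max_{\pi} \E_{\Phi\sim q}L(\Phi,\pi)=\max_{r\in\Delta(\Pi_{\mathsf{version}})}\min_{\Phi}\E_{\pi\sim r}L(\Phi,\pi).
\]
It therefore suffices to show that for every mixed adversary strategy $r$ some (possibly randomized) response has expected loss at most $1/(k+1)\le\rho$. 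Any minimax-optimal $q^\star$ is then $\rho$-valid. Collapsing repeated elements of each multiset gives a set of size at most $k$, and the stopping time is unchanged by this collapse.

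To bound the game value against a fixed $r$, I would use a symmetrization argument. The learner draws $\varphi_1,\dots,\varphi_k$ i.i.d.\ from $r$ and sets $\Phi=\{\varphi_1,\dots,\varphi_k\}$. Fix a trajectory $T_j$, and let $X_0=\tau_{\pi_0,\{\pi\}}(T_j)$ and $X_i=\tau_{\pi_0,\{\varphi_i\}}(T_j)$, with $\pi\sim r$. These $k+1$ random variables are i.i.d. A violation on $T_j$ is exactly the event that $X_0<X_i$ for all $i\ge 1$, i.e.\ $X_0$ is the \emph{strict} unique minimum of $X_0,\dots,X_k$. The events ``$X_i$ is the strict unique minimum'' for $i=0,\dots,k$ are disjoint, and by exchangeability they have equal probability. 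So each has probability at most $1/(k+1)$. Averaging over $j$ and over the randomness gives $\E\, L\le 1/(k+1)<\rho$. Since the minimum over $\Phi$ is at most this average, the right-hand side of the minimax identity is below $\rho$.

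The main point to handle carefully is ties. Multiple policies can first deviate at the same step, and the value $H+1$ is shared whenever no deviation occurs. This is exactly why the strict inequality in the definition of $\rho$-valid matters: with ties, no single index is the strict unique minimum, so those outcomes contribute no violation, and the disjointness-plus-symmetry bound still holds. A secondary technical point is that the minimax theorem needs finiteness. I would rely on $\Pi$ being finite, as in the $\log|\Pi|$ results. For infinite classes one would instead invoke a compactness or approximate-minimax version.
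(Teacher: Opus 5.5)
Your proposal is correct and follows the paper's proof essentially step for step: a finite zero-sum game, von Neumann's minimax theorem, and the exchangeability bound $1/(K+1)<\rho$ from drawing $K=\lceil 1/\rho\rceil$ validators i.i.d.\ from the adversary's mixed strategy, with ties handled by the strict inequality in the violation event. The only difference is in how finiteness is obtained: the paper sets up the game over the finite set of empirical stopping-time vectors in $[H+1]^n$ (\Cref{lemma: generalized sparse selective policies}) and then picks one representative policy per vector, which gives finiteness even when $\Pi$ is infinite, so your compactness caveat is not needed.
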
\begin{proof}[Proof sketch]
We set up a zero-sum game between a challenger, who picks a distribution $p$ over competitor policies $\pi \in \Pi_{\mathsf{version}}$, and a validator, who picks a distribution $q$ over sets $\Phi$ of $K = \lceil 1/\rho \rceil$ policies from $\Pi_{\mathsf{version}}$. The payoff is the expected fraction of test trajectories on which the validator set stops strictly later than the challenger's policy. Since the payoff is bilinear in $(p, q)$, by the minimax theorem, it suffices to prove the following dual statement: for every $p \in \Delta(\Pi_{\mathsf{version}})$, there exists a validator strategy achieving
\begin{align*}
    \E_{\pi \sim p}\, \E_{\Phi \sim q}\left[\frac{1}{n}\sum_{j=1}^n \mathbf{1}\left[\tau_{\pi_0,\Phi}(T_j) > \tau_{\pi_0,\{\pi\}}(T_j)\right]\right] \leq \frac{1}{K+1} < \rho.
\end{align*}
To prove this, fix any $p$ and consider the validator strategy $p^K$ of drawing all $K$ validators i.i.d.\ from $p$. The challenger and the $K$ validators are then $K+1$ exchangeable i.i.d.\ draws from $p$, so by symmetry the probability that the challenger's stopping time is the strict minimum on any trajectory is at most $1/(K+1)$. Sparsity holds because every $\Phi \in \mathsf{supp}(q^\star)$ has size $K$.
\end{proof}

The proof uses two key ingredients: the minimax theorem and the exchangeability of i.i.d. draws. Importantly, there is no dependence on the state space, action space, horizon, or size of $\Pi$. This generality is important, and we will reuse it for stochastic policies (\Cref{sec: stochastic}) and the misspecified setting (\Cref{sec: misspecification}). We state the general version as \Cref{lemma: generalized sparse selective policies} in the appendix.

\paragraph{How do we compute $q^{\star}$?} To keep the statistical argument separate from the computation, we package \Cref{lemma:sparse-valid-validator-distribution} into a subroutine $\textsc{SparseValidatorDist}(\Pi_{\mathsf{version}},\pi_0,\mathcal{D}_{\mathsf{test}},\rho, \xi,\delta)$, which with probability $1-\delta$, returns a $(\rho + \xi)$-valid distribution over subsets of $\Pi_{\mathsf{version}}$ with size at most $\left\lceil \frac1\rho \right\rceil$. Here $\xi$ is a computational slack term that can be driven arbitrarily close to 0. A concrete implementation is given in \Cref{section: implementation}, where we realize this subroutine via no-regret dynamics.

\subsubsection{Algorithm and guarantees}

We are now ready to present our algorithm for selective imitation with deterministic policies. The algorithm (\Cref{alg: mdprejectron}) has a clean, simple structure. The labeled data define a version space and a base policy $\pi_0$. The unlabeled test trajectories are used to construct a sparse validator distribution, from which we sample a validator set. The returned selector continues while every validator agrees with $\pi_0$ and abstains at the first disagreement.

\begin{algorithm}[H]
\caption{$\SeqRejectron$ for deterministic policies}
\label{alg: mdprejectron}
\begin{algorithmic}[1]
\Require deterministic policy class $\Pi$, labeled rollouts $\mathcal{D}_{\mathsf{train}}=\{(s_{ih},a_{ih})\}_{i\in[m],\,h\in[H]}$, unlabeled rollouts $\mathcal{D}_{\mathsf{test}}=\{T_j\}_{j=1}^n$, tolerance $\eta$, computational slack $\xi$, confidence $\delta$
\State Compute the training-consistent version space
\[
\Pi_{\mathsf{version}} \gets \left\{\pi \in \Pi : \pi_h(s_{ih})=a_{ih}\ \text{for all } i\in[m],\ h\in[H]\right\}.
\]
\State Choose any base policy $\pi_0 \in \Pi_{\mathsf{version}}$.
\State Let $q^{\star}\gets \textsc{SparseValidatorDist}(\Pi_{\mathsf{version}},\pi_0,\mathcal{D}_{\mathsf{test}},\eta/2, \xi, \delta/5)$.
\State Sample $\Phi_1,\dots,\Phi_k \stackrel{\mathrm{i.i.d.}}{\sim} q^\star$ with $k \gets \lceil \log_2(5/\delta)\rceil$ and set $\Phi \gets \bigcup_{r=1}^k \Phi_r$.
\State \Return the selective policy $(\pi_0,\tau_{\pi_0, \Phi})$.
\end{algorithmic}
\end{algorithm}

Our main result for this section, \Cref{theorem: deterministic rejection sample complexity}, is a sample complexity for \Cref{alg: mdprejectron}. 

\begin{theorem}[$\SeqRejectron$ guarantee for deterministic classes]
\label{theorem: deterministic rejection sample complexity}
Let $\eta,\delta, \xi>0$. Define $Z \coloneqq (\lceil\log_2(5/\delta)\rceil\lceil 2/\eta \rceil+1)\log|\Pi| + \log(5/\delta)$. Then, with probability at least $1-\delta$, the selective policy $(\pi_0,\tau_{\pi_0, \Phi})$ returned by \Cref{alg: mdprejectron} satisfies
\[
\alpha_M(\pi_0,\tau_{\pi_0, \Phi}) \le \frac{2Z}{m}
\qquad\text{and}\qquad
\Regret_N(\pi_0,\tau_{\pi_0, \Phi};c) \le \costmax\left(\eta + 2\xi + \sqrt{\frac{2(\eta+2\xi) Z}{n}} + \frac{3Z}{n}\right).
\]
\end{theorem}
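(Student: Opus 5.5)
I will prove the two guarantees separately, conditioning on three good events whose failure probabilities I budget at $\delta/5$ each: (i) $\textsc{SparseValidatorDist}$ succeeds; (ii) the amplification in Step 4 succeeds; (iii) uniform concentration over a finite family of validator sets holds for both the training and test samples. The key observation is that the returned $\Phi$ is a union of $k=\lceil\log_2(5/\delta)\rceil$ sets, each of size at most $\lceil 2/\eta\rceil$. So the pair $(\pi_0,\Phi)$ ranges over a family of size at most $|\Pi|^{k\lceil 2/\eta\rceil+1}$. This is why $Z$ has the form $(k\lceil 2/\eta\rceil+1)\log|\Pi|+\log(5/\delta)$: every concentration step will union bound over this family.

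\textbf{Completeness.} Since $\pi_0$ and all validators lie in $\Pi_{\mathsf{version}}$, every validator agrees with $\pi_0$ on every state of $\mathcal D_{\mathsf{train}}$. Hence the indicator $\mathbf 1[\tau_{\pi_0,\Phi}\le H]$ is zero on all $m$ training trajectories. Also, because $\pi_0$ agrees with $\pi^\star$ on training data, I will argue that on the event $\{\tau_{\pi_0,\Phi}\le H\}$ the quantity of interest can be written in terms of rollouts of $\pi^\star$ in $M$. More carefully, $\Pr_{M,\pi_0}(\tau\le H)\le \Pr_{M,\pi^\star}(\exists h:\exists \pi\in\Phi\cup\{\pi_0\},\ \pi_h(s_h)\ne\pi^\star_h(s_h))$: until the first disagreement with $\pi^\star$, the two rollouts couple. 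This is an event with zero empirical frequency over the i.i.d.\ training rollouts from $(M,\pi^\star)$. The realizable (zero-empirical-error) Chernoff bound, union bounded over all such sets, gives true probability at most $Z/m$, or $2Z/m$ after constants. Since $\Phi$ depends on $\mathcal D_{\mathsf{train}}$ only through the version space, the uniform bound over the whole finite family handles the data dependence.

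\textbf{Soundness.} First couple $(N,\pi_0)$ and $(N,\pi^\star)$ up to $\sigma\coloneqq\tau_{\pi_0,\{\pi^\star\}}$, the first step where they disagree. On $\{\tau_{\pi_0,\Phi}\le\sigma\}$ the stopped costs coincide. Otherwise the cost difference is at most $\costmax$. So
\[
\Regret_N\le \costmax\Pr_{N,\pi^\star}\bigl(\tau_{\pi_0,\Phi}>\tau_{\pi_0,\{\pi^\star\}}\bigr),
\]
where the event is measurable in the state trajectory, because stopping occurs before action divergence under the coupling. Next, by Step 3, $q^\star$ is $(\eta/2+\xi)$-valid, and $\pi^\star\in\Pi_{\mathsf{version}}$, so $\E_{\Phi_r\sim q^\star}[\widehat{\mathrm{err}}(\Phi_r)]\le\eta/2+\xi$. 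By Markov's inequality, each $\Phi_r$ has empirical error at most $\eta+2\xi$ with probability at least $1/2$. Since adding validators only makes $\tau$ earlier, the union $\Phi$ has empirical violation at most $\eta+2\xi$ except with probability $2^{-k}\le\delta/5$. Finally, I transfer empirical to population via a relative-deviation (Bernstein-type) bound for Bernoulli means: $p\le\hat p+\sqrt{2\hat pZ/n}+3Z/n$, uniform over the finite family of $(\Phi,\pi^\star)$ indicators. Here $\pi^\star$ is fixed and the family size gives $Z$. Plugging in yields the stated regret.

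\textbf{Main obstacle.} The delicate step is the coupling/measurability argument. The violation indicator on test trajectories must be evaluated on expert state trajectories from $(N,\pi^\star)$, while the regret concerns $\pi_0$'s rollouts. I need to check that before $\tau_{\pi_0,\{\pi^\star\}}$ the state laws agree, and that $\tau_{\pi_0,\Phi}$ depends only on states. A second subtlety is that $\Phi$ depends on the test data, so concentration must be uniform over all size-bounded sets rather than applied to the single sampled $\Phi$. This is exactly what the $|\Pi|^{k\lceil 2/\eta\rceil+1}$ union bound provides.
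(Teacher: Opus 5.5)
Your proposal is correct and follows the paper's proof. The shared steps are: version-space consistency gives zero empirical training violations; a realizable zero-error bound and an empirical-Bernstein bound are union-bounded over the $|\Pi|^{k\lceil 2/\eta\rceil+1}$ family of pairs $(\pi_0,\Phi)$; Markov's inequality plus $k$ independent draws controls the test-side empirical risk; and prefix coupling transfers from $\pi^\star$ to $\pi_0$ (including the same $\costmax\Pr_{N,\pi^\star}[\tau_{\pi_0,\Phi}>\tau_{\pi_0,\{\pi^\star\}}]$ regret reduction).

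The only difference is in completeness. You bound $\alpha_M$ by the single $\mathcal{G}_{h^\star}$-measurable event $\{\min(\tau_{\pi_0,\Phi},\tau_{\pi_0,\{\pi^\star\}})\le H\}$, which coincides with your event $E$ and has zero empirical frequency. The paper instead controls $\{\tau_{\pi_0,\Phi}\le H\}$ and $\{\tau_{\pi_0,\Phi}>\tau_{\pi_0,\{\pi^\star\}}\}$ separately. Your version saves one failure event and in fact yields $\alpha_M\le Z/m$ rather than $2Z/m$.
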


\begin{corollary}\label{cor: deterministic optimized rates}
Suppose $m = n \ge 6\lceil\log_2(5/\delta)\rceil\log|\Pi|$. Set $\eta \coloneqq \sqrt{6\lceil\log_2(5/\delta)\rceil\log|\Pi|/n}$. Then, with probability at least $1-\delta$, both the stopping rate and $\Regret_N/\costmax$ scale as $\tilde O(\sqrt{\log|\Pi|/n})$.
\end{corollary}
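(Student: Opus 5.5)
The corollary follows by substituting $m=n$ and the stated choice of $\eta$ into \Cref{theorem: deterministic rejection sample complexity}. The only work is to control the quantity $Z$ and then simplify each term.

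\textbf{Bounding $Z$.} Write $L \coloneqq \lceil\log_2(5/\delta)\rceil$, so that $\eta = \sqrt{6L\log|\Pi|/n}$. The hypothesis $n \ge 6L\log|\Pi|$ gives $\eta \le 1$, and hence $\lceil 2/\eta\rceil \le 2/\eta + 1 \le 3/\eta$. Therefore
\[
Z \le \left(\frac{3L}{\eta} + 1\right)\log|\Pi| + \log(5/\delta) \lesssim \frac{L\log|\Pi|}{\eta} + \log(1/\delta).
\]
Substituting $\eta$, the first term equals $\sqrt{L\log|\Pi|\, n/6}\cdot 3 = O\bigl(\sqrt{n L\log|\Pi|}\bigr)$. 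So
\[
\frac{Z}{n} = O\!\left(\sqrt{\frac{L\log|\Pi|}{n}} + \frac{\log(1/\delta)+\log|\Pi|}{n}\right),
\]
and the second summand is dominated by the first once $n \gtrsim L\log|\Pi|$. (If $\log|\Pi|$ is very small, the leftover $\log(1/\delta)/n$ term is still absorbed into the $\tilde O$, since $\tilde O$ hides the $\log(1/\delta)$ factors.)

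\textbf{Stopping rate.} Since $m=n$, \Cref{theorem: deterministic rejection sample complexity} gives $\alpha_M \le 2Z/n = \tilde O(\sqrt{\log|\Pi|/n})$.

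\textbf{Regret.} Take the computational slack $\xi \le \eta$; the subroutine allows $\xi$ to be driven arbitrarily small. Then $\eta + 2\xi \le 3\eta$, and:
\begin{itemize}
\item the term $3Z/n$ is already $\tilde O(\sqrt{\log|\Pi|/n})$ by the bound on $Z$;
\item the cross term satisfies $\sqrt{(\eta+2\xi)Z/n} \le \sqrt{3\eta\cdot Z/n}$, which is the geometric mean of two quantities each of order $\tilde O(\sqrt{\log|\Pi|/n})$, and so is of the same order.
\end{itemize}
Dividing the regret bound by $\costmax$ therefore gives $\tilde O(\sqrt{\log|\Pi|/n})$.

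Both guarantees hold simultaneously on the theorem's event of probability at least $1-\delta$.

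The main subtlety is the $1/\eta$ blow-up inside $Z$, which comes from the validator-set size $\lceil 2/\eta\rceil$. The choice $\eta \asymp \sqrt{L\log|\Pi|/n}$ is exactly what balances the bias term $\eta$ against the variance term $Z/n \asymp L\log|\Pi|/(\eta n)$. The hypothesis on $n$ is needed so that $\eta \le 1$ and the ceiling can be bounded by $3/\eta$.
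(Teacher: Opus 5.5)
Your proposal is correct and matches the intended argument. The paper gives no separate proof and treats the corollary as a direct substitution of $m=n$, the stated $\eta$, and a small slack $\xi$ (e.g.\ $\xi\le\eta$) into \Cref{theorem: deterministic rejection sample complexity}, with the hypothesis on $n$ ensuring $\eta\le 1$ so that $\lceil 2/\eta\rceil\le 3/\eta$ and hence $Z/n=\tilde O(\sqrt{\log|\Pi|/n})$. The one implicit assumption worth stating explicitly is $|\Pi|\ge 2$, which you need for $\eta>0$ and for absorbing the $\log(5/\delta)/n$ term.
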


\paragraph{The role of weight sharing in $\log |\Pi|$.} The sample complexity of \Cref{theorem: deterministic rejection sample complexity} scales with $\log|\Pi|$. When $\Pi$ shares parameters across time steps (e.g., a single neural network mapping states to actions), $\log|\Pi|$ is independent of $H$ and the bound is horizon-free in the strongest sense. Without weight sharing---e.g., $\Pi = \Pi_1 \times \cdots \times \Pi_H$---one has $\log|\Pi| = \sum_{h=1}^H \log|\Pi_h|$, and a linear horizon factor re-enters through the complexity term. In general, $H$ enters only through $\costmax$ and $\log|\Pi|$, with no additional dependence, consistent with \cite{foster2024behavior}.

\paragraph{Bounding the test-side abstention.} It is also natural to ask how often the selective policy abstains in $N$. When the environment shift is bounded in total variation, the test-side stopping rate $\alpha_N(\pi_0,\tau)$ inherits the training-side guarantees up to a TV correction term, plus an \emph{additional} sequential penalty that arises because the deployed policy's actions can steer the trajectory out of the safe prefix before the validator stops. See \Cref{subsec: test-side abstention} in the appendix for a precise statement.

\subsection{Constructing a validator distribution using no-regret dynamics}\label{section: implementation}

The nonconstructive minimax proof of \Cref{lemma:sparse-valid-validator-distribution} can be made algorithmic via the well-known connection between approximate minimax equilibria and no-regret play. We formulate the problem as a repeated game where a maximization player maintains a distribution over the version space using a no-regret algorithm, and the validation player plays an (approximate) best-response by sampling a sparse validator set. Averaging the validation player's choices over $T$ rounds yields a distribution that converges to the minimax optimum. This gives an explicit, oracle-efficient implementation of the $\textsc{SparseValidatorDist}$ subroutine.

Fix the version space $\Pi_{\mathsf{version}}$, a base policy $\pi_0 \in \Pi_{\mathsf{version}}$, and unlabeled test trajectories $\mathcal{D}_{\mathsf{test}}=\{T_j\}_{j=1}^n$. Write each test trajectory as $T_j = (s_{j1},\ldots,s_{jH})$.

\begin{algorithm}[H]
\caption{$\textsc{SparseValidatorDist}(\Pi_{\mathsf{version}},\pi_0,\mathcal{D}_{\mathsf{test}},\rho,\xi, \delta)$}
\label{alg:sparse-validator-dist}
\begin{algorithmic}[1]
\Require version space $\Pi_{\mathsf{version}}$, base policy $\pi_0$, unlabeled test trajectories $\mathcal{D}_{\mathsf{test}}=\{T_j\}_{j=1}^n$, tolerance $\rho \in (0,1)$, computational slack $\xi$, confidence $\delta$, no-regret algorithm $\mathcal A$ with regret bound $\mathrm{Reg}_T$.
\State Set $K \gets \lceil 1/\rho \rceil$.
\State Set $T$ large enough that $\frac{\mathrm{Reg}_T}{T}+\sqrt{\frac{\log(1/\delta)}{2T}} \le \xi $.
\State Initialize any distribution $p^1$ over $\Pi_{\mathsf{version}}$.
\For{$t=1,\dots,T$}
    \State Draw $\pi_1^t,\dots,\pi_K^t \overset{\mathrm{i.i.d.}}{\sim} p^t$ and set $\Phi^t \gets \{\pi_1^t,\dots,\pi_K^t\}$.
    \State For each $\pi \in \Pi_{\mathsf{version}}$, define the payoff $u^t(\pi) \coloneqq \frac{1}{n}\sum_{j=1}^n \mathbf{1}\left[\tau_{\pi_0,\Phi^t}(T_j) > \tau_{\pi_0,\{\pi\}}(T_j)\right]$.
    \State Update $p^{t+1}$ using $\mathcal A$ for maximizing the payoffs $u^t$.
\EndFor
\State Return the empirical distribution $\bar q \coloneqq \frac{1}{T}\sum_{t=1}^T \delta_{\Phi^t}$.
\end{algorithmic}
\end{algorithm}

The interpretation of the payoff is straightforward: $u^t(\pi)$ is the fraction of empirical test trajectories on which the validator set $\Phi^t$ stops strictly later than the single competing policy $\pi$. Thus the maximization player tries to expose competitors against which the current validator set is weak, while the validation player responds by sampling a fresh set from the current maximizer distribution.

\begin{proposition}[Generic no-regret construction of $q^{\star}$]
\label{prop:validator-noregret}
Let $\bar q$ be the output of \Cref{alg:sparse-validator-dist}. With probability at least $1-\delta$, we have
\[
    \sup_{\pi \in \Pi_{\mathsf{version}}}
    \E_{\Phi \sim \bar q}\left[
    \frac{1}{n}\sum_{j=1}^n \mathbf{1}\left[\tau_{\pi_0,\Phi}(T_j) > \tau_{\pi_0, \{\pi \} }(T_j)\right]
    \right]
    \le
    \rho + \xi.
\]
In particular, using Hedge yields $\mathrm{Reg}_T \le \sqrt{2T \log |\Pi|}$, meaning $T = O\left(\frac{\log |\Pi| + \log(1/\delta)}{\rho^{2}}\right)$ rounds suffice to return an $O(\rho)$-valid distribution supported on sets of size at most $\lceil 1/\rho\rceil$.
\end{proposition}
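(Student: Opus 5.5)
The argument follows the standard online-to-batch conversion for approximate minimax equilibria. It combines the no-regret guarantee of the maximization player with the exchangeability bound from the proof of \Cref{lemma:sparse-valid-validator-distribution}, and uses a martingale concentration step to handle the fact that each $\Phi^t$ is a random sample from $p^t$ rather than the full product distribution $(p^t)^K$.

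\emph{Step 1 (regret).} For any fixed $\pi \in \Pi_{\mathsf{version}}$, the regret guarantee of $\mathcal A$ on the payoff sequence $u^t \in [0,1]^{\Pi_{\mathsf{version}}}$ gives
\[
\frac1T\sum_{t=1}^T u^t(\pi) \le \frac1T\sum_{t=1}^T \E_{\pi'\sim p^t}[u^t(\pi')] + \frac{\mathrm{Reg}_T}{T}.
\]
The left-hand side is exactly $\E_{\Phi\sim\bar q}\big[\frac1n\sum_j \mathbf 1[\tau_{\pi_0,\Phi}(T_j) > \tau_{\pi_0,\{\pi\}}(T_j)]\big]$, because $\bar q$ is the uniform mixture of the point masses $\delta_{\Phi^t}$. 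Taking the supremum over $\pi$ on the left therefore reduces the claim to bounding the average realized payoff of the learner.

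\emph{Step 2 (exchangeability).} Let $\mathcal F_{t-1}$ be the history before round $t$, so that $p^t$ is $\mathcal F_{t-1}$-measurable. Define
\[
X_t \coloneqq \E_{\pi'\sim p^t}[u^t(\pi')],
\]
which is a function of $\Phi^t$. Conditionally on $\mathcal F_{t-1}$, the policies $\pi_1^t,\dots,\pi_K^t$ are drawn i.i.d.\ from $p^t$. Draw an independent $\pi'\sim p^t$; then $\pi',\pi_1^t,\dots,\pi_K^t$ are $K+1$ exchangeable draws. Since $\tau_{\pi_0,\Phi} = \min_{\pi\in\Phi}\tau_{\pi_0,\{\pi\}}$, the event $\tau_{\pi_0,\Phi^t}(T_j) > \tau_{\pi_0,\{\pi'\}}(T_j)$ means $\pi'$ attains the strict minimum of the $K+1$ stopping times on $T_j$. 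At most one of the $K+1$ draws can be a strict minimum, so by symmetry this event has probability at most $1/(K+1)$. Averaging over $j$ gives
\[
\E[X_t \mid \mathcal F_{t-1}] \le \frac{1}{K+1} \le \rho.
\]

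\emph{Step 3 (concentration).} Since $X_t\in[0,1]$, the sequence $X_t - \E[X_t\mid\mathcal F_{t-1}]$ is a bounded martingale difference sequence. Azuma--Hoeffding gives, with probability at least $1-\delta$,
\[
\frac1T\sum_t X_t \le \frac1T\sum_t \E[X_t\mid\mathcal F_{t-1}] + \sqrt{\frac{\log(1/\delta)}{2T}}.
\]
Combining this with Steps 1 and 2 and the choice of $T$ yields the bound $\rho+\xi$ simultaneously for all $\pi$. No union bound over $\pi$ is needed, since the regret inequality is deterministic and holds uniformly in $\pi$.

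\emph{Step 4 (Hedge).} Substituting $\mathrm{Reg}_T \le \sqrt{2T\log|\Pi|}$ into the condition on $T$ shows that $T = O\big((\log|\Pi| + \log(1/\delta))/\xi^2\big)$ rounds suffice. Setting $\xi = \rho$ gives the stated round count and a $2\rho = O(\rho)$-valid distribution. Every support set of $\bar q$ is some $\Phi^t$, which has size at most $K = \lceil 1/\rho\rceil$.

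The main obstacle is Step 2: making the exchangeability argument rigorous under adaptivity. The point is that $X_t$ already averages over a fresh $\pi'\sim p^t$, and conditioning on $\mathcal F_{t-1}$ freezes $p^t$, so the $K+1$ draws are genuinely i.i.d. The remaining care is to count ties correctly: only a strict minimizer violates the condition, and at most one draw can be a strict minimizer.
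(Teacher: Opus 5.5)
Your proposal is correct and follows essentially the same route as the paper's proof. Both use the regret inequality for the maximization player, the conditional exchangeability bound $\E[X_t \mid \mathcal{F}_{t-1}] \le 1/(K+1) < \rho$, and Azuma--Hoeffding on the realized averages; you additionally spell out the tie-handling and the Hedge round count, which the paper leaves implicit.
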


The no-regret construction above is explicit, but inefficient (when using Hedge) when the version space is large. A standard choice in online learning with large action spaces is to use Follow-the-Perturbed-Leader (FTPL) instead of Hedge. However, a naive FTPL implementation would perturb one coordinate per policy in $\Pi_{\mathsf{version}}$, which is also intractable. Instead, we show that by perturbing the \emph{dataset}, and then solving a multiple-instance learning problem \citep{dietterich1997solving, maron1997framework} over this perturbed dataset, we can implement an \emph{oracle-efficient} no-regret algorithm for the maximizer. This places our implementation in the setting of generalized FTPL \citep{dudik2020oracle}. The full formulation and proofs of this reduction are deferred to \Cref{subsec: proof of oracle efficiency}.

\section{Selective Execution for Stochastic Policies}\label{sec: stochastic}
Throughout this section, $\Pi$ denotes a finite class of (possibly stochastic, possibly nonstationary) policies, and $\pi^{\star}\in \Pi$ is the expert. Recall that the deterministic construction rests on a binary primitive: two policies either agree or disagree at each state. For stochastic policies this is too coarse, since two distributions may differ softly at every step without any single discrepancy being alarming. The right question is whether soft discrepancies \emph{accumulate} enough to cause meaningful divergence from the expert. We therefore replace three ingredients from the deterministic construction. 

First, we take $\pi_0$ to be the MLE and for $\gamma > 0$ define the version space as a log-loss ball,\begin{align*}
    &\Pi_{\mathsf{version}}^\gamma \coloneqq \bigl\{\pi \in \Pi : \mathsf{LogLoss}(\pi,\calD_{\mathsf{train}}) \le \mathsf{LogLoss}(\pi_0,\calD_{\mathsf{train}}) + \gamma\bigr\},\\
    &\text{where}\quad \mathsf{LogLoss}(\pi,\calD_{\mathsf{train}}) \coloneqq -\frac{1}{m}\sum_{i=1}^m \sum_{h=1}^H \log \pi_h(a_{ih} \mid s_{ih}).
\end{align*}

Second, we replace the first-disagreement stopping time with one based on cumulative squared Hellinger distance\footnote{For deterministic $\Pi$, $d_H^2$ reduces to the disagreement indicator, so $\tau_{\pi_0, \Phi}^\theta$ with $\theta < 1$ recovers \eqref{eq: validator stopping time}. }: for $p,q \in \Delta(\calA)$, let $d_H^2(p,q) := 1 - \sum_a \sqrt{p(a)q(a)}$, and for $\theta > 0$ define\begin{align}
    \tau_{\pi_0, \Phi}^\theta(T) \coloneqq \min\left(\left\{h \in [H] : \exists\pi \in \Phi,\; \sum_{k=1}^h d_H^2\bigl(\pi_k(\cdot\mid s_k),\pi_{0,k}(\cdot\mid s_k)\bigr) > \theta \right\}\cup\{H+1\}\right),\label{eq: stochastic stopping time}
\end{align}

Third, we define an analog to the \textsc{SparseValidatorDist} subroutine: \textsc{StochasticSparseValidatorDist}, which when called with parameters $\rho$, $\theta$, and $\delta$ returns a distribution $q^{\star}$ supported over subsets $\Phi \subseteq \Pi_\mathsf{version}^\gamma$ of size at most $\lceil 1/\rho\rceil$, such that, with probability $1-\delta$, \begin{align*}
    \E_{\Phi \sim q^{\star}}\left[\frac{1}{n}\sum_{j=1}^n \mathbf{1}\left[\tau_{\pi_0,\Phi}^\theta(T_j) > \tau_{\pi_0,\{\pi\}}^\theta(T_j)\right]\right] \le \rho \quad \text{for every $\pi \in \Pi_{\mathsf{version}}^\gamma$.}
\end{align*}
The existence of such a distribution follows the exact same combinatorial argument as \Cref{lemma:sparse-valid-validator-distribution}, and is stated as \Cref{lemma: generalized sparse selective policies} in the appendix.

\subsection{Algorithm and guarantees}

The stochastic $\SeqRejectron$ follows the same template as the deterministic version, save for the changes mentioned above. We present it as \Cref{alg: stochastic mdprejectron} below.

\begin{algorithm}[H]
\caption{$\SeqRejectron$ for stochastic policies}
\label{alg: stochastic mdprejectron}
\begin{algorithmic}[1]
\Require policy class $\Pi$, labeled rollouts $\calD_{\mathsf{train}}=\{(s_{ih},a_{ih})\}_{i\in[m],h\in[H]}$, unlabeled rollouts $\calD_{\mathsf{test}}=\{T_j\}_{j=1}^n$, tolerance $\eta$, confidence $\delta$, threshold $\theta$, version-space radius $\gamma$
\State Compute the MLE base policy $\pi_0 \gets \arg\min_{\pi \in \Pi}\; \mathsf{LogLoss}(\pi,\calD_{\mathsf{train}})$.
\State Form the version space $\Pi_{\mathsf{version}}^\gamma$.
\State Let $q^{\star}\gets \textsc{StochasticSparseValidatorDist}(\Pi_{\mathsf{version}}^\gamma,\pi_0,\calD_{\mathsf{test}},\eta/2, \theta, \delta/4)$.
\State Sample $\Phi_1,\dots,\Phi_k \stackrel{\mathrm{i.i.d.}}{\sim} q^\star$ with $k \gets \lceil \log_2(4/\delta)\rceil$ and set $\Phi \gets \bigcup_{r=1}^k \Phi_r$.
\State \Return the selective policy $(\pi_0, \tau^\theta_{\pi_0, \Phi})$.
\end{algorithmic}
\end{algorithm}

The threshold $\theta$ explicitly trades off the stopping rate against the allowable Hellinger divergence. Crucially, tracking the \emph{cumulative} divergence avoids reintroducing a horizon factor $H$: bounding the stopping rate on $M$ reduces to controlling the expected Hellinger sum over the version space, rather than applying a union bound over individual time steps.

To formalize our guarantees, let $\mathcal P^{\pi_{|\tau}}_N$ denote the distribution of trajectories induced by $\pi$, stopped at time $\tau$ in environment $N$ (i.e., $\mathcal P^{\pi_{|\tau}}_N$ is supported over trajectory \emph{prefixes}).\footnote{More formally, $\mathcal P^{\pi_{|\tau}}_N$ is a probability measure over the space of finite histories $\mathcal{H} = \bigcup_{h=1}^{H+1} (\mathcal{S} \times \mathcal{A})^{h-1} \times \mathcal{S}$. A random draw from this measure takes the form $(s_1, a_1, \dots, s_{\tau-1}, a_{\tau-1}, s_\tau)$ on the event $\{\tau \le H\}$, and is the full trajectory $(s_1, a_1, \dots, s_H, a_H)$ on the event $\{\tau = H+1\}$.} Our main result is a parameter-dependent sample complexity guarantee for \Cref{alg: stochastic mdprejectron}.

\begin{theorem}[$\SeqRejectron$ guarantee for stochastic classes]\label{theorem: stochastic rejection sample complexity}
Let $\eta, \theta,\gamma,\delta > 0$ (see \Cref{alg: stochastic mdprejectron} for a reminder of these parameters). Define $K_{\mathsf{ens}} \coloneqq \lceil\log_2(4/\delta)\rceil\lceil 2/\eta \rceil$ and $Z \coloneqq (K_{\mathsf{ens}}+1)\log|\Pi| + \log(4/\delta)$. Additionally, suppose $n \ge 8Z/\eta$ and $m \ge (\log|\Pi| + \log(8/\delta))/\gamma$. Then, with probability at least $1-\delta$, the selective policy $(\pi_0,\tau^\theta_{\pi_0, \Phi })$ returned by \Cref{alg: stochastic mdprejectron} satisfies $\alpha_M(\pi_0,\tau^\theta_{\pi_0, \Phi }) \le K_{\mathsf{ens}}\left(\frac{12}{\theta}+48\right)\gamma$ and $D_H^2\bigl(\mathcal P^{\pi_{0|\tau}}_N, \mathcal P^{\pi^\star_{|\tau}}_N\bigr) \le 3(\theta + \eta)$.
\end{theorem}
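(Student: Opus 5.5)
The proof splits into two parts: completeness in $M$, and soundness in $N$ measured in Hellinger distance. First we fix a high-probability event, at total failure probability $\delta$ allotted over four pieces, on which four things hold.

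\begin{enumerate}[(i)]
\item The standard MLE guarantee holds for the trajectory-level Hellinger distance, together with a uniform Bernstein-type bound for log-loss excess over the finite class. This gives $\pi^\star \in \Pi^\gamma_{\mathsf{version}}$ and, for every $\pi\in\Pi^\gamma_{\mathsf{version}}$,
\[
\E^{M,\pi^\star}\Bigl[\textstyle\sum_h d_H^2(\pi_h(\cdot\mid s_h),\pi^\star_h(\cdot\mid s_h))\Bigr]\lesssim \gamma.
\]
This is where $m \ge (\log|\Pi|+\log(8/\delta))/\gamma$ enters.
\item \textsc{StochasticSparseValidatorDist} succeeds, using \Cref{lemma: generalized sparse selective policies}.
\item The $k$ i.i.d.\ draws $\Phi_r$ behave well, via a Markov-type boosting argument.
\item A uniform concentration bound holds over all ensembles of size at most $K_{\mathsf{ens}}$ together with one competitor. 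This is what produces $Z$.
\end{enumerate}

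\textbf{Completeness.} Stopping under $(M,\pi_0)$ requires some $\pi\in\Phi$ with $\sum_k d_H^2(\pi_k,\pi_{0,k}) > \theta$ along the trajectory. We first pass from $\pi_0$'s law to $\pi^\star$'s law. By the MLE bound, $D_H^2(\mathcal P^{\pi_0}_M,\mathcal P^{\pi^\star}_M)\lesssim\gamma$, and an event's probability under one law is at most twice its probability under the other plus $O(D_H^2)$. Under $\pi^\star$ we use the weak triangle inequality $d_H^2(\pi,\pi_0)\le 2d_H^2(\pi,\pi^\star)+2d_H^2(\pi^\star,\pi_0)$ and Markov's inequality on the cumulative sums, which gives $\Pr(\text{stop due to }\pi)\lesssim \gamma/\theta$. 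A union bound over the $|\Phi|\le K_{\mathsf{ens}}$ validators then yields $K_{\mathsf{ens}}(12/\theta+48)\gamma$, where the additive $48$ absorbs the change-of-measure terms. This step needs the in-expectation cumulative Hellinger bound, not only the log-loss bound, and we will invoke the standard trajectory-level MLE lemma from \citet{foster2024behavior}.

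\textbf{Soundness.} Since $\pi^\star\in\Pi^\gamma_{\mathsf{version}}$, validity gives
\[
\E_{\Phi\sim q^\star}\Bigl[\widehat\Pr_{\mathcal D_{\mathsf{test}}}\bigl(\tau^\theta_{\pi_0,\Phi}>\tau^\theta_{\pi_0,\{\pi^\star\}}\bigr)\Bigr]\le\eta/2.
\]
By Markov's inequality each $\Phi_r$ has empirical violation at most $\eta$ with probability at least $1/2$. The union can only stop earlier, so with probability $1-2^{-k}$ the empirical violation of $\Phi$ is at most $\eta$. Next we transfer from empirical to population under $(N,\pi^\star)$ by a relative-deviation (Bernstein) bound, union-bounded over all ensembles of size at most $K_{\mathsf{ens}}$ drawn from $\Pi$. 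This is where $e^{Z}$ counts the hypotheses, and $n\ge 8Z/\eta$ makes the population violation at most $2\eta$. The test data are state-only, but the stopping times depend only on states, so this is legitimate.

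\textbf{Hellinger bound.} The Hellinger distance between the stopped laws is controlled through the chain rule for stopped processes: the states are generated by the same kernel $Q$, so only the actions contribute. We expect
\[
D_H^2\bigl(\mathcal P^{\pi_{0|\tau}}_N,\mathcal P^{\pi^\star_{|\tau}}_N\bigr)\le C\,\E^{N,\pi^\star}\Bigl[\textstyle\sum_{h<\tau} d_H^2(\pi_{0,h},\pi^\star_h)\Bigr]+(\text{bad-event prob}),
\]
using a subadditivity-type inequality for Hellinger over adaptive sequences. On the good event $\tau\le\tau^\theta_{\pi_0,\{\pi^\star\}}$, the cumulative sum before $\tau$ is at most $\theta$. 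On the bad event the contribution is at most its probability $2\eta$, after accounting for the factor converting into the final $3(\theta+\eta)$.

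The main obstacle is this last step. Hellinger subadditivity with a stopping time, and with constants matching $3$, typically goes through a multiplicative bound of the form $1-D_H^2\ge \E[\prod_h(1-d_H^2)]$ via the Bhattacharyya coefficient martingale, combined with $1-\prod(1-x_h)\le\sum x_h$. I would try exactly that first: write the Bhattacharyya coefficient of the stopped laws as an expectation of a product over $h<\tau$ under a geometric-mean measure, and then handle the change from that measure to $\pi^\star$'s law. The change of measure is where the constant $3$ should arise.
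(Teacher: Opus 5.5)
Your soundness pipeline matches the paper: validity of $q^\star$ at $\pi^\star\in\Pi^\gamma_{\mathsf{version}}$, Markov boosting over the $k$ draws, and empirical Bernstein over $|\Pi|^{K_{\mathsf{ens}}+1}$ hypotheses, giving $\Pr_{N,\pi^\star}[\tau^\theta_{\pi_0,\{\pi^\star\}}<\tau^\theta_{\pi_0,\Phi}]\le 2\eta$. Your completeness argument, however, rests on item (i), and (i) is false.

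The MLE guarantee (\Cref{lemma: approximate minimizers are hellinger close}, or the trajectory-level lemma of \citet{foster2024behavior}) controls only the trajectory-level distance $D_H^2(\calP^\pi_M,\calP^{\pi^\star}_M)\le 1.5\gamma$. It does not control $\E^{M,\pi^\star}[\sum_h d_H^2(\pi_h,\pi^\star_h)]$ without a horizon factor. Here is a counterexample:
\begin{itemize}
\item Let the first transition enter, with probability $p=c\gamma$ for a small constant $c$, a branch on which $\pi$ and $\pi^\star$ have disjoint action supports at every remaining step.
\item Take $m$ at its minimal allowed value. Then with probability close to $1$ no training trajectory visits the branch, so $\pi$ has exactly the log-loss of $\pi^\star$ and lies in $\Pi^\gamma_{\mathsf{version}}$.
\item The trajectory-level distance equals $p$, but the expected cumulative sum is $p(H-1)$.
\end{itemize}
Markov's inequality applied to that sum therefore gives only an $H$-dependent stopping-rate bound, not $K_{\mathsf{ens}}(12/\theta+48)\gamma$. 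The stopping event itself has probability only $p$ here, which is why the theorem still holds.

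The missing idea is to bound the probability of $E=\{\sum_h d_H^2(\pi_h,\pi_{0,h})>\theta\}$ through a truncated quantity rather than through the expectation of the sum. The paper does this as follows:
\begin{itemize}
\item By \Cref{lemma: hellinger tensorization under geometric mixture policy}, with $\bar\pi\propto\sqrt{\pi\pi_0}$, we have $1-D_H^2(\calP^\pi_M,\calP^{\pi_0}_M)=\E_{\bar\pi}[\prod_h(1-d_H^2)]\le\E_{\bar\pi}[e^{-\sum_h d_H^2}]$. Hence $\Pr_{\bar\pi}[E]\le\frac{1+\theta}{\theta}D_H^2(\calP^\pi_M,\calP^{\pi_0}_M)$.
\item It moves to $\pi_0$ via $\Pr_{\pi_0}[E]\le 2\Pr_{\bar\pi}[E]+4D_H^2(\calP^{\bar\pi}_M,\calP^{\pi_0}_M)$ and \Cref{lemma: geometric mixture hellinger bound}.
\item It applies the triangle inequality at the trajectory level, $D_H^2(\calP^\pi_M,\calP^{\pi_0}_M)\le 6\gamma$. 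This gives exactly $(12/\theta+48)\gamma$ per validator.
\end{itemize}
So the geometric-mean device you hold in reserve for the last step is already needed here.

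On that last step, your displayed bound must likewise be truncated at $\tau'=\tau^\theta_{\pi_0,\{\pi^\star\}}$, because on the bad event the untruncated sum can be of order $H$. Routing the bad event through $\Pr_{\bar\pi}$ then yields roughly $7\theta+4\eta$, not $3(\theta+\eta)$.

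The paper's route to $3(\theta+\eta)$ is different:
\begin{itemize}
\item It shows the laws stopped at $\min(\tau,\tau')$ are within $\theta$, since the cumulative sum is pathwise at most $\theta$ there.
\item It splits the affinity at $\tau$ over $\{\tau\le\tau'\}$ and its complement using AM--GM (\Cref{lemma: Hellinger bound in N}). This needs the bad-event probability under both $\pi^\star$ and $\pi_0$.
\item The $\pi^\star$ side is $2\eta$. The $\pi_0$ side is bounded by $4\eta+4\theta$ via a change of measure at $\tau^\theta_{\pi_0,\Phi\cup\{\pi^\star\}}$.
\item Together these give $\theta+(6\eta+4\theta)/2=3(\theta+\eta)$.
\end{itemize}
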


Note that the safety bound is stated as a bound on the Hellinger distance of the \emph{stopped trajectories} induced by $(\pi_0, \tau)$ and $(\pi^{\star}, \tau)$. To convert this to a regret guarantee, a straightforward approach is to convert this to a total variation guarantee via the standard inequality $\TV \leq \sqrt 2 D_H$. By optimizing hyperparameters, this yields the following end-to-end guarantee.

\begin{corollary}\label{corollary: new stopped regret guarantee}
Set $\gamma \coloneqq (\log|\Pi| + \log(8/\delta))/m$. There exists a choice of parameters $\eta = \theta$ (depending only on $m$, $n$, $\sigma_{\pi^\star}^2$, $\log |\Pi|$, and $\delta$) such that with probability at least $1 -\delta$, both $\alpha_M(\pi_0,\tau^\theta_{\pi_0, \Phi })$ and $\Regret_N(\pi_0,\tau^\theta_{\pi_0, \Phi };c) / \costmax$ are $O\left( \max\left\{ \left(\frac{(\log(1/\delta) + \log|\Pi|)^2}{m}\right)^{1/5}, \left(\frac{(\log(1/\delta) + \log|\Pi|)^2}{n}\right)^{1/4} \right\} \right)$.
\end{corollary}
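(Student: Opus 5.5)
The plan is to derive the corollary directly from \Cref{theorem: stochastic rejection sample complexity}. We convert its Hellinger safety bound into a regret bound, and then balance $\eta=\theta$ against $m$ and $n$.

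\textbf{Step 1: from Hellinger to regret.} The stopped cost $\sum_{h=1}^{\tau-1}c_h(s_h,a_h)$ is a function of the stopped history, since $\tau$ is adapted to $\mathbb G$ and the summand only uses $(s_h,a_h)$ for $h<\tau$. It takes values in $[0,\costmax]$. Hence
\[
\Regret_N(\pi_0,\tau;c)\le \costmax\,\TV\bigl(\mathcal P^{\pi_{0|\tau}}_N,\mathcal P^{\pi^\star_{|\tau}}_N\bigr)\le \sqrt2\,\costmax\, D_H\bigl(\mathcal P^{\pi_{0|\tau}}_N,\mathcal P^{\pi^\star_{|\tau}}_N\bigr)\le \costmax\sqrt{6(\theta+\eta)}.
\]
With $\eta=\theta$, this gives $\Regret_N/\costmax=O(\sqrt{\theta})$.

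\textbf{Step 2: the stopping rate.} Plug $\gamma=(\log|\Pi|+\log(8/\delta))/m$ into the theorem; this choice meets the requirement on $m$ with equality. Write $L:=\log|\Pi|+\log(1/\delta)$. Then $K_{\mathsf{ens}}=O(\log(1/\delta)/\eta)$, and
\[
\alpha_M\lesssim \frac{\log(1/\delta)}{\eta}\Bigl(\frac1\theta+1\Bigr)\frac{L}{m}\lesssim \frac{L^2}{\theta^2 m}
\]
for $\theta=\eta\le1$. The constraint $n\ge 8Z/\eta$ has $Z=O(L\log(1/\delta)/\eta)$, so it reads $n\gtrsim L^2/\eta^2$, i.e.\ $\eta\gtrsim (L^2/n)^{1/2}$.

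\textbf{Step 3: optimize.} We need $L^2/(\theta^2 m)$ and $\sqrt\theta$ to be comparable, subject to $\theta\gtrsim\sqrt{L^2/n}$. Balancing the first pair gives $\theta^{5/2}=L^2/m$, i.e.\ $\theta=(L^2/m)^{2/5}$, and both quantities are then $(L^2/m)^{1/5}$. The $n$-constraint forces $\theta\ge C(L^2/n)^{1/2}$, so we take
\[
\theta=\eta=\max\bigl\{(L^2/m)^{2/5},\,C(L^2/n)^{1/2}\bigr\}.
\]
Then $\sqrt\theta=O(\max\{(L^2/m)^{1/5},(L^2/n)^{1/4}\})$. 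Raising $\theta$ only decreases $\alpha_M$, so $\alpha_M$ is also bounded by the same maximum. If this maximum exceeds a constant, the claim is trivial, since $\alpha_M\le1$ and regret is at most $\costmax$.

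The main care point is the bookkeeping of $\log(1/\delta)$ factors inside $K_{\mathsf{ens}}$ and $Z$. This is what produces the squared $(\log(1/\delta)+\log|\Pi|)^2$. One also has to check that the $n\ge 8Z/\eta$ condition is exactly what generates the $n^{-1/4}$ term. The dependence on $\sigma^2_{\pi^\star}$ mentioned in the statement does not appear in this crude TV route; I would treat it as entering only through constants or a refined variance-based conversion, which is not needed for the stated rate.
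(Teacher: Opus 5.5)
Your proof is correct and follows the paper's route: you convert the stopped-trajectory Hellinger bound of the stochastic theorem into a stopped-regret bound via $\TV \le \sqrt{2}\,D_H$, and then choose $\eta=\theta=\max\{(L^2/m)^{2/5},\,C(L^2/n)^{1/2}\}$. This choice balances the $L^2/(\theta^2 m)$ stopping-rate term against $\sqrt{\theta}$ while enforcing the self-consistent constraint $n \ge 8Z/\eta$. The paper only asserts that optimizing the hyperparameters gives the rate, and your bookkeeping of the $\log(1/\delta)$ factors coming from $K_{\mathsf{ens}}$ and $Z$ correctly supplies the missing details.
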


A natural question is whether this bound can be sharpened: \Cref{theorem: stochastic rejection sample complexity} is stated as a guarantee on the squared Hellinger distance of the stopped trajectories, and it seems wasteful to naively convert it to a total variation bound. Indeed, \cite{foster2024behavior} use a variance-dependent Hellinger change-of-measure lemma to avoid this apparent looseness in the \emph{full horizon} setting. Their bounds are stated in terms of an expert variance parameter $\sigma^2_{\pi^{\star}}$; in our setting, however, this variance parameter would depend on the \emph{learned} stopping time. Thus, it is not clear how to make full use of the squared Hellinger guarantee of \Cref{theorem: stochastic rejection sample complexity} in our setting. It is an interesting question for future work whether or not a different algorithm or analysis can sharpen \Cref{corollary: new stopped regret guarantee}.

In the following section, we present a natural variation of our objective (\Cref{prob: selective imitation}) which is more amenable to the sharper Hellinger change-of-measure, and prove sharper rates accordingly.

\paragraph{Computational implementation and offline oracles.} Unlike the deterministic setting, the cumulative Hellinger thresholding required for stochastic policies does not yet admit a known oracle-efficient reduction to standard ERM. We leave the development of computational oracles for this cumulative objective---or rigorous guarantees for practical surrogates---to future work.

\subsection{Expert handoff interpretation and switched regret guarantees}\label{sec: reward extension appendix}

So far, we have studied policies that can stop execution mid-trajectory. However, in collaborative autonomy---such as a self-driving car handing control back to a human driver upon encountering unfamiliar condition---the episode does not end at abstention. We would like to guarantee that the autonomous system does not drive the system into an unrecoverable state before handing off. Thus, one may consider the \emph{switched policy} that follows the learner until $\tau$ and the expert thereafter. 

\begin{definition}[Switched policy and switched regret]
For a base policy $\pi \in \Pi$, an expert policy $\pi^{\star}\in \Pi$, and a stopping time $\tau$, the \emph{switched policy} $\pi^{\mathrm{sw}}[\pi, \tau, \pi^\star]$ executes $\pi$ on steps $1, \dots, \tau - 1$ and hands over control to $\pi^\star$ from step $\tau$ onward. For a given cost sequence $c = \{c_h\}_{h=1}^H$, we define the \emph{switched regret} (in the test environment $N$) as the difference in expected full-horizon cost:
\begin{align}
    \Regret_N^{\mathrm{sw}}(\pi,\tau;c) \coloneqq J_N(\pi^{\mathrm{sw}}[\pi,\tau,\pi^\star];c) - J_N(\pi^\star;c).
\end{align}
\end{definition}

\begin{problem}[Selective imitation learning: switched regret formulation]\label{prob: switched imitation}
Given labeled training rollouts $\mathcal{D}_{\mathrm{train}}$ and unlabeled test rollouts $\mathcal{D}_{\mathrm{test}}$, output a selective policy $(\pi,\tau)$ such that the induced switched policy $\pi^{\mathrm{sw}}[\pi,\tau,\pi^\star]$ satisfies:
\begin{itemize}
    \item \textbf{Completeness:} $\alpha_M(\pi,\tau) \le \epsilon$. (The policy rarely abstains in $M$).
    \item \textbf{Soundness:} $\Regret_N^{\mathrm{sw}}(\pi,\tau;c) \le  \epsilon$. (The policy incurs low switched regret in $N$).
\end{itemize}
\end{problem}

The switched regret is directly controlled by our \Cref{thm: mixed regret bound} which generalizes the Hellinger regret decomposition of \cite{foster2024behavior}. Below, let $\sigma_{\pi^\star}^2 \coloneqq \sum_{h=1}^{H}{\E_{\pi^{\star}}\left[(V_h^{\pi^{\star}}(x_h) - Q_{h}^{\pi^{\star}}(x_h, a_h))^2\right]} \leq \costmax^2$ denote the expert-variance parameter of \citet{foster2024behavior} where $(x_h,a_h)$ is the state-action pair at time $h$ under $\pi^\star$ and $V_h^{\pi^\star},Q_h^{\pi^\star}$ are the corresponding value and action-value functions.

\begin{lemma}[Hellinger-based regret decomposition for stopped trajectories; generalization of Theorem 3.1 of \cite{foster2024behavior}]\label{thm: mixed regret bound}
Assume $\costmax \ge 1$. Let $\pi^\star$ be the expert policy. For any learned policy $\hat{\pi}$ and stopping time $\tau$, let $\pi_{\mathsf{sw}}$ denote the policy that executes $\hat{\pi}$ for $h < \tau$ and executes $\pi^\star$ for all remaining steps $h \geq \tau$. Then, for any $\epsilon \in (0, e^{-1})$, the expected regret of $\pi_\mathsf{sw}$ is bounded by: \begin{align}\label{eq: stopped trajectory regret bound}
    J(\pi_{\mathsf{sw}}) - J(\pi^\star) \le \sqrt{6\sigma_{\pi^\star}^2 \cdot D_H^2\bigl(\mathcal  P^{\hat{\pi}_{|\tau}}_N,\, \mathcal P^{\pi^\star_{|\tau}}_N\bigr)} + O\left(\costmax \log(\costmax\epsilon^{-1})\right) \cdot D_H^2\bigl(\mathcal P^{\hat{\pi}_{|\tau}}_N,\, \mathcal P^{\pi^\star_{|\tau}}_N\bigr) + \epsilon.
\end{align}
\end{lemma}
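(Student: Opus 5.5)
We will reduce the statement to Foster et al.'s Theorem 3.1 machinery by writing the switched-policy regret as an expectation over the \emph{stopped} trajectory law only. The key observation is that $\pi_{\mathsf{sw}}$ and $\pi^\star$ coincide from step $\tau$ onward. Conditioning on the stopped prefix $(s_1,a_1,\dots,s_\tau)$, the expected remaining cost under both policies is therefore the same function of the prefix, namely $V^{\pi^\star}_\tau(s_\tau)$ (with $V_{H+1}\equiv 0$). Hence
\begin{align*}
J(\pi_{\mathsf{sw}}) - J(\pi^\star) = \E_{\mathcal P^{\hat\pi_{|\tau}}_N}[F] - \E_{\mathcal P^{\pi^\star_{|\tau}}_N}[F],
\qquad F \coloneqq \sum_{h=1}^{\tau-1} c_h(s_h,a_h) + V^{\pi^\star}_{\tau}(s_\tau).
\end{align*}
This requires only the tower property: $\tau$ is a $\mathbb G$-stopping time, so $\{\tau=h\}$ is determined by the prefix ending at $s_h$. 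Note that $0\le F\le \costmax$.

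We then apply a variance-dependent Hellinger change of measure to $F$. Concretely, we use the bound $|\E_P F-\E_Q F|\le \sqrt{c\,(\mathrm{Var}_P F+\mathrm{Var}_Q F)\,D_H^2(P,Q)}+O(\costmax D_H^2)$, or better its one-sided multiplicative form as in Foster et al., where $\mathrm{Var}_P(F)\le 2\mathrm{Var}_Q(F)+O(\costmax^2 D_H^2)$. To make this yield $\sigma^2_{\pi^\star}$ rather than the trajectory variance, we follow Foster et al.\ and replace $F$ by the centered martingale quantity. Define
\begin{align*}
G \coloneqq F - V^{\pi^\star}_1(s_1) = \sum_{h<\tau}\bigl(c_h + V^{\pi^\star}_{h+1}(s_{h+1}) - V^{\pi^\star}_h(s_h)\bigr),
\end{align*}
a telescoping sum. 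Its expectation difference equals the regret, since the law of $s_1$ is shared. Under $\pi^\star$, $G$ is a stopped sum of martingale differences $c_h+V_{h+1}-Q_h + Q_h - V_h$. By optional stopping and orthogonality of increments, $\E_{\pi^\star}[G^2]$ is bounded by the full-horizon sum that defines $\sigma^2_{\pi^\star}$ (the transition-noise part is handled as in Foster et al.'s Lemma; the stopped sum has second moment at most that of the unstopped one, since the stopped martingale is an $L^2$-contraction). This gives $\sqrt{6\sigma^2 D_H^2}$ for the leading term.

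The main obstacle is that $G$ is not bounded by $O(\costmax)$ uniformly under $\hat\pi$. Its range can be controlled only in probability, because the cost-to-go and value increments may have heavy tails when one changes measure. We handle this as Foster et al.\ do, via truncation. Under $\pi^\star$, we show that $|G|$ exceeds $O(\costmax\log(\costmax/\epsilon))$ with probability at most $\epsilon/\costmax$, using a Freedman/sub-exponential concentration on the stopped martingale with increments bounded by $\costmax$. We then transfer this to the tail event under $\hat\pi$ using $P(A)\le 2Q(A)+O(D_H^2)$. Applying the change of measure to the truncated $G$ produces the $O(\costmax\log(\costmax\epsilon^{-1}))D_H^2$ term, and the truncation error contributes $+\epsilon$. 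All of these steps take place on the prefix space $\mathcal H$, which is exactly where $D_H^2(\mathcal P^{\hat\pi_{|\tau}}_N,\mathcal P^{\pi^\star_{|\tau}}_N)$ lives, so no full-horizon Hellinger distance is ever needed.
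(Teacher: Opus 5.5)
Your first step is sound: since $\pi_{\mathsf{sw}}$ and $\pi^\star$ agree from step $\tau$ on, the switched regret equals $\E_{\mathcal P^{\hat\pi_{|\tau}}_N}[F]-\E_{\mathcal P^{\pi^\star_{|\tau}}_N}[F]$. The gap is in the variance step. Under $\pi^\star$, each increment of $G$ splits into two orthogonal pieces: a transition-noise part $c_h+V^{\pi^\star}_{h+1}(s_{h+1})-Q^{\pi^\star}_h(s_h,a_h)$ and an action-noise part $Q^{\pi^\star}_h(s_h,a_h)-V^{\pi^\star}_h(s_h)$. So $\E_{\pi^\star}[G^2]$ is the stopped sum of the squares of both pieces, whereas $\sigma^2_{\pi^\star}$ contains only the second. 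Take a deterministic expert in a stochastic $N$ (and $\tau=H+1$). Then $\sigma^2_{\pi^\star}=0$, while $\mathrm{Var}_{\pi^\star}(G)$ can be of order $\costmax^2$. Applying the change of measure to $G$ therefore leaves a $\costmax\sqrt{D_H^2}$ term that the lemma does not allow. Conversely, the obstacle you describe does not exist for $G$: $F$ and $V^{\pi^\star}_1(s_1)$ both lie in $[0,\costmax]$, so $|G|\le\costmax$ surely and no truncation is needed.

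The object whose second moment really is at most $\sigma^2_{\pi^\star}$ is the stopped advantage sum $A\coloneqq\sum_{h<\tau}\bigl(Q^{\pi^\star}_h(s_h,a_h)-V^{\pi^\star}_h(s_h)\bigr)$. It has the same expectation gap as $G$, because the transition-noise martingale has mean zero under both stopped laws (both run in $N$). But $A$ is a priori bounded only by $H\costmax$. Your truncation sketch controls the probability of the tail event under $\hat\pi$, not the size of $A$ on that event, so the truncation error is not shown to be horizon-free. The missing idea is, for example, to truncate at the first time $\kappa$ at which the partial advantage sum leaves $[-B,B]$. The performance-difference identity then shows that the conditional expected advantage accumulated after $\kappa$ is a difference of two values in $[0,\costmax]$, so only $\Pr_{\hat\pi}[\kappa<\tau]$ needs to be transferred. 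Likewise, the horizon-free tail bound for $A$ under $\pi^\star$ needs control of its predictable quadratic variation via the total-cost bound, not merely increments bounded by $\costmax$.

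The paper avoids re-deriving any of this. It applies Theorem 3.1 of Foster et al.\ as a black box to the full-horizon laws of $\pi_{\mathsf{sw}}$ and $\pi^\star$. It then shows $D_H^2(\mathcal P^{\pi_{\mathsf{sw}}}_N,\mathcal P^{\pi^\star}_N)=D_H^2(\mathcal P^{\hat\pi_{|\tau}}_N,\mathcal P^{\pi^\star_{|\tau}}_N)$: both full laws factor into their stopped-prefix law times the same suffix kernel (expert actions and $N$-transitions from $s_\tau$ onward), which sums to one in the Hellinger affinity. Your closing remark that no full-horizon Hellinger distance is needed overlooks that this distance coincides with the stopped one.
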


Optimizing this over $\eta$ and $\theta$ yields end-to-end rates.

\begin{corollary}\label{cor: stochastic optimized rates}
Set $\gamma \coloneqq (\log|\Pi| + \log(8/\delta))/m$. Let $Z\coloneqq \log|\Pi| + \log(8/\delta)$. There exists a choice of parameters $\eta = \theta$ (depending only on $m$, $n$, $\sigma_{\pi^\star}/\costmax$, $|\Pi|$, and $\delta$) such that with probability at least $1 -\delta$: $\alpha_M(\pi_0,\tau^\theta_{\pi_0, \Phi })= O\left( \max\left\{ \left(\frac{(\sigma_{\pi^\star}/\costmax)^4 Z^2}{m}\right)^{1/5}, \left(\frac{Z^2}{m}\right)^{1/3}, \left(\frac{(\sigma_{\pi^\star}/\costmax)^4 Z^2}{n}\right)^{1/4}, \left(\frac{Z^2}{n}\right)^{1/2} \right\} \right) $
and $\Regret_N^{\mathrm{sw}}(\pi_0,\tau^\theta_{\pi_0, \Phi };c)=
 O\left( \max\left\{ \left(\frac{\costmax \sigma_{\pi^\star}^4 Z^2}{m}\right)^{1/5}, \left(\frac{\costmax^3 Z^2}{m}\right)^{1/3}, \left(\frac{\sigma_{\pi^\star}^4 Z^2}{n}\right)^{1/4}, \left(\frac{\costmax^2 Z^2}{n}\right)^{1/2} \right\} \right). $
\end{corollary}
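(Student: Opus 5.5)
The proof plugs \Cref{theorem: stochastic rejection sample complexity} into \Cref{thm: mixed regret bound} and then balances the free parameters. I will work on the probability-$(1-\delta)$ event of \Cref{theorem: stochastic rejection sample complexity}, with $\gamma = Z/m$ where $Z=\log|\Pi|+\log(8/\delta)$. On that event:
\[
\alpha_M \le K_{\mathsf{ens}}\Bigl(\tfrac{12}{\theta}+48\Bigr)\gamma \lesssim \frac{\log(1/\delta)}{\eta}\cdot\frac{1}{\theta}\cdot\frac{Z}{m},
\qquad
D_H^2 \le 3(\theta+\eta).
\]
The first bound uses $K_{\mathsf{ens}}\asymp \log(1/\delta)/\eta$ and assumes $\theta\le 1$. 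Since $\log(1/\delta)\le Z$, the prefactor $\log(1/\delta)\cdot Z$ is at most $Z^2$. With $\eta=\theta$ this gives $\alpha_M \lesssim Z^2/(m\eta^2)$ and $D_H^2\lesssim \eta$.

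There is also the side condition $n\ge 8Z'/\eta$, where $Z'=(K_{\mathsf{ens}}+1)\log|\Pi|+\log(4/\delta)\asymp Z^2/\eta$. It therefore reads $n \gtrsim Z^2/\eta^2$, that is, $\eta \gtrsim \sqrt{Z^2/n}$. This constraint is the only place $n$ enters, and it produces the $n$-terms of the corollary.

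Next I apply \Cref{thm: mixed regret bound} with $\hat\pi=\pi_0$ and $\epsilon$ taken polynomially small, say $\epsilon = 1/(mn)$, so the $\log(\costmax/\epsilon)$ factor is polylogarithmic and absorbed. This gives
\[
\Regret^{\mathrm{sw}}_N \lesssim \sigma_{\pi^\star}\sqrt{\eta} + \costmax\,\eta .
\]
The task is now a two-objective balancing: keep both $\alpha_M\lesssim Z^2/(m\eta^2)$ and the regret small, subject to $\eta\gtrsim Z/\sqrt n$. I will measure both in a common normalization: $\alpha_M$ as a probability, and regret divided by $\costmax$, which equals $(\sigma/\costmax)\sqrt\eta+\eta$. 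The rule is to choose $\eta$ as the largest of four candidates.
\begin{itemize}
\item Balancing $(\sigma/\costmax)\sqrt\eta$ against $Z^2/(m\eta^2)$ gives $\eta^{5/2}\asymp (\costmax/\sigma)Z^2/m$. The common value is $((\sigma/\costmax)^4Z^2/m)^{1/5}$.
\item Balancing $\eta$ against $Z^2/(m\eta^2)$ gives $\eta\asymp (Z^2/m)^{1/3}$.
\item The $n$-constraint, fed into the $\sqrt\eta$ term, gives $(\sigma/\costmax)(Z^2/n)^{1/4}$.
\item The $n$-constraint, fed into the linear term, gives $(Z^2/n)^{1/2}$.
\end{itemize}
Taking the maximum of the four candidates yields the stated bound on $\alpha_M$, up to the exact exponent bookkeeping on $\sigma/\costmax$. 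Multiplying the regret side back by $\costmax$ yields the stated switched-regret expression. Finally, I will check the constraint $\theta\le 1$; if the maximum exceeds a constant, the bounds are trivially true.

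I expect the main obstacle to be this bookkeeping, not any new probabilistic argument. The difficulty is matching the four regimes, and their powers of $\sigma_{\pi^\star}$ and $\costmax$, exactly to the stated forms. Two details need care:
\begin{itemize}
\item $Z'$ hides an extra $1/\eta$ inside the $n$-constraint, and this is what produces the square-root $n$-rates.
\item The statement uses $\costmax\ge1$ and $\sigma_{\pi^\star}\le\costmax$ to simplify mixed terms. I would first derive the exact maximum and then simplify it with these two inequalities.
\end{itemize}
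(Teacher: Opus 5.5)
\paragraph{Verdict.} Your reduction is the intended one: plug \Cref{theorem: stochastic rejection sample complexity} into \Cref{thm: mixed regret bound} and tune $\eta=\theta$. The paper gives no separate proof of the corollary; it only says the rates follow by optimizing over $\eta,\theta$. Your intermediate bounds are right:
\begin{itemize}
\item $\alpha_M\lesssim Z^2/(m\eta^2)$;
\item $D_H^2\le 6\eta$;
\item the side condition $n\ge 8Z'/\eta$ reduces to $\eta\gtrsim Z/\sqrt n$;
\item the four target terms are correctly identified.
\end{itemize}

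\paragraph{The gap.} What fails is your selection rule ``choose $\eta$ as the largest of the candidates.'' Write $s=\sigma_{\pi^\star}/\costmax$, $\eta_1=(Z^2/(sm))^{2/5}$, $\eta_2=(Z^2/m)^{1/3}$ and $\eta_n=CZ/\sqrt n$. The point $\eta_1$ balances $s\sqrt\eta$ against $Z^2/(m\eta^2)$. It is the relevant crossing only when $s\sqrt{\eta_1}\ge\eta_1$, i.e.\ when $Z^2/m\le s^6$.

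In the complementary low-variance regime $Z^2/m>s^6$, one checks that $\eta_1>\eta_2$ and $\eta_1>(s^4Z^2/m)^{1/5}$. So for large $n$, your choice $\eta=\eta_1$ makes the normalized regret term $\eta_1$ (regret $\costmax\eta_1$) exceed every one of the four claimed terms. For example, as $\sigma_{\pi^\star}\to 0$ we get $\eta_1\to\infty$, while the target is $\costmax(Z^2/m)^{1/3}$. Your fallback ``if $\theta>1$ the bound is trivial'' does not cover this, since $\eta_1>1$ can occur while all four terms are small.

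\paragraph{The fix.} Balance $Z^2/(m\eta^2)$ against the \emph{sum} $s\sqrt\eta+\eta$, whose crossing is, up to constants, at $\min(\eta_1,\eta_2)$. Only then impose the $n$-constraint: take $\eta=\theta=\max\{\min(\eta_1,\eta_2),\eta_n\}$.
\begin{itemize}
\item Stopping rate: $\alpha_M\lesssim Z^2/(m\min(\eta_1,\eta_2)^2)=\max\{(s^4Z^2/m)^{1/5},(Z^2/m)^{1/3}\}$.
\item Regret, case $\eta=\min(\eta_1,\eta_2)$: use $s\sqrt\eta\le s\sqrt{\eta_1}=(s^4Z^2/m)^{1/5}$ and $\eta\le\eta_2$.
\item Regret, case $\eta=\eta_n$: you get, up to constants, $(s^4Z^2/n)^{1/4}+(Z^2/n)^{1/2}$.
\end{itemize}
Multiplying the normalized regret $s\sqrt\eta+\eta$ by $\costmax$ gives exactly the four stated regret terms. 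With this choice, $\eta>1$ forces $\eta_2>1$ or $\eta_n>1$, so that case really is trivial.

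\paragraph{Minor point.} Setting $\epsilon=1/(mn)$ in \Cref{thm: mixed regret bound} leaves a $\log(\costmax mn)$ factor on the $\costmax\eta$ term. A literal $O(\cdot)$ does not absorb it, so you obtain the bound only up to this logarithm. The corollary's $O(\cdot)$ implicitly hides the same factor.
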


Corollary~\ref{cor: stochastic optimized rates} has several parts. Both the stopping rate and regret bounds are a maximum over four terms. The first two terms are variance-driven and cost-driven contributions from the labeled sample, and the last two are their unlabeled-sample counterparts. Shortly, we will see that the cost-driven labeled-data portion of this upper bound is tight.

\paragraph{Switched regret vs. stopped regret.} Both the switched regret and stopped regret bounds highlight the mechanism behind $\SeqRejectron$: we simultaneously learn a stopping time and align the learner's pre-stopping trajectory distribution with the expert's. This is analogous to the full-horizon measure-matching interpretation of \cite{foster2024behavior} for the log-loss. 

As previously hinted, the switched regret is driven to zero at a faster rate than the stopped regret in the low $\sigma^2_{\pi^{\star}}$ regime. This is perhaps unintuitive, as the switched regret bound provides a qualitatively strong guarantee: that the learner does not drive the system into an unrecoverable state before switching; on the other hand, the stopped regret guarantee only guarantees that the learner is competitive with the expert on the prefix for which it acts. Providing a resolution to this (for example, sharpening Corollary~\ref{corollary: new stopped regret guarantee}) is an interesting direction for future work. \footnote{We suggest a partial resolution: define the \textit{asymmetric stopped regret} as $J_N(\pi, \tau; c) - J_N(\pi^{\star}; c)$---physically, this compares the cost accumulated by the learner before handing over control to the cost the expert would have incurred driving the entire route. It is not hard to see that this is upper bounded by the switched regret, hence our algorithm controls the asymmetric stopped regret at the same rate as the switched regret (i.e., with the same rates as Corollary~\ref{cor: stochastic optimized rates}).}

\subsection{Single-step lower bound for stochastic policies}

How tight are these rates? The upper bound of Corollary~\ref{cor: stochastic optimized rates} has two regimes: a variance-driven $\widetilde O(\epsilon^{-5})$ term and a cost-driven $\widetilde O(\epsilon^{-3})$ term. The next result shows that the cubic dependence is already unavoidable in the single-step setting, matching the cost-driven part.

\begin{theorem}[Stochastic PQ lower bound]\label{theorem: stochastic lower bound}
    There exists a constant $c_0 > 0$ such that: 
    
    For any dimension $d \ge 1$ and tolerance $\epsilon \in (0, 1/16]$, there exist a state space $\mathcal{X}$, a stochastic policy class $\Pi$ of log-cardinality $d$, and fixed known distributions $P$ and $Q$ over $\mathcal{X}$ for which the following holds: For any \emph{proper}\footnote{By proper selective learner, we mean that the base policy $\pi_0$ always lies in $\Pi$. This holds for $\SeqRejectron$. In the batch (i.e., single step) setting this holds, for example, for $\mathsf{Rejectron}$ \citep{goldwasser2020beyond} and the Slice-and-Dice algorithm of \cite{kalai2021efficient}.} selective learner, if the labeled sample size is $m \le c_0 \frac{d}{\epsilon^3}$, then there exists an expert $\pi^{\star}\in \Pi$ and a bounded cost function $c : \mathcal{X} \times \{0,1\} \to [0,1]$ such that
    \[\E\left[\alpha_P(\pi,\tau)\right] > \epsilon \quad \textup{or}\quad \E\left[\Regret_Q(\pi,\tau;c)\right] > \epsilon\]
    where the outer expectation is over the labeled sample and the learner's internal randomization, which in particular determine $\pi$ and $\tau$.
\end{theorem}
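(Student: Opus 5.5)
We will use a single-step ($H=1$) construction and argue via Assouad's lemma. The construction makes the informative states rare under $P$, so a complete learner cannot afford to stop on most of them. Being sound on $Q$, where those states carry all the mass, then requires recovering $\Theta(d)$ weak biases of size $\beta\asymp\epsilon$, each from only about $\epsilon m/d$ labeled samples. Since $\Omega(1/\beta^2)$ samples per coordinate are needed, this forces $m\gtrsim d\cdot\frac{1}{\epsilon}\cdot\frac{1}{\epsilon^2}=d/\epsilon^3$.

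\textbf{Construction.} Let $\mathcal X=\{x_0,x_1,\dots,x_d\}$ with action set $\{0,1\}$.
\begin{itemize}
\item The training distribution $P$ puts mass $4\epsilon/d$ on each $x_i$ for $i\ge1$, and the remaining mass $1-4\epsilon$ on $x_0$.
\item The test distribution $Q$ is uniform on $\{x_1,\dots,x_d\}$.
\item For $b\in\{0,1\}^d$, the policy $\pi_b$ plays $\mathrm{Bern}(1/2)$ at $x_0$, and at $x_i$ plays action $b_i$ with probability $1/2+\beta$, where $\beta=7\epsilon\le 7/16$.
\end{itemize}
Then $\Pi=\{\pi_b\}$ has $|\Pi|=2^d$, so $\log|\Pi|=\Theta(d)$; constants can be absorbed by rescaling $d$. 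For the expert $\pi^\star=\pi_{b^\star}$, take the cost $c(x_i,a)=\mathbf 1[a\neq b^\star_i]$ and $c(x_0,\cdot)=0$. A proper learner outputs some $\pi_0=\pi_{\hat b}$ and a stopping rule which, in one step, is a (possibly randomized) stop probability $s_i\in[0,1]$ at each $x_i$. The rule is a function of the state, so it is shared between $P$ and $Q$. This gives
\[
\alpha_P=\tfrac{4\epsilon}{d}\sum_{i\ge1} s_i+(1-4\epsilon)s_0,\qquad
\Regret_Q=\tfrac{2\beta}{d}\sum_{i\ge1}(1-s_i)\,\mathbf 1[\hat b_i\neq b^\star_i].
\]
The factor $2\beta$ is the gap between cost $1/2+\beta$ for a flipped bit and $1/2-\beta$ for the expert.

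\textbf{Information bound.} Draw $b^\star$ uniformly from $\{0,1\}^d$. Fix a coordinate $i$. The labeled-sample laws under $b^\star$ and $b^\star\oplus e_i$ differ only in the labels drawn at $x_i$. The expected number of such labels is $4\epsilon m/d$, and each label contributes $\mathrm{KL}(\mathrm{Bern}(1/2+\beta)\|\mathrm{Bern}(1/2-\beta))\le C\beta^2$. So the total KL is at most $4C\epsilon\beta^2 m/d=O(\epsilon^3m/d)$, which is at most a small constant when $m\le c_0 d/\epsilon^3$. By Pinsker the TV distance is at most $1/3$. Le Cam's two-point bound then gives $\Pr[\hat b_i\neq b^\star_i]\ge (1-1/3)/2=1/3$ for every $i$, averaged over $b^\star$ and the learner's randomness.

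\textbf{Combining the bounds.} Suppose $\E[\alpha_P]\le\epsilon$; otherwise we are done. Then $\sum_{i\ge1}\E[s_i]\le d/4$. The stop probabilities may be correlated with the errors, since the learner could stop exactly where it is unsure. We handle this with the pointwise inequality $(1-s_i)\mathbf 1[\text{wrong}_i]\ge \mathbf 1[\text{wrong}_i]-s_i$, which gives
\[
\sum_i\E[(1-s_i)\mathbf 1[\text{wrong}_i]]\ge d/3-d/4=d/12.
\]
Hence $\E[\Regret_Q]\ge 2\beta/12=7\epsilon/6>\epsilon$ on average over $b^\star$. Some fixed $b^\star$, with its cost $c$, therefore attains the bound.

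\textbf{Main obstacle.} The delicate step is getting the constants to close simultaneously. Three requirements interact. The $P$-mass $4\epsilon$ on informative points must be large enough that completeness caps total stopping below the Le Cam error mass ($1/4<1/3$). The bias $\beta$ must be large enough that the residual error costs more than $\epsilon$. And $\beta$ must stay at most $1/2$ while the per-coordinate KL remains a constant. Choosing the mass $4\epsilon$, $\beta=7\epsilon$, and $\epsilon\le1/16$ appears to satisfy all three. If not, we would tune these constants, or replace the Le Cam step with a sharper Assouad bound.
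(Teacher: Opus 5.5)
Your proof is correct and follows essentially the same route as the paper. Both use the same hard instance: informative states with $\Theta(\epsilon)$ total mass under $P$ but all of the mass under $Q$, bias $\Theta(\epsilon)$, indicator costs, and properness forcing regret $2\beta$ on each wrongly oriented state. Both then combine a per-coordinate indistinguishability bound from $\Theta(\epsilon m/d)$ samples with a linear decoupling of stopping from errors. The differences are cosmetic: you lump the filler mass into a single state $x_0$, and you use a Le Cam two-point bound via the KL chain rule together with $(1-s_i)\mathbf{1}[\text{wrong}_i]\ge \mathbf{1}[\text{wrong}_i]-s_i$. The paper instead conditions on the sample, bounds the posterior bias $\E|\eta_x|$ by a binomial TV plus Pinsker and Jensen, and uses the slightly sharper $\alpha_x(1-|\eta_x|)\ge \alpha_x-|\eta_x|$. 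Your constants ($4\epsilon$ mass, $\beta=7\epsilon$, TV $\le 1/3$) do close as stated.
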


\Cref{theorem: stochastic lower bound} shows that the $\epsilon^{-3}$ dependence in labeled sample complexity is already unavoidable in the single-step stochastic setting. Crucially, because this lower bound is established in a single-step environment ($H=1$), there is no future horizon over which errors can accumulate after an abstention. Consequently, the stopped regret and the switched regret are equal in this construction, meaning this lower bound applies to both formulations. It therefore matches the $\costmax$-driven $\widetilde O(\epsilon^{-3})$ portion of Corollary~\ref{cor: stochastic optimized rates}, while leaving a gap to the current variance-driven $\widetilde O(\epsilon^{-5})$ labeled-sample rate. For deterministic classes, the lower bound of \citet{goldwasser2020beyond} for PQ learning implies that the $\tilde{O}(1/\epsilon^2)$ labeled complexity is tight. Whether the sequential stochastic upper bound can be improved, or whether stronger lower bounds are possible, remains open.

\section{Extending to Misspecified Policy Classes} \label{sec: misspecification}
Previous sections assumed the expert $\pi^\star$ lies in the learner's policy class $\Pi$. In practice, experts often employ richer representations, breaking this realizability assumption. Here, we show our validator framework degrades gracefully under misspecification. Throughout this section, let $\Pi$ be a finite class of deterministic policies, and let $\pi^\star$ be a deterministic expert. We do not assume $\pi^{\star}\in \Pi$.

\paragraph{Misspecification benchmark.} Let $\tau$ be as in \eqref{eq: validator stopping time}. For any deterministic policy $\pi \in \Pi$, define\begin{align*}
    d_M(\pi)
    \coloneqq
    \Pr_{M, \pi^{\star}}\big[\tau_{\pi, \{\pi^{\star}\}}\le H\big],
    \qquad
    d_N(\pi)
    \coloneqq
    \Pr_{N, \pi^{\star}}\big[\tau_{\pi, \{\pi^{\star}\}} \le H\big].
\end{align*}
These are the probabilities that $\pi$ deviates from the expert at any time during the trajectory (which has horizon $H$). We define the \emph{policy-specific} and \emph{best-in-class} misspecification, respectively, as\begin{align*}
    \Delta_\pi \coloneqq \max(d_M(\pi), d_N(\pi)), \qquad \Delta \coloneqq \min_{\pi\in\Pi} \Delta_\pi.
\end{align*}
Let $\tilde \pi = \arg\min_{\pi\in \Pi} \Delta_\pi$ be the policy which achieves the best-in-class misspecification.

Finally, we define the empirical analogue of $d_M$ as $\widehat d_M(\pi):=\frac{1}{m}\sum_{i=1}^m \mathbf 1\left[\tau_{\pi, \{\pi^{\star}\}}(S_i)\le H\right]$.

\subsection{Agnostic version of \Cref{alg: mdprejectron} and guarantees}

When the expert lies outside the policy class ($\pi^{\star}\notin \Pi$), the exact-consistency version space used in \Cref{alg: mdprejectron} may be empty, or may simply fail to contain the expert. To ensure the validator set remains well-defined and sparse even under misspecification, we transition from a constrained optimization (forcing zero training disagreement) to a symmetrically regularized game. In this agnostic framework, we allow validators to disagree with the base policy on the source data, but we penalize this disagreement at a rate $\Lambda$. This allows the learner to "trade off" a small amount of source-side abstention for a significant reduction in target-side late-stop risk. This shift is captured by the following equilibrium result, which can be thought of as an ``agnostic" analog of \Cref{lemma:sparse-valid-validator-distribution}:

\begin{lemma}\label{lem:symmetrically-regularized-equilibrium}
Let $\pi_0 \in \arg\min_{\pi \in \Pi} \widehat{d}_M(\pi)$ be an empirical disagreement minimizer. For every penalty parameter $\Lambda > 0$ and integer $K \ge 1$, there exists a distribution $q^\star$ over validator sets $\Phi = (\phi_1, \dots, \phi_K) \in \Pi^K$ that simultaneously satisfies the following two properties:
\begin{enumerate}
    \item The expected training disagreement is bounded by the empirical minimum:\begin{align}
        \mathbb{E}_{\Phi \sim q^\star}\left[ \sum_{k=1}^K \widehat{d}_M(\phi_k) \right] \le K\cdot \widehat{d}_M(\pi_0) + \frac{1}{\Lambda}\label{eq: misspecified completeness}
    \end{align}
    \item For every target policy $\pi \in \Pi$, the expected late-stop risk scales exclusively with its excess empirical risk over the base policy:\begin{align}
        \mathbb{E}_{\Phi \sim q^\star}\left[ \frac{1}{n}\sum_{j=1}^n \mathbf 1\left[\tau_{\pi_0,\Phi}(T_j)>\tau_{\pi_0,\{\pi\}}(T_j)\right]\right] \le \Lambda \Big( \widehat{d}_M(\pi) - \widehat{d}_M(\pi_0) \Big) + \frac{1}{K}\label{eq: misspecified soundness}
    \end{align}
\end{enumerate}
\end{lemma}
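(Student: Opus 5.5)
We follow the template of \Cref{lemma:sparse-valid-validator-distribution}, using a regularized zero-sum game. The validator picks $q \in \Delta(\Pi^K)$ and the challenger picks $p \in \Delta(\Pi)$. The payoff (to be minimized by the validator) is
\[
L(q,p) \coloneqq \E_{\Phi\sim q}\E_{\pi\sim p}\Big[\widehat{\mathrm{late}}(\Phi,\pi) - \Lambda\big(\widehat d_M(\pi)-\widehat d_M(\pi_0)\big) + \tfrac{\Lambda}{K}\textstyle\sum_{k=1}^K \big(\widehat d_M(\phi_k)-\widehat d_M(\pi_0)\big)\Big],
\]
where $\widehat{\mathrm{late}}(\Phi,\pi)=\frac1n\sum_j \mathbf 1[\tau_{\pi_0,\Phi}(T_j)>\tau_{\pi_0,\{\pi\}}(T_j)]$. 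The penalty is symmetric: the challenger pays $\Lambda$ per unit of excess training disagreement, and so does each validator, averaged over the $K$ slots. The payoff is bilinear in $(q,p)$ and $\Pi$ is finite, so von Neumann's minimax theorem applies:
\[
\min_q\max_p L=\max_p\min_q L.
\]

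\textbf{Dual bound.} Fix any $p$ and let the validator play $q=p^{\otimes K}$. Then the two penalty terms cancel in expectation, because each $\phi_k$ has the same law as $\pi$. What remains is the late-stop term. By the exchangeability argument from \Cref{lemma:sparse-valid-validator-distribution}, on each $T_j$ the challenger strictly beats all $K$ i.i.d.\ validators with probability at most $1/(K+1)\le 1/K$. Hence $\max_p\min_q L\le 1/K$, and so there is $q^\star$ with $L(q^\star,\delta_\pi)\le 1/K$ for every $\pi\in\Pi$.

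\textbf{Extracting the two properties.}
\begin{itemize}
\item For soundness \eqref{eq: misspecified soundness}, note that the validator penalty term is nonnegative. This holds because $\pi_0$ minimizes $\widehat d_M$, so $\widehat d_M(\phi_k)\ge\widehat d_M(\pi_0)$. Dropping that term from $L(q^\star,\delta_\pi)\le 1/K$ and rearranging gives
\[
\E_{q^\star}\big[\widehat{\mathrm{late}}(\Phi,\pi)\big]\le \Lambda\big(\widehat d_M(\pi)-\widehat d_M(\pi_0)\big)+1/K.
\]
\item For completeness \eqref{eq: misspecified completeness}, take the challenger $\pi=\pi_0$. Then the challenger penalty vanishes, and $\widehat{\mathrm{late}}(\Phi,\pi_0)=0$ because $\tau_{\pi_0,\{\pi_0\}}=H+1$ is never exceeded. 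This leaves
\[
\tfrac{\Lambda}{K}\E_{q^\star}\Big[\textstyle\sum_k\big(\widehat d_M(\phi_k)-\widehat d_M(\pi_0)\big)\Big]\le 1/K,
\]
which gives
\[
\E_{q^\star}\Big[\textstyle\sum_k \widehat d_M(\phi_k)\Big]\le K\widehat d_M(\pi_0)+1/\Lambda.
\]
This is exactly \eqref{eq: misspecified completeness}.
\end{itemize}

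\textbf{Main obstacle.} The delicate step is choosing the penalty normalization $\Lambda/K$ on the validator side. It must make the dual cancellation exact under $q=p^{\otimes K}$, and it must also produce the $1/\Lambda$ completeness slack once the $\pi=\pi_0$ challenger is plugged in. I would double-check that both readings use the same single $q^\star$. They do, since a single equilibrium bound $L(q^\star,\delta_\pi)\le 1/K$ holds uniformly in $\pi$.

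A second point to verify is the exchangeability step when $\Phi$ is a multiset with ties. "Strictly later" requires the challenger to be the strict unique minimum among $K+1$ exchangeable stopping times, and that event has probability at most $1/(K+1)$.
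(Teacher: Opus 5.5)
Your proof is correct and essentially the same as the paper's. Your centering by $\widehat d_M(\pi_0)$ cancels between the two penalty terms, so the game payoff is identical, and you use the same $p^{\otimes K}$ exchangeability dual bound and extract both properties by plugging in $\delta_{\pi_0}$ and $\delta_\pi$ together with the minimality of $\pi_0$. The only cosmetic difference is that you relax $1/(K+1)$ to $1/K$ before extracting the two properties, whereas the paper relaxes afterwards.
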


Similar to \Cref{subsec: deterministic seqrejectron}, \Cref{lem:symmetrically-regularized-equilibrium} is a non-constructive result, although it may be realized by no-regret play. We package the output \Cref{lem:symmetrically-regularized-equilibrium} into an abstract subroutine, \textsc{RegularizedSparseValidatorDist} and leave the question of its efficient computational implementation to future work. We are now ready to present our algorithm for misspecified deterministic classes.

\begin{algorithm}[H]
\caption{$\SeqRejectron$ for misspecified deterministic policies}
\label{alg: agnostic seqrejectron}
\begin{algorithmic}[1]
\Require deterministic policy class $\Pi$, labeled rollouts $\mathcal{D}_{\mathsf{train}}=\{(s_{ih},a_{ih})\}_{i\in[m],\,h\in[H]}$, unlabeled rollouts $\mathcal{D}_{\mathsf{test}}=\{T_j\}_{j=1}^n$, penalty $\Lambda$, committee size $K$
\State Compute the empirical disagreement minimizer $\pi_0 \in \arg\min_{\pi\in\Pi} \widehat d_M(\pi)$.
\State Let $q^{\star}\gets \textsc{RegularizedSparseValidatorDist}(\Pi,\pi_0,\mathcal{D}_{\mathsf{train}}, \mathcal{D}_{\mathsf{test}},\Lambda,K)$. 
\State Sample a single validator sequence $\Phi \sim q^\star$.
\State \Return the selective policy $(\pi_0, \tau_{\pi_0, \Phi})$.
\end{algorithmic}
\end{algorithm}

Our main result for this section is a sample complexity guarantee for \Cref{alg: agnostic seqrejectron}.

\begin{theorem}\label{thm:misspecified-deterministic-main}
Let $\delta>0$. Let $\pi_0 \in \arg\min_{\pi \in \Pi} \widehat{d}_M(\pi)$. Fix a penalty $\Lambda > 0$ and committee size $K \ge 1$, and let $q^{\star}\in \Delta(\Pi^K)$ be the equilibrium distribution from \Cref{lem:symmetrically-regularized-equilibrium}. Define the complexity measure $Z \coloneqq (K+1)\log|\Pi| + \log\tfrac{4}{\delta}$. Then, with probability at least $1-\delta$, the randomized selective policy $(\pi_0,\tau_{\pi_0,\Phi})$ with validator set drawn $\Phi \sim q^\star$ satisfies\begin{align*}
   &\mathbb{E}_{\Phi \sim q^\star} \big[ \alpha_M(\pi_0, \tau_{\pi_0,\Phi}) \big] \;\le\; (K+1)\Delta + \frac{1}{\Lambda} + 2(K+1)\sqrt{\frac{Z}{2m}}\\
   &\text{and} \quad \mathbb{E}_{\Phi \sim q^\star} \big[ \Regret_N(\pi_0,\tau_{\pi_0,\Phi};c) \big] \;\le\; \costmax\left( \Delta + \Lambda \Delta + \frac{1}{K} + 2\Lambda \sqrt{\frac{Z}{2m}} + \sqrt{\frac{Z}{2n}} \right).
\end{align*}
\end{theorem}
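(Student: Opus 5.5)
The plan is to prove completeness and soundness separately. Both reduce, via uniform concentration over the relevant finite families, to the two empirical guarantees of \Cref{lem:symmetrically-regularized-equilibrium}. First I fix the concentration events. For completeness I need uniform concentration of $\widehat d_M(\phi)$ around $d_M(\phi)$. This should be over all $\phi\in\Pi$, or more usefully over all $(K+1)$-tuples $(\pi_0,\phi_1,\dots,\phi_K)\in\Pi^{K+1}$, since the stopping event depends on the whole tuple. I apply Hoeffding with a union bound over at most $|\Pi|^{K+1}$ objects, which gives deviation $\sqrt{Z/(2m)}$. For soundness I need uniform concentration of the empirical late-stop frequency $\frac1n\sum_j \mathbf 1[\tau_{\pi_0,\Phi}(T_j)>\tau_{\pi_0,\{\pi\}}(T_j)]$ around its population value under $(N,\pi^\star)$, uniformly over $(\pi_0,\Phi,\pi)\in\Pi^{K+2}$. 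I expect this to be absorbed into $Z$ up to constants, with deviation $\sqrt{Z/(2n)}$. Allotting $\delta/4$ to each event gives the $\log(4/\delta)$ term.

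\textbf{Completeness.} On training rollouts from $(M,\pi_0)$, suppose the trajectory agrees with the expert's trajectory up to the stop. This coupling fails only if $\pi_0$ deviates from $\pi^\star$ before stopping. If both share the trajectory, stopping at some $h\le H$ means some $\phi_k$ disagrees with $\pi_0$ there, and hence disagrees with $\pi^\star$ or $\pi_0$ disagrees with $\pi^\star$. Coupling $(M,\pi_0)$ with $(M,\pi^\star)$ until the first disagreement therefore gives
\[
\alpha_M(\pi_0,\tau_{\pi_0,\Phi})\le d_M(\pi_0)+\sum_{k=1}^K d_M(\phi_k).
\]
I then pass to empirical quantities, incurring $(K+1)\sqrt{Z/(2m)}$, and take expectation over $q^\star$. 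Next I apply \eqref{eq: misspecified completeness} to get at most $(K+1)\widehat d_M(\pi_0)+1/\Lambda$. Since $\pi_0$ is the empirical minimizer, $\widehat d_M(\pi_0)\le\widehat d_M(\tilde\pi)\le d_M(\tilde\pi)+\sqrt{Z/(2m)}\le\Delta+\sqrt{Z/(2m)}$. This yields the stated bound with $2(K+1)\sqrt{Z/(2m)}$.

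\textbf{Soundness.} I couple $(N,\pi_0)$ and $(N,\pi^\star)$ until the first disagreement. If $\tau_{\pi_0,\Phi}\le\tau_{\pi_0,\{\pi^\star\}}$ along the expert trajectory, the two stopped costs coincide, so
\[
\Regret_N\le\costmax\Pr_{N,\pi^\star}\bigl[\tau_{\pi_0,\Phi}>\tau_{\pi_0,\{\pi^\star\}}\bigr].
\]
The difficulty is that $\pi^\star\notin\Pi$, so I cannot apply \eqref{eq: misspecified soundness} directly. I will use $\tilde\pi$ as a proxy. The event $\{\tau_{\pi_0,\Phi}>\tau_{\pi_0,\{\pi^\star\}}\}$ is contained in the union of $\{\tau_{\pi_0,\Phi}>\tau_{\pi_0,\{\tilde\pi\}}\}$ and the event that $\tilde\pi$ disagrees with $\pi^\star$ somewhere on the trajectory. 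The second event has probability $d_N(\tilde\pi)\le\Delta$. I take care that the first disagreement of $\pi^\star$ with $\pi_0$ coincides with that of $\tilde\pi$ whenever $\tilde\pi\equiv\pi^\star$ along the trajectory.

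The first event is bounded in population by its empirical counterpart plus $\sqrt{Z/(2n)}$. After taking expectation over $q^\star$, \eqref{eq: misspecified soundness} with $\pi=\tilde\pi$ bounds it by $\Lambda(\widehat d_M(\tilde\pi)-\widehat d_M(\pi_0))+1/K$. Finally I bound $\widehat d_M(\tilde\pi)-\widehat d_M(\pi_0)\le d_M(\tilde\pi)-d_M(\pi_0)+2\sqrt{Z/(2m)}\le\Delta+2\sqrt{Z/(2m)}$, using $d_M(\pi_0)\ge0$. Summing gives the claimed regret bound.

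The main obstacle is this proxy step. It requires checking that the trajectory-level coupling is valid under the stopping-time definitions, and that the concentration for the soundness event holds uniformly even though $\Phi\sim q^\star$ is data-dependent. The latter is why I union bound over all tuples in $\Pi^{K+2}$ rather than over $\mathrm{supp}(q^\star)$.
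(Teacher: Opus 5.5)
Your proposal is correct and follows essentially the same route as the paper. For completeness, both arguments use a union bound, the equilibrium's first property and Hoeffding; your direct bound $\alpha_M(\pi_0,\tau_{\pi_0,\Phi})\le d_M(\pi_0)+\sum_k d_M(\phi_k)$ is actually a cleaner justification of the stated constant than the paper's intermediate step. For soundness, both use prefix coupling, the proxy containment $\{\tau_{\pi_0,\Phi}>\tau_{\pi_0,\{\pi^\star\}}\}\subseteq\{\tau_{\tilde\pi,\{\pi^\star\}}\le H\}\cup\{\tau_{\pi_0,\Phi}>\tau_{\pi_0,\{\tilde\pi\}}\}$, and the second property at $\pi=\tilde\pi$.

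One small correction: $\tilde\pi$ is a fixed, data-independent comparator, so the test-side union bound only needs to range over $(\pi_0,\Phi)\in\Pi^{K+1}$. That gives exactly $Z$ and the stated $\sqrt{Z/(2n)}$, whereas your union over $\Pi^{K+2}$ would only recover the bound up to constants.
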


\begin{corollary}\label{cor: misspecified optimal rates}
There exists a choice of parameters $K = \Lambda$ (depending on $m, n, \Delta, |\Pi|$, and $\delta$) such that, with probability at least $1-\delta$, both $\mathbb{E}_{\Phi \sim q^\star}[\alpha_M(\pi_0, \tau_{\pi_0,\Phi})]$ and $\mathbb{E}_{\Phi \sim q^\star}[\Regret_N(\pi_0, \tau_{\pi_0,\Phi}; c)]/\costmax$ are bounded by $\widetilde{O}\left( \Delta^{1/2} + \left(\log|\Pi|/m\right)^{1/5} + \left(\log|\Pi|/n\right)^{1/3} \right)$.
\end{corollary}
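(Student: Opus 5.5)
This corollary is a parameter optimization of \Cref{thm:misspecified-deterministic-main}. I will set $K=\Lambda$ and write $\epsilon_m \coloneqq \sqrt{Z/(2m)}$ and $\epsilon_n \coloneqq \sqrt{Z/(2n)}$. Here $Z=(K+1)\log|\Pi|+\log(4/\delta) = \tilde O(K\log|\Pi|)$, so $\epsilon_m=\tilde O(\sqrt{K\log|\Pi|/m})$ and $\epsilon_n=\tilde O(\sqrt{K\log|\Pi|/n})$. Substituting and absorbing constants, the two bounds become
\[
\alpha \lesssim K\Delta + \frac1K + K^{3/2}\sqrt{\frac{\log|\Pi|}{m}},\qquad
\frac{\Regret}{\costmax} \lesssim \Delta + K\Delta + \frac1K + K^{3/2}\sqrt{\frac{\log|\Pi|}{m}} + \sqrt{\frac{K\log|\Pi|}{n}}.
\]
The regret expression dominates the stopping-rate expression up to constants. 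It therefore suffices to choose $K$ to make the regret expression small, and the same choice controls both quantities.

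Next I balance each error term against the $1/K$ term separately.
\begin{itemize}
\item Balancing $K\Delta$ against $1/K$ gives $K\approx\Delta^{-1/2}$ and contributes $\Delta^{1/2}$.
\item Balancing $K^{3/2}(\log|\Pi|/m)^{1/2}$ against $1/K$ gives $K\approx(m/\log|\Pi|)^{1/5}$ and contributes $(\log|\Pi|/m)^{1/5}$.
\item Balancing $(K\log|\Pi|/n)^{1/2}$ against $1/K$ gives $K\approx(n/\log|\Pi|)^{1/3}$ and contributes $(\log|\Pi|/n)^{1/3}$.
\end{itemize}
I then take $K=\Lambda=\lceil\min\{\Delta^{-1/2},(m/\log|\Pi|)^{1/5},(n/\log|\Pi|)^{1/3}\}\rceil$. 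With this choice every increasing term is at most its balanced value, because each is increasing in $K$ and $K$ is at most the corresponding balance point. The decreasing term $1/K$ equals the maximum of the three balanced values. The total is therefore $\tilde O(\Delta^{1/2}+(\log|\Pi|/m)^{1/5}+(\log|\Pi|/n)^{1/3})$. The bare $\Delta$ term is at most $\Delta^{1/2}$ because $\Delta\le1$.

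Three edge cases need care, and handling them is the only real obstacle. First, when $\Delta=0$ the minimum ignores the first entry, which is consistent with the bound. Second, the ceiling and the constraint $K\ge1$ matter only when the minimum is below $1$. In that case some rate term is at least a constant, and the bound is trivial because $\alpha\le1$ and $\Regret\le\costmax$. Third, the logarithmic factors from $\log(4/\delta)$ inside $Z$ are absorbed into $\tilde O$. I will verify carefully that $Z$ is at most $2K\log|\Pi|+\log(4/\delta)$, which is what makes the $K^{3/2}$ exponent (rather than something worse) correct. This exponent is what produces the $1/5$ rate.
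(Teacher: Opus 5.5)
Your proposal is correct and follows essentially the same route as the paper, which sets $K=\Lambda=\lceil(\Delta^{1/2}+\epsilon^\star)^{-1}\rceil$ with $\epsilon^\star=\max\{(c_0/m)^{1/5},(c_0/n)^{1/3}\}$ and $c_0=\log|\Pi|+\log(4/\delta)$. This matches your $\lceil\min\{\Delta^{-1/2},(m/\log|\Pi|)^{1/5},(n/\log|\Pi|)^{1/3}\}\rceil$ up to constant factors, and the term-by-term bounds ($K^{3/2}\sqrt{c_0/m}\le\epsilon^\star$, $\sqrt{Kc_0/n}\le\epsilon^\star$, $K\Delta+1/K\lesssim\Delta^{1/2}+\epsilon^\star$) are the same as yours. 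Two small refinements: using $c_0$ instead of $\log|\Pi|$ in your choice of $K$ makes absorbing $\log(4/\delta)$ immediate, and even when the minimum is at least $1$ the ceiling can double $K$, which your $\lesssim$ absorbs as a constant factor.
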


\paragraph{Comparing to \Cref{alg: mdprejectron}.} \Cref{alg: agnostic seqrejectron} departs from the standard realizable $\mathsf{SeqRejectron}$ in two key ways. First, because exact consistency is impossible, we replace the hard version space with a penalty-based regularization. Second, and more subtly, we omit the derandomization step. In \Cref{alg: mdprejectron}, we draw $k$ independent validator sets and take their union to boost the target-side soundness to a high-probability guarantee. In the agnostic setting, however, taking the union of $k$ validator sets would linearly amplify the irreducible misspecification error (scaling the penalty to $O(k\Delta)$), which fundamentally degrades the regret bound. To prevent this error amplification, we sample only a single validator set $\Phi \sim q^\star$, meaning our resulting target-side soundness guarantees hold in expectation over this internal algorithmic randomness.

\subsection{Off-policy test trajectories under misspecification}

So far, we have assumed that the expert $\pi^*$ is the same across train and test environments. In practice, however, train and test demonstrations often come from different sources---for example, using one expert to pilot a simulated car, while using dashcam aggregated from multiple drivers. This creates a challenging off-policy gap where the training behavior $\pi^{\mathsf{tr}}$ and test behavior $\pi^{\mathsf{te}}$ differ, and neither necessarily lies in the policy class $\Pi$. Fortunately, the same idea we used for misspecified experts---finding a validator distribution via a \emph{regularized} game, which induces a stopping time---extends readily to policy shift between environments. 

Concretely, suppose $\mathcal D_{\mathsf{train}}$ is generated by a deterministic policy $\pi^{\mathsf{tr}}$, while $\mathcal D_{\mathsf{test}}$ is generated by a different deterministic policy $\pi^{\mathsf{te}}$, with neither policy necessarily in $\Pi$. Let $d_M^{\mathsf{tr}}$ and $d_N^{\mathsf{te}}$ denote the corresponding disagreement probabilities in $(M,\pi^{\mathsf{tr}})$ and $(N,\pi^{\mathsf{te}})$, respectively, and let $\Regret_N^{\mathsf{te}}$ denote regret against $\pi^{\mathsf{te}}$ in the test environment. Then, \Cref{alg: agnostic seqrejectron} has the following guarantee:

\begin{proposition}\label{cor:off-policy-misspecified-rates}
Let $\Delta_{\mathsf{off}} \coloneqq \min_{\bar\pi\in\Pi}\max\{d_M^{\mathsf{tr}}(\bar\pi), d_N^{\mathsf{te}}(\bar\pi)\}$.\footnote{Note that as long as there exists some policy $\pi \in \Pi$ which is able to behave similarly to $\pi^{\mathsf{tr}}$ on $M$ and to $\pi^\mathsf{te}$ on $N$, then the irreducible error $\Delta_\mathsf{off}$ will be small.} We can run \Cref{alg: agnostic seqrejectron} with some parameters $K = \Lambda$ (depending on $m, n, \Delta_{\mathsf{off}}, |\Pi|$, and $\delta$) such that, with probability $1-\delta$, both $\mathbb{E}_{\Phi \sim q^\star}[\alpha_M(\pi_0, \tau_{\pi_0,\Phi})]$ and $\mathbb{E}_{\Phi \sim q^\star}[\Regret_N^{\mathsf{te}}(\pi_0, \tau_{\pi_0,\Phi}; c)]/\costmax$ are $\widetilde{O}\left( \Delta_{\mathsf{off}}^{1/2} + \left(\log|\Pi|/m\right)^{1/5} + \left(\log|\Pi|/n\right)^{1/3} \right)$.
\end{proposition}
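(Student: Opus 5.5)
The plan is to rerun the proof of \Cref{thm:misspecified-deterministic-main} with every occurrence of the single expert $\pi^\star$ replaced by $\pi^{\mathsf{tr}}$ on the train side and by $\pi^{\mathsf{te}}$ on the test side. The intermediate goal is the analogue of that theorem with $\Delta$ replaced by $\Delta_{\mathsf{off}}$; \Cref{cor:off-policy-misspecified-rates} then follows from the same parameter optimization as \Cref{cor: misspecified optimal rates}. The algorithm only uses $\widehat d_M$, which is computed from $\mathcal D_{\mathsf{train}}$ and is therefore an empirical estimate of $d_M^{\mathsf{tr}}$, together with the unlabeled test trajectories, which are now drawn from $(N,\pi^{\mathsf{te}})$. \Cref{lem:symmetrically-regularized-equilibrium} is a purely empirical statement about these two datasets, so it applies verbatim. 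Let $\bar\pi\in\Pi$ attain $\Delta_{\mathsf{off}}$; it plays the role of $\tilde\pi$.

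\textbf{Completeness.} On $(M,\pi_0)$, the run stops only if some $\phi_k\in\Phi$ disagrees with $\pi_0$ along the realized trajectory. Before the first time $\pi_0$ deviates from $\pi^{\mathsf{tr}}$, this trajectory coincides with a $(M,\pi^{\mathsf{tr}})$ trajectory. Hence
\[
\alpha_M(\pi_0,\tau_{\pi_0,\Phi})\le d_M^{\mathsf{tr}}(\pi_0)+\sum_k d_M^{\mathsf{tr}}(\phi_k).
\]
Apply uniform concentration (Hoeffding plus a union bound over $\Pi^{K+1}$, which is where $Z$ comes from) to pass to $\widehat d_M$. Then use \eqref{eq: misspecified completeness}, and bound $\widehat d_M(\pi_0)\le\widehat d_M(\bar\pi)\le d_M^{\mathsf{tr}}(\bar\pi)+O(\sqrt{Z/m})\le\Delta_{\mathsf{off}}+O(\sqrt{Z/m})$. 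This gives $(K+1)\Delta_{\mathsf{off}}+1/\Lambda+O\bigl((K+1)\sqrt{Z/m}\bigr)$.

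\textbf{Soundness against $\pi^{\mathsf{te}}$.} On $N$, couple $(\pi_0,\tau)$ and $(\pi^{\mathsf{te}},\tau)$. The stopped costs agree unless $\pi_0$ deviates from $\pi^{\mathsf{te}}$ strictly before $\tau$. We split this bad event into two cases.
\begin{enumerate}[(a)]
\item $\bar\pi$ deviates from $\pi^{\mathsf{te}}$ at some point. This has probability at most $d_N^{\mathsf{te}}(\bar\pi)\le\Delta_{\mathsf{off}}$.
\item Otherwise, $\bar\pi=\pi^{\mathsf{te}}$ along the trajectory, so $\pi_0$ deviates from $\pi^{\mathsf{te}}$ exactly at $\tau_{\pi_0,\{\bar\pi\}}$. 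In this case $\tau>\tau_{\pi_0,\{\bar\pi\}}$ holds on an $(N,\pi^{\mathsf{te}})$ trajectory.
\end{enumerate}
For case (b), Hoeffding over the $n$ test trajectories, union bounded over $(\Phi,\bar\pi)\in\Pi^{K+1}$, bounds its probability by the empirical late-stop frequency plus $\sqrt{Z/2n}$. By \eqref{eq: misspecified soundness} with $\pi=\bar\pi$, that frequency is at most $\Lambda(\widehat d_M(\bar\pi)-\widehat d_M(\pi_0))+1/K$. Since $\widehat d_M(\pi_0)$ is minimal, $\widehat d_M(\bar\pi)-\widehat d_M(\pi_0)\le d_M^{\mathsf{tr}}(\bar\pi)+O(\sqrt{Z/m})\le\Delta_{\mathsf{off}}+O(\sqrt{Z/m})$. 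Multiplying by $\costmax$ gives
\[
\costmax\Bigl(\Delta_{\mathsf{off}}+\Lambda\Delta_{\mathsf{off}}+\tfrac1K+O\bigl(\Lambda\sqrt{Z/m}\bigr)+\sqrt{Z/2n}\Bigr),
\]
with expectation taken over $\Phi\sim q^\star$.

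\textbf{Main obstacle.} The step requiring the most care is the coupling in the soundness argument. We need that an $N$-trajectory under $\pi_0$ and one under $\pi^{\mathsf{te}}$ agree up to the first deviation, and that the late-stop event is measured under $(N,\pi^{\mathsf{te}})$, the same law that generated $\mathcal D_{\mathsf{test}}$. Both hold because $\tau$ and the first deviation are stopping times adapted to $\mathbb G$, and because the test expert enters only through the sampling law. The train/test mismatch is absorbed entirely by the fact that a single $\bar\pi$ controls both $d_M^{\mathsf{tr}}$ and $d_N^{\mathsf{te}}$.

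\textbf{Rates.} Finally, set $K=\Lambda$. Both bounds are then $\widetilde O\bigl(K\Delta_{\mathsf{off}}+1/K+K^{3/2}\sqrt{\log|\Pi|/m}+\sqrt{K\log|\Pi|/n}\bigr)$. Choose $K$ as the minimum of $\Delta_{\mathsf{off}}^{-1/2}$, $(m/\log|\Pi|)^{1/5}$ and $(n/\log|\Pi|)^{1/3}$. Each term is then bounded by $\Delta_{\mathsf{off}}^{1/2}+(\log|\Pi|/m)^{1/5}+(\log|\Pi|/n)^{1/3}$, which is the claimed rate.
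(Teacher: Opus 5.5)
Your proposal is correct and follows the paper's own argument. You rerun the proof of \Cref{thm:misspecified-deterministic-main} with $\pi^\star$ replaced by $\pi^{\mathsf{tr}}$ on the train side and $\pi^{\mathsf{te}}$ on the test side, apply \Cref{lem:symmetrically-regularized-equilibrium} verbatim as a purely empirical statement, split the late-stop event through the comparator $\bar\pi$ attaining $\Delta_{\mathsf{off}}$, and balance $K=\Lambda$ exactly as in \Cref{cor: misspecified optimal rates}. Two cosmetic points: the step $\widehat d_M(\bar\pi)-\widehat d_M(\pi_0)\le \widehat d_M(\bar\pi)$ follows from the nonnegativity of $\widehat d_M(\pi_0)$, not its minimality, and the test-side union bound can leave out $\pi_0$ only because $\pi_0$ depends solely on $\mathcal D_{\mathsf{train}}$, which is independent of $\mathcal D_{\mathsf{test}}$.
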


\section{Proof of Concept}
\label{sec:experiments}
We evaluate $\SeqRejectron$ in the LunarLander-v3 environment \citep{towers2024gymnasium}, with the goal of empirically characterizing the completeness-soundness tradeoff: the algorithm should stop minimally on train while reliably detecting out-of-distribution dynamics and stopping at test time.

\paragraph{Experimental setup.}
The step cost $c_h$ is a quadratic penalty on distance to the landing pad, velocity and angular velocity with a cap at 1. If the lander ever crashes, it incurs the maximum step-wise cost for the remainder of the episode. Costs are then normalized to a maximum episode cost of 1. For the dynamics shift, the source domain $M$ is windless and uses a reduced initial random impulse, while the target domain $N$ applies a rightward wind force. Thus the environment shift consists of the addition of wind and the stronger initial impulse. The expert $\pi^{\star}$ is a stochastic state-feedback controller, parameterized by a neural network on the raw LunarLander observation features, tuned to land in the target environment, but not necessarily optimal under the custom cost. The base learner $\pi_0$ is fit by MLE on labeled demonstrations $\mathcal{D}_{\mathsf{train}}$ drawn only from $M$.

To ensure tractability, we replace the game-theoretic construction of \Cref{section: implementation} with an adversarial posterior-sampling heuristic. First, using the labeled training demonstrations $\mathcal{D}_{\mathsf{train}}$, we form a Bayesian posterior over the policy parameters and sample a pool of candidates approximately consistent with $\pi^{\star}$ on $M$. Next, we evaluate this pool on the unlabeled test trajectories $\mathcal{D}_{\mathsf{test}}$ and greedily select the candidate that maximizes the cumulative squared Hellinger disagreement with the base learner $\pi_0$ (since our goal is to challenge our base learner on the test data). Repeating this selection $K = 3$ times yields a validator set $\Phi$, which defines a stopping rule as in \eqref{eq: stochastic stopping time}.

\begin{figure}[ht]
    \centering
    \begin{minipage}{0.35\textwidth}
        \centering
        \includegraphics[width=\linewidth]{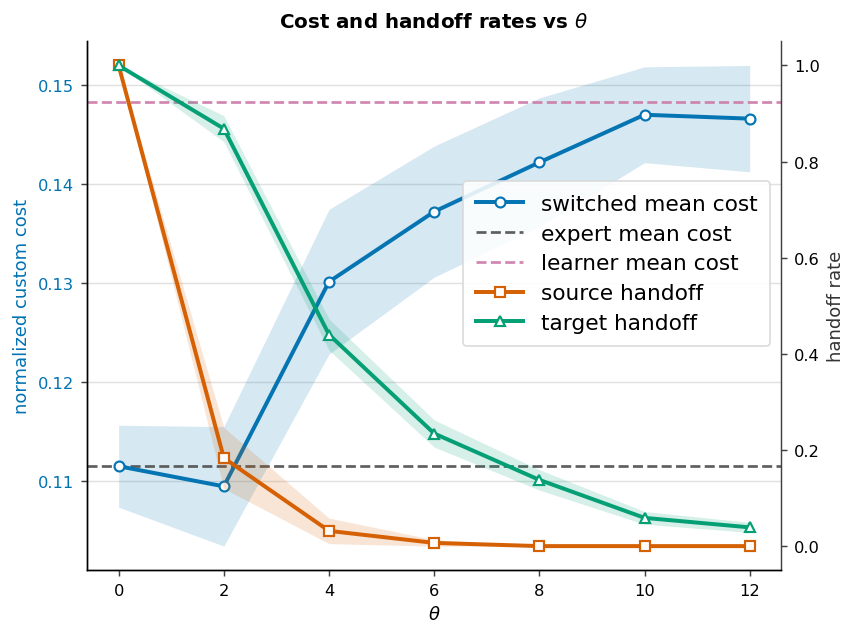}
    \end{minipage}
    \begin{minipage}{0.31\textwidth}
        \centering
        \includegraphics[width=\linewidth]{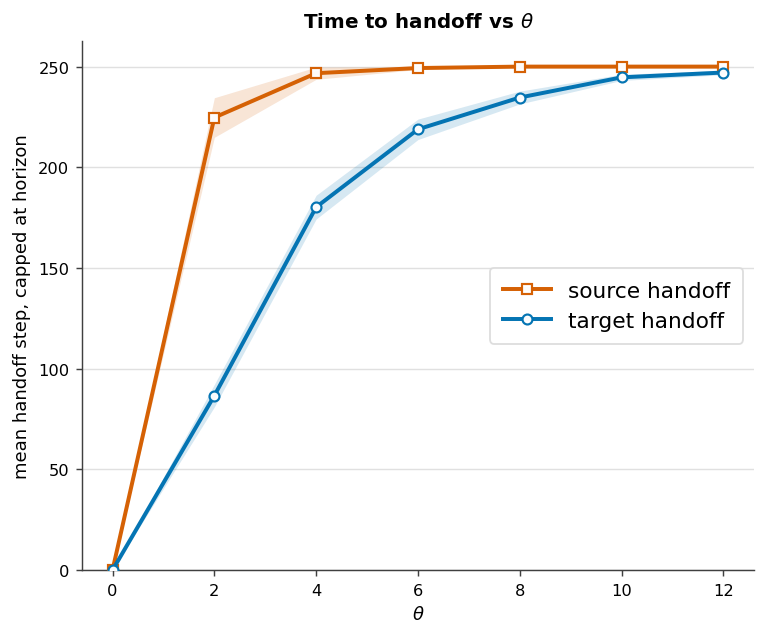}
    \end{minipage}
    \begin{minipage}{0.31\textwidth}
        \centering
        \includegraphics[width=\linewidth]{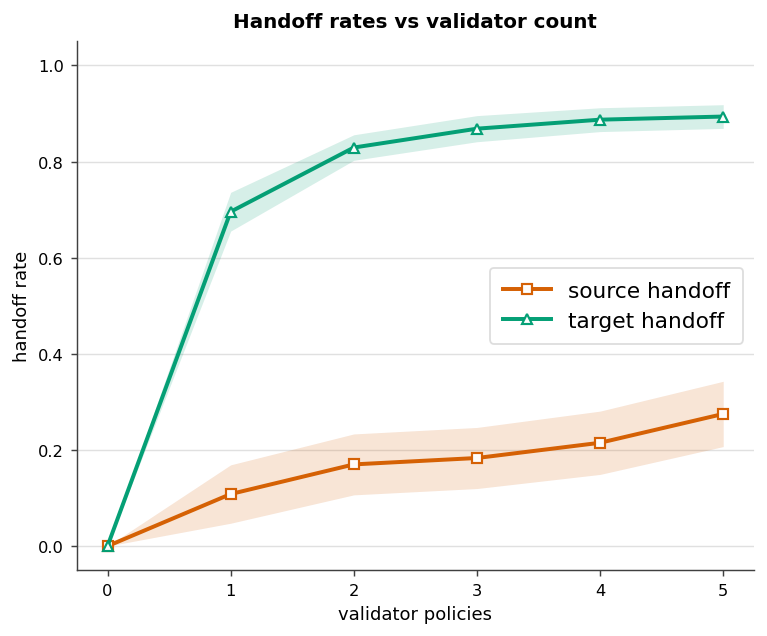}
    \end{minipage}
    \caption{Left: normalized target switched cost and source/target handoff rates as functions of the safety threshold $\theta$. Dashed horizontal lines show the expert and learner mean costs. Middle: mean handoff time as a function of $\theta$, with trajectories that never hand off capped at the episode horizon. Right: source and target handoff rates as a function of the number of validator policies, with $\theta = 2$ fixed. In all plots, shaded bands indicate standard error across trials.}
    \label{fig: crash and handoff}
\end{figure}

\paragraph{Results.} We evaluate the framework over $20$ independent trials. In each trial, we collect $m=30$ labeled training trajectories and $n=30$ unlabeled test trajectories over a range of $\theta$ threshold values. Figure~\ref{fig: crash and handoff} summarizes the resulting tradeoffs. In the left panel, the source handoff rate collapses steeply to near $0\%$, confirming that the validators rarely flag in-distribution trajectories. The target handoff rate decays far more gradually with the trajectory cost mirroring its behavior. As handoff becomes less frequent, the switched policy's cost approaches the learner's cost. The dashed black line represents the expert's average cost on the test environment. The red dashed line represents the expected cost of $\pi_0$, and is the cost that would have been obtained by running vanilla behavior cloning, without any abstention mechanism. This shows that $\SeqRejectron$ (with a suitable choice of $\theta = 2$) is able to reliably detect and abstain when it encounters environment shift.

The middle panel shows that larger $\theta$ delays handoff, as expected, since the validator allows more cumulative disagreement before switching. Finally, the right panel shows that increasing the number of validators raises handoff rates, particularly in the target environment, by making the stopping rule more sensitive to disagreement. This effect quickly saturates, suggesting that $\SeqRejectron$ is somewhat robust to the choice of validator set size.

\section{Conclusion}
\label{sec: conclusion}
We introduced selective imitation learning, a framework in which a learner may abstain mid-trajectory when its training data is uninformative about the test environment. Our algorithm, $\SeqRejectron$, certifies trajectory prefixes via a sparse set of validator policies, yielding horizon-free sample complexity for deterministic classes under sparse costs, and analogous guarantees for stochastic classes. We studied the asymmetric objective of completeness on training and soundness on test, and established an $\Omega(1/\epsilon^3)$ lower bound in the single-step setting that matches the labeled-sample cost-driven exponent. The framework extends to misspecified policy classes, with guarantees that degrade gracefully with the best-in-class misspecification.

Several directions remain open. On the computational side, the stochastic setting does not yet admit an oracle-efficient reduction analogous to the deterministic setting, and we leave the development of practical surrogates for the cumulative Hellinger objective to future work. On the statistical side, the variance-driven $\tilde{O}(\epsilon^{-5})$ labeled-sample rate for stochastic policies remains unmatched by our lower bound, leaving a gap between the cost-driven and variance-driven regimes. Finally, while our framework treats the test dynamics as entirely unknown, it would be natural to incorporate partial structural knowledge---such as a prior or bounded TV distance, density ratio, or parametric shift---to tighten the stopping rate on the test side. We hope this work establishes abstention as a principled and theoretically grounded response to unanticipated dynamics shift.

\subsection*{Acknowledgments}
The authors gratefully acknowledge support from NSF Award CCF:AF:2504016 and a Schmidt Sciences AI2050 Early Career Fellowship.

\bibliography{references}

\newpage
\appendix
\tableofcontents
\newpage

\section{Additional Related Work}
\label{sec: additional related work}
\subsection{Background on imitation learning and reinforcement learning}

\paragraph{Reinforcement learning.} The standard framework for sequential decision-making is the Markov decision process (MDP)~\citep{sutton1998reinforcement}, where an agent maximizes cumulative reward by interacting with an environment governed by unknown transition dynamics. In the offline setting, where the agent must learn from a fixed dataset without further environment interaction, pessimism-based approaches~\citep{rashidinejad2021bridging, kumar2020conservative, levine2020offline} provide guarantees when the dataset covers the target policy. Our work operates in a similarly offline regime, but with a key difference: we observe datasets from \emph{two} environments and provide PAC-style guarantees without any online interaction with the test environment.

\paragraph{Imitation learning.} The dominant paradigm for learning from demonstrations is \emph{behavior cloning} (BC), which reduces imitation learning to supervised prediction~\citep{pomerleau1988alvinn, ross2010efficient}. A central limitation of BC is \emph{compounding error}: small per-step mistakes cause the learner to drift to states outside the expert's distribution, where further errors compound. Interactive methods address this by querying the expert on the learner's own rollouts: DAgger~\citep{ross2011reduction} reduces regret to $O(\epsilon H)$ by collecting on-policy corrections, and subsequent work gave game-theoretic formulations~\citep{swamy2021moments} and adversarial approaches via occupancy-measure matching~\citep{ho2016generative, ziebart2008maximum}. Imitation from observation~\citep{torabi2018behavioral} studies the special case where only state sequences (not actions) are available, which is similar to our assumption of unlabeled test trajectories, but does not provide theoretical guarantees.

\paragraph{Sample complexity of imitation learning.} A precise theoretical understanding of BC's sample complexity has emerged over the past several years. \citet{rajaraman2020toward} established fundamental limits: offline BC requires $\Omega(H^2/\epsilon)$ trajectories in the worst case, while online methods can achieve $O(H/\epsilon)$, and \citet{swamy2022minimax} showed that replay estimation achieves minimax-optimal online IL by reducing empirical variance in the expert visitation distribution. This apparent gap between offline and online IL was recently revisited by \citet{foster2024behavior}, who showed that under weight-sharing, log-loss BC achieves \emph{horizon-independent} sample complexity via a trajectory-level Hellinger analysis, whenever cumulative costs are controlled and the policy class shares parameters across time steps; this matches the minimax rate even among interactive algorithms. Our work builds directly on this Hellinger-based analysis, extending it to the environment-shift setting where test dynamics may differ arbitrarily from training, and replacing the standard regret objective with a stopped regret that accounts for the possibility of abstention.

\subsection{Background on selective classification and learning with distribution shift}

\paragraph{Learning under distribution shift.} The classical approach to distribution shift is \emph{covariate shift adaptation}, where the marginal input distribution changes but the labeling rule is preserved. \citet{shimodaira2000improving} introduced importance-weighting as a principled correction, and \citet{sugiyama2007covariate} gave an importance-weighted cross-validation procedure for model selection under this shift. On the theoretical side, \citet{ben2010theory} and \citet{mansour2009domain} bound target error in terms of source error and a divergence between domains. However, \citet{ben2012hardness} establish that without further assumptions on the relationship between source and target---such as bounded density ratios or overlapping supports---successful domain adaptation is impossible in general. These hardness results make clear that any distribution-free guarantee under arbitrary shift must either make structural assumptions or give up on committing to an action everywhere.

\paragraph{Selective classification and PQ learning.} \emph{Selective classification}~\citep{chow2003optimum, el2010foundations, geifman2017selective} offers a way out: by allowing a classifier to abstain on inputs it cannot reliably label, one can trade coverage for accuracy. Recent work on partial deferral for sequence prediction studies one-time handoff to an expert after a chosen point in the sequence, which is closely related in spirit to our expert-handoff view of abstention \citep{rayan2025learningpartiallydefersequences}; however, they do not consider environment shift, which is the central focus of this work. Closest to our formal setup, \citet{goldwasser2020beyond} introduced the \emph{PQ learning} framework, showing that abstention is precisely the mechanism that circumvents the hardness of arbitrary distribution shift. Given labeled samples from $P$ and unlabeled samples from $Q$, PQ learning asks for a selective classifier with low error on $Q$ and low rejection rate on $P$---with no assumptions on the relationship between $P$ and $Q$. Their algorithm, $\mathsf{Rejectron}$, iteratively builds validator classifiers that agree on $P$ but disagree on $Q$, and abstains wherever such disagreement exists. Subsequent work~\citep{kalai2021efficient} showed that the computational complexity lies in between PAC learning and Agnostic PAC learning. Another closely related model is the Testable Distribution Shift (TDS) \cite{klivans2023testable} which allows for rejecting the entire test distribution instead of selectively predicting. There has been a flurry of recent work designing computationally efficient algorithms for these models under assumptions only on the train distribution ~\citep{klivans2024learning,chandrasekaran2024efficient,goel2024tolerant}. Our formulation can be viewed as a sequential analogue of PQ learning (\Cref{sec: problem setup}), and the validator-based construction of $\SeqRejectron$ can be thought of as a generalization of $\mathsf{Rejectron}$ to trajectory-level certification.

\subsection{Robustness, uncertainty quantification, and domain adaptation in RL}

\paragraph{Sim-to-real transfer and domain randomization.}
The \emph{reality gap}---the mismatch between simulated and real-world appearance, physics, and dynamics---is a central obstacle to deploying learned policies on physical systems~\citep{tobin2017domain}. 

The dominant mitigation is \emph{domain randomization}: training over a broad distribution of simulated environments so the real world falls within the training support. This has been applied to visual parameters~\citep{tobin2017domain, sadeghi2016cad2rl, james2019sim}, physical parameters such as friction, mass, and actuator delays~\citep{peng2018sim, tan2018sim}, and scaled to dexterous manipulation and Rubik's-cube solving~\citep{andrychowicz2020learning, akkaya2019solving}. Complementing these empirical milestones, recent literature has formalized the theoretical foundations of this approach. Specifically, these works have bounded the sim-to-real gap for uniform domain randomization~\citep{chen2021understanding}, established the method's optimal asymptotic sample efficiency~\citep{fujinami2025domain} for linear systems, and proved the statistical consistency of data-driven, offline domain randomization~\citep{fickingerstatistical}

A complementary line of work closes the gap \emph{adaptively}: \cite{chebotar2019closing} iteratively refine the randomization distribution using real rollouts; \cite{yu2017preparing} and \cite{kumar2021rma} condition policies on online system-identification modules that infer dynamics at deployment; \cite{rusu2017sim} use progressive networks for visual transfer; and \cite{mehta2020active} steer randomization toward maximally informative environments. \cite{codevilla2019exploring} document the resulting failure modes of behavior cloning under sim-to-real shift, motivating principled responses to out-of-distribution conditions. 

All of these methods attempt to \emph{bridge} the reality gap. Our work is complementary: rather than correcting for an unknown shift, we learn to \emph{detect} when dynamics have changed too much for reliable execution and abstain, with distribution-free PAC guarantees that require no assumptions on the magnitude or structure of the mismatch.

\paragraph{Robust reinforcement learning.} Robust RL seeks policies that perform well under worst-case dynamics drawn from a prespecified uncertainty set. The foundational framework is the robust MDP, in which nature adversarially selects transitions within a rectangular uncertainty set; \cite{iyengar2005robust} and \cite{nilim2005robust} independently showed that such problems admit tractable dynamic-programming solutions, and \cite{wiesemann2013robust} extended this to more general uncertainty sets and ambiguity models. In the deep RL setting, \cite{pinto2017robust} train a protagonist against an adversary that applies destabilizing forces (RARL), and \cite{rajeswaran2016epopt} optimize over an ensemble of source models (EPOpt) to guard against model error. \cite{zhang2020robust} study robustness to observation and action perturbations in deep RL, and \cite{tessler2019action} isolate the action-perturbation setting. On the theoretical side, \cite{panaganti2022sample} and \cite{shi2024distributionally} establish sample complexity bounds for learning robust policies from offline data under $(\mathbf{s},\mathbf{a})$-rectangular and distributionally robust uncertainty sets, respectively. All of these methods require the learner to commit ahead of time to an uncertainty set or dynamics prior---implicitly assuming knowledge of the shift's structure or magnitude.

\paragraph{Imitation learning under dynamics mismatch.} Classical IL theory assumes that the learner and expert share the same transition dynamics. A growing body of work relaxes this assumption.

Several approaches use adversarial methods to match occupancy measures across environments: \cite{ho2016generative} introduce GAIL, which frames IL as a distribution-matching problem and provides a foundation for dynamics-robust extensions; \cite{fu2017learning} show that adversarial IRL (AIRL) can recover reward functions that transfer under moderate dynamics shift; and \cite{kostrikov2019imitation} propose a distribution-correction method (DAC) that compensates for off-policy mismatch via density-ratio reweighting.

Other works directly target the setting where expert and learner operate under different dynamics. \cite{gangwani2020state} and \cite{desai2020imitation} use adversarial state-distribution matching and transition-dynamics alignment respectively. \cite{viano2021robust} bound performance degradation as a function of the $\ell_1$ distance between transition kernels, and \cite{chae2022robust} train policies that are robust across a family of perturbed MDPs (RIME). \cite{eysenbach2020off} compensate for dynamics mismatch by modifying the reward with a classifier that distinguishes source from target transitions, though this requires online RL access to the target domain. \cite{lauffer2025imitation} address the covariate shift that arises when a learned policy diverges into unfamiliar dynamics, using on-policy expert corrections for multi-turn LM agents. Finally, \cite{espinosa2025efficient} study the misspecification setting in which the expert falls outside the learner's policy class.

All of these methods attempt to \emph{correct for} or \emph{tolerate} dynamics mismatch via reward modification, distribution matching, or robust optimization, and most require either interactive access to the target environment or assumptions on the magnitude of the shift. Our work is complementary: we make no assumptions on the dynamics and instead learn, from offline data alone, to \emph{detect} when the shift is too large to act reliably and abstain, with distribution-free PAC guarantees on both the stopping rate and the stopped regret.

\paragraph{Control-theoretic safety and reachability.} Classical approaches to safe deployment ground safety guarantees in known system dynamics. Hamilton-Jacobi reachability~\citep{tomlin2000game, bansal2017hamilton} computes the exact set of states from which a safety constraint can be guaranteed, and control barrier functions (CBFs)~\citep{ames2019control} enforce forward invariance of a safe set via Lyapunov-style certificates. Prajna and Jadbabaie~\citep{prajna2004safety} give analogous barrier certificates for hybrid systems. These methods offer strong formal guarantees, but require an accurate model of the transition dynamics.

\paragraph{Uncertainty-aware deployment.}
A natural response to distribution shift is to monitor epistemic uncertainty at deployment and abstain or intervene when it is high; MC dropout~\citep{gal2016dropout} and deep ensembles~\citep{lakshminarayanan2017simple} are the dominant practical tools for obtaining such estimates. These estimates have been deployed directly in safety-critical settings: \cite{lutjens2019safe} use MC dropout and bootstrapped ensembles to modulate collision-avoidance policies when epistemic uncertainty is high, and \cite{kahn2017uncertainty} propagate learned uncertainty estimates through a model-based planner for robot navigation. However, the thresholds used to trigger abstention in these methods are typically heuristic. \cite{neggatu2025evaluation} use uncertainty estimates to switch between an offline RL policy and a more conservative behavior-cloning policy, but do not prove theoretical guarantees, unlike ours. Several works \citep{haider2023out, prashant2025guaranteeing} frame out-of-distribution shift in RL as changes in the transition dynamics, but the proposed methods do not rigorously bound the downstream regret.

Conformal prediction (see \cite{angelopoulos2023conformal} for a foundational treatment) offers a distribution-free alternative with rigorous finite-sample coverage guarantees. \cite{tibshirani2019conformal} extend the framework to covariate shift via importance weighting; and \cite{gibbs2021adaptive} develop online conformal methods that adapt to distribution shift as it occurs. Recent work has extended these ideas to sequential and dynamical settings. \cite{lindemann2023safe} use conformal prediction to construct probabilistic prediction regions for safe motion planning in dynamic environments; \cite{dixit2023adaptive} apply adaptive conformal prediction to quantify multi-step-ahead uncertainty for MPC among dynamic agents; \cite{lindemann2023conformal} give conformal runtime verification guarantees for signal temporal logic specifications; and \cite{lee2024single} establish conformal coverage guarantees from a single trajectory of temporally correlated data generated by an unknown stochastic dynamical system. 

All of these methods share our high-level goal of detecting when a deployed agent has ventured outside its training support and should stop. Ensemble and conformal methods differ from ours in a fundamental way, however: ensemble approaches rely on heuristic variance thresholds to trigger abstention, while conformal methods provide rigorous marginal coverage for \emph{prediction sets} at each step. Neither directly bounds the regret accumulated along a trajectory before abstention. Our approach avoids both heuristics and prediction sets, instead providing frequentist PAC-style guarantees directly on the stopping time that bound the stopped regret in the test environment.

\section{Additional Results and Proofs from the Deterministic Section}
\subsection{Supporting technical lemmas}

\begin{lemma}[Prefix coupling for deterministic policies]
\label{lemma:prefix-coupling}
Let $\pi$ and $\pi^{\star}$ be deterministic policies on a horizon-$H$ MDP with pre-action filtration $\mathcal{G}_h = \sigma(s_1, a_1, \dots, s_{h-1}, a_{h-1}, s_h)$. Define the first-deviation time
\[
    \tau_{\pi, \{\pi^{\star}\}} \coloneqq \min\bigl\{h \in [H] : \pi_h(s_h) \neq \pi^{\star}_h(s_h)\bigr\},
\]
with the convention $\min\emptyset = H+1$. Let $\tau$ be any stopping time adapted to $\{\mathcal{G}_h\}$, and define $h^{\star} \coloneqq \min(\tau, \tau_{\pi, \{\pi^{\star}\}})$, and define the stopped sigma-algebra
\[
    \mathcal{G}_{h^{\star}} \coloneqq \bigl\{A \subseteq \Omega : A \cap \{h^{\star} = h\} \in \mathcal{G}_h \text{ for every } h \in [H+1]\bigr\}.
\]
Intuitively, $\calG_{h^{\star}}$ represents the information available up until the (random) time $h^{\star}$. Then, for any environment $M$, every event $\mathcal{E} \in \mathcal{G}_{h^{\star}}$ satisfies $\Pr_{M,\pi}[\mathcal{E}] = \Pr_{M,\pi^{\star}}[\mathcal{E}]$. Consequently, for every bounded $\mathcal{G}_{h^{\star}}$-measurable random variable $Z$, we also have $\E_{M,\pi}[Z] = \E_{M,\pi^{\star}}[Z]$.
\end{lemma}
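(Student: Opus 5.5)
The plan is to reduce the claim to cylinder events and compare the path probabilities under the two policies directly. Since $\calS,\calA$ are countable, every event in $\mathcal{G}_{h^\star}$ is a countable disjoint union of its intersections with $\{h^\star = h\}$, $h\in[H+1]$. Each piece lies in $\mathcal{G}_h$, so it is a countable union of \emph{pre-action histories} $(s_1,a_1,\dots,s_{h-1},a_{h-1},s_h)$; for $h=H+1$ these are full trajectories. By countable additivity, it therefore suffices to prove one statement. Fix $h$ and a history $x=(s_1,a_1,\dots,s_h)$ such that the cylinder $C_x$ of trajectories extending $x$ is contained in $\{h^\star=h\}$. The claim to prove is then $\Pr_{M,\pi}[C_x]=\Pr_{M,\pi^\star}[C_x]$.

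First, note that $\{h^\star = h\}$ is $\mathcal{G}_h$-measurable. Indeed, $\tau$ is a stopping time, and $\tau_{\pi,\{\pi^\star\}}$ is also a stopping time, because $\{\tau_{\pi,\{\pi^\star\}}=h\}$ depends only on $s_1,\dots,s_h$. Hence membership of $x$ in $\{h^\star=h\}$ is determined by $x$ alone. Next, write out the path probability:
\[
\Pr_{M,\pi}[C_x]=P_0(s_1)\prod_{k=1}^{h-1}\mathbf 1[a_k=\pi_k(s_k)]\,P_k(s_{k+1}\mid s_k,a_k).
\]
The same formula holds for $\pi^\star$, with the indicators $\mathbf 1[a_k=\pi^\star_k(s_k)]$; the factor for step $h$ is absent because no action is taken at step $h$. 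For $h=H+1$, include all $H$ action factors.

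The key step is to show that on $\{h^\star=h\}$, for every $k<h$, we have $k<\tau_{\pi,\{\pi^\star\}}$ and hence $\pi_k(s_k)=\pi^\star_k(s_k)$. This holds because $h^\star\le\tau_{\pi,\{\pi^\star\}}$, so no disagreement occurs before step $h$. The two indicator products therefore coincide on every such history, and the cylinder probabilities agree. Summing over histories and over $h$ gives $\Pr_{M,\pi}[\mathcal E]=\Pr_{M,\pi^\star}[\mathcal E]$. The expectation statement then follows by approximating a bounded $\mathcal{G}_{h^\star}$-measurable $Z$ by simple functions, or, with countable spaces, by writing $Z$ directly as a countable sum over atoms.

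The main obstacle is handling the boundary case $h=H+1$ cleanly, together with the indexing of actions versus states. At $h=H+1$, full trajectories include the action $a_H$. There we need $\tau_{\pi,\{\pi^\star\}}=H+1$, so that $\pi_H(s_H)=\pi^\star_H(s_H)$, and this holds because $h^\star=H+1$ forces both times to equal $H+1$. I would also check that the decomposition of $\mathcal{E}$ into $\mathcal{G}_h$-cylinders is legitimate, i.e.\ that $\mathcal{G}_h$ is generated by countably many atoms, which holds for countable spaces.
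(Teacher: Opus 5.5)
Your proposal is correct and follows essentially the same route as the paper. Both arguments check that $h^\star$ is a stopping time, slice $\mathcal{E}$ along $\{h^\star = h\}$, and observe that each slice is a countable union of pre-action prefixes whose probabilities coincide under $\pi$ and $\pi^\star$, because the two policies agree on $s_1,\dots,s_{h-1}$ (and also on $s_H$ in the slice $h=H+1$). Your explicit indicators $\mathbf{1}[a_k=\pi_k(s_k)]$ make rigorous a point the paper's product formula leaves implicit: prefixes containing off-policy actions have probability zero under both laws.
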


\begin{proof}
We first check that $h^{\star}$ is a stopping time. For each $k \in [H]$, the event $\{\tau_{\pi, \{\pi^{\star}\}} \le k\} = \bigcup_{j=1}^{k}\{\pi_j(s_j) \neq \pi^{\star}_j(s_j)\}$ belongs to $\mathcal{G}_k$ because each $s_j$ is $\mathcal{G}_j \subseteq \mathcal{G}_k$-measurable and $\pi_j, \pi^{\star}_j$ are deterministic; for $k=H+1$ the claim is trivial. Hence $\tau_{\pi, \{\pi^{\star}\}}$ is a stopping time, and so is $h^{\star} = \min(\tau, \tau_{\pi, \{\pi^{\star}\}})$.

It remains to show that $\pi$ and $\pi^{\star}$ induce the same distribution on $\mathcal{G}_{h^{\star}}$-measurable events. Fix $k \in [H]$ and any pre-action prefix $\mathbf{t}_k = (s_1, a_1, \dots, s_{k-1}, a_{k-1}, s_k)$ with $h^{\star}(\mathbf{t}_k) = k$. On the event $\{h^{\star} = k\}$ we have $k \le \tau_{\pi, \{\pi^{\star}\}}$, so $\pi_j(s_j) = \pi^{\star}_j(s_j)$ for all $j < k$, and therefore\begin{align*}
    \Pr_{M,\pi}[(s_1, a_1, \dots, s_{k-1}, a_{k-1}, s_k) = \mathbf{t}_k,\, h^{\star} = k] &= P_0(s_1)\prod_{j=1}^{k-1} P_j(s_{j+1} \mid s_j, \pi_j(s_j)) \\
    &= \Pr_{M,\pi^{\star}}[(s_1, a_1, \dots, s_{k-1}, a_{k-1}, s_k) = \mathbf{t}_k,\, h^{\star} = k].
\end{align*}
When $k = H+1$, the same argument applied to full trajectories shows that the probabilities also agree on the slice $\{h^{\star} = H+1\}$, since $\pi_j(s_j) = \pi^{\star}_j(s_j)$ for every $j \in [H]$ there. For any $\mathcal{E} \in \mathcal{G}_{h^{\star}}$, the slice $\mathcal{E} \cap \{h^{\star} = k\}$ is $\mathcal{G}_k$-measurable, so its probability is a sum of prefix probabilities of the form above; summing over $k$ gives $\Pr_{M,\pi}[\mathcal{E}] = \Pr_{M,\pi^{\star}}[\mathcal{E}]$. Because the two laws are equal on $\mathcal{G}_{h^{\star}}$, their expectations agree for every bounded $\mathcal{G}_{h^{\star}}$-measurable random variable.
\end{proof}

\begin{lemma}[General statement of \Cref{lemma:sparse-valid-validator-distribution}]\label{lemma: generalized sparse selective policies}
    Let $M$ be an positive integer and let $\mathcal{T} \subseteq [M]^n$ be any set of vectors. For any target $\epsilon \in (0, 1)$, set $K = \lceil 1/\epsilon \rceil$. There exists a probability distribution $q^{\star}$ over subsets $S \subseteq \mathcal{T}$ of size at most $K$ with the following two properties.
    \begin{enumerate}
        \item \textbf{(Coverage)} For every $\tau \in \mathcal{T}$:
        \[
            \E_{S \sim q^{\star}}\left[\frac{1}{n}\sum_{i=1}^{n}
            \mathbf{1}\left[\min_{v \in S} v_i > \tau_i\right]\right] < \epsilon.
        \]
        \item \textbf{(Sparsity)} The support of $q^{\star}$ consists entirely of sets of size at most $\lceil 1/\epsilon \rceil$.
    \end{enumerate}
\end{lemma}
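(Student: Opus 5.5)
We prove \Cref{lemma: generalized sparse selective policies} by a minimax argument over a finite zero-sum game, followed by an exchangeability bound for the dual. Since $\mathcal{T} \subseteq [M]^n$ is finite, the game is finite. The \emph{challenger} picks $\tau \in \mathcal{T}$. The \emph{validator} picks a multiset $(v_1,\dots,v_K) \in \mathcal{T}^K$, which we identify with the set $S=\{v_1,\dots,v_K\}$; this set has size at most $K$ because repeats collapse. The payoff to the challenger is
\[
u(S,\tau) \coloneqq \frac{1}{n}\sum_{i=1}^n \mathbf{1}\Bigl[\min_{v\in S} v_i > \tau_i\Bigr].
\]
Both strategy sets are finite, so von Neumann's minimax theorem applies to the mixed extension:
\[
\min_{q\in\Delta(\mathcal{T}^K)}\ \max_{\tau\in\mathcal{T}} \E_{S\sim q}[u(S,\tau)] \;=\; \max_{p\in\Delta(\mathcal{T})}\ \min_{q}\ \E_{\tau\sim p}\E_{S\sim q}[u(S,\tau)].
\]
The minimizing $q$ is then the desired $q^\star$. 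Every set in its support has size at most $K$, which gives the sparsity claim for free.

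It therefore suffices to bound the right-hand side. Fix any $p\in\Delta(\mathcal{T})$, and let the validator respond with $q=p^{\otimes K}$, i.e.\ draw $v_1,\dots,v_K$ i.i.d.\ from $p$. Let $\tau\sim p$ be independent of these and write $v_0\coloneqq\tau$. By linearity we may fix a coordinate $i$. The event $\min_{k\ge1} v_{k,i} > v_{0,i}$ says that $v_0$ is the \emph{strict unique} minimizer of $(v_{0,i},\dots,v_{K,i})$. Since $v_0,\dots,v_K$ are exchangeable, the events $E_j=\{v_j \text{ is the strict unique minimizer in coordinate } i\}$ for $j=0,\dots,K$ all have equal probability. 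They are also pairwise disjoint, because at most one index can be the strict unique minimizer. Hence $\Pr[E_0]\le 1/(K+1)$. Averaging over $i$ gives
\[
\E[u(S,\tau)]\le \frac{1}{K+1},
\]
and since $K=\lceil 1/\epsilon\rceil \ge 1/\epsilon$, we have $1/(K+1)<\epsilon$.

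Combining these steps, the game value is at most $1/(K+1)$. The min-max optimal $q^\star$ therefore satisfies $\E_{S\sim q^\star}[u(S,\tau)]\le 1/(K+1)<\epsilon$ for every $\tau\in\mathcal{T}$, which is the coverage claim.

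The only delicate point is that ties must not be counted. This is precisely why the event is strict ($>$): if several indices tie for the minimum, none of them is a strict unique minimizer, so the disjointness argument still goes through. The minimax step itself is routine because everything is finite. The one care needed is to note that the validator's pure strategies are $K$-tuples, and that collapsing duplicates only shrinks the set while leaving $u$ unchanged.
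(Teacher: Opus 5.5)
Your proposal is correct and follows the paper's proof essentially step for step. It uses the same finite zero-sum game between a challenger over $\mathcal{T}$ and a validator over $\mathcal{T}^K$, von Neumann's minimax theorem, and the exchangeability bound under the i.i.d.\ response $p^{\otimes K}$ showing that the challenger is the strict minimum with probability at most $1/(K+1) < \epsilon$; your explicit remarks on ties and on collapsing duplicate tuple entries into a set of size at most $K$ only spell out points the paper leaves implicit.
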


\begin{proof}
    Define a finite zero-sum game in which the maximizer chooses $p \in \Delta(\mathcal{T})$ and the minimizer chooses $q \in \Delta(\mathcal{T}^K)$, with payoff
    \[
        \phi(p, q) := \E_{\tau \sim p}\, \E_{S \sim q}\left[
        \frac{1}{n}\sum_{i=1}^{n}
        \mathbf{1}\left[\min_{v \in S} v_i > \tau_i\right]
        \right].
    \]
    Since $\phi$ is bilinear in $(p, q)$, von Neumann's minimax theorem gives
    \[
        \min_{q}\max_{p}\; \phi(p,q) = \max_{p}\min_{q}\; \phi(p,q).
    \]

    To bound the maximin, fix any $p \in \Delta(\mathcal{T})$ and let $p^K$ denote the strategy of forming $S = \{v_1, \dots, v_K\}$ by drawing $v_1, \dots, v_K$ i.i.d.\ from $p$. Then
    \begin{align*}
        \min_{q}\; \phi(p, q) \leq \phi(p, p^K)
        = \frac{1}{n}\sum_{i=1}^{n}
          \Pr_{\tau,\, v_1,\dots,v_K \sim p}\left(
          \min_{j \in [K]} (v_j)_i > \tau_i
          \right).
    \end{align*}
    For a fixed coordinate $i$, the event $\{\min_j (v_j)_i > \tau_i\}$ holds if and only if $\tau_i$ is strictly the smallest value among the $K+1$ i.i.d.\ draws $\tau_i, (v_1)_i, \dots, (v_K)_i$. Since these draws are exchangeable, each is equally likely to be the strict minimum, and since these $K+1$ events are mutually exclusive, each occurs with probability at most $1/(K+1)$. Therefore
    \[
        \phi(p, p^K) \leq \frac{1}{n}\sum_{i=1}^{n}\frac{1}{K+1} = \frac{1}{K+1}.
    \]
    Since this holds for every $p \in \Delta(\mathcal{T})$, we conclude $\max_{p}\min_{q}\, \phi(p, q) \leq 1/(K+1)$.

    Combining with the minimax equality and using $K = \lceil 1/\epsilon \rceil$ so that $K + 1 > 1/\epsilon$:
    \[
        \min_{q}\max_{p}\; \phi(p, q) = \max_{p}\min_{q}\; \phi(p, q) \leq \frac{1}{K+1} < \epsilon.
    \]
    Hence there exists $q^{\star} \in \Delta(\mathcal{T}^K)$ with $\max_{p}\, \phi(p, q^{\star}) < \epsilon$. Taking $p = \delta_\tau$ for any fixed $\tau \in \mathcal{T}$ yields the coverage condition. The sparsity condition holds because every $S$ in the support of $q^{\star}$ satisfies $|S| \leq K = \lceil 1/\epsilon \rceil$.
\end{proof}

\subsection{Analysis of stepwise reduction to PQ-learning}\label{subsec: per step pq}

Assume $\calA = \{0,1\}$ and $\pi^{\star}\in \Pi$ is deterministic. For each $h \in [H]$, let
$\mu_h^M$ and $\mu_h^N$ denote the expert's step-$h$ state marginals under $M$ and $N$. We apply
Rejectron~\citep{goldwasser2020beyond} at each step $h$ with labeled examples $(s, \pi_h^\star(s))$
for $s \sim \mu_h^M$ and unlabeled examples from $\bar\mu_h \coloneqq \frac{1}{2}\mu_h^M +
\frac{1}{2}\mu_h^N$, yielding a predictor-selector pair $(\tilde\pi_h, g_h)$. Define $\tilde\pi =
(\tilde\pi_1,\dots,\tilde\pi_H)$, $g = (g_1,\dots,g_H)$, and $\tau_g = \min\{h : g_h(s_h) = 0\}$.
We run each step-$h$ learner with confidence $\delta/H$ and union-bound over $h$, so all
guarantees below hold simultaneously with probability at least $1-\delta$.

For tolerance $\gamma$, the PQ guarantee gives for every $h \in [H]$:
\begin{align}
\Pr_{s\sim\mu_h^M}[g_h(s)=0] &\le \gamma, \label{eq:pq-train-reject}\\
\Pr_{s\sim\bar\mu_h}[g_h(s)=1,\, \tilde\pi_h(s)\neq\pi_h^\star(s)] &\le \gamma.
\label{eq:pq-mix-error}
\end{align}
Since $\bar\mu_h$ averages $\mu_h^M$ and $\mu_h^N$, both marginals inherit a $2\gamma$ accepted-error
bound from \eqref{eq:pq-mix-error}.

\paragraph{Abstention on $M$.} Let $E_h$ be the event that step $h$ is the \emph{first} time the learner either abstains or makes an unrejected mistake and let
$
A_h:=\bigcap_{t=1}^{h-1}\Bigl\{g_t(s_t)=1,\ \tilde{\pi}_t(s_t)=\pi_t^\star(s_t)\Bigr\}.
$
Thus \(A_h\) is the event that, before step \(h\), the learner has neither abstained nor made an unrejected mistake. By definition,
\[
E_h
=
A_h\cap
\Bigl(
\{g_h(s_h)=0\}
\cup
\{g_h(s_h)=1,\ \tilde{\pi}_h(s_h)\neq \pi_h^\star(s_h)\}
\Bigr).
\]
Since the event inside the intersection is \(\mathcal G_h\)-measurable, by \Cref{lemma:prefix-coupling}, we may change measure from \(\Pr_{M,\tilde{\pi}}\) to \(\Pr_{M,\pi^\star}\):
\begin{align*}
    \Pr_{M,\tilde{\pi}}(E_h)&=
\Pr_{M,\pi^\star}\!\left(
A_h\cap
\Bigl(
\{g_h(s_h)=0\}
\cup
\{g_h(s_h)=1,\ \tilde{\pi}_h(s_h)\neq \pi_h^\star(s_h)\}
\Bigr)
\right)\\
&\le
\Pr_{M,\pi^\star}\!\Bigl(
\{g_h(s_h)=0\}
\cup
\{g_h(s_h)=1,\ \tilde{\pi}_h(s_h)\neq \pi_h^\star(s_h)\}
\Bigr)\\
&\le
\Pr_{M,\pi^\star}\!\bigl(g_h(s_h)=0\bigr)
+
\Pr_{M,\pi^\star}\!\bigl(g_h(s_h)=1,\ \tilde{\pi}_h(s_h)\neq \pi_h^\star(s_h)\bigr)\\
&=\Pr_{s\sim \mu_h^M}\!\bigl[g_h(s)=0\bigr]
+
\Pr_{s\sim \mu_h^M}\bigl[g_h(s)=1, \tilde{\pi}_h(s)\neq \pi_h^\star(s)\bigr]\le 3\gamma.
\end{align*}
Since abstaining implies $E_h$ occurs for some $h$, we have
\[
    \alpha_M(\tilde\pi, g) \le \sum_{h=1}^H \Pr(E_h) \le 3H\gamma.
\]

\paragraph{Regret on $N$.}
Let \(F_h\) be the event that the first unrejected misstep occurs at step \(h\) prior to any abstention. Similar to the argument above, if
$
A_h:=\bigcap_{t=1}^{h-1}\{g_t(s_t)=1,\ \tilde{\pi}_t(s_t)=\pi_t^\star(s_t)\},
$
then
\[
F_h
=
A_h\cap
\{g_h(s_h)=1,\ \tilde{\pi}_h(s_h)\neq \pi_h^\star(s_h)\}.
\]
We may first change measure from \(\Pr_{N,\tilde\pi}\) to \(\Pr_{N,\pi^\star}\) while retaining the factor \(A_h\), and then drop \(A_h\). Thus
\begin{align*}
\Pr_{N,\tilde\pi}(F_h)
&\le
\Pr_{N,\pi^\star}\!\bigl(g_h(s_h)=1,\ \tilde{\pi}_h(s_h)\neq \pi_h^\star(s_h)\bigr) =
\Pr_{s\sim \mu_h^N}\!\bigl[g_h(s)=1,\ \tilde{\pi}_h(s)\neq \pi_h^\star(s)\bigr]\le 2\gamma.
\end{align*}
Since the learner incurs zero stopped regret unless an unrejected misstep occurs—which costs at most \(\costmax\)—we obtain

\[
    \Regret_N(\tilde\pi, g; c) \le \costmax \sum_{h=1}^H \Pr(F_h) \le 2\costmax H\gamma.
\]

\paragraph{Sample complexity.} Setting $\gamma = \Theta(\epsilon / (H\costmax))$ to achieve
$\Regret_N \le \epsilon$, and using the PQ sample complexity of $\tilde{O}(\log|\Pi|/\gamma^2)$
per step, yields $m = n = \tilde{O}\left(\frac{H^2 \costmax^2 \log|\Pi|}{\epsilon^2}\right)$ trajectories (since one trajectory provides an example for each step). This $H^2$ penalty is unavoidable in this approach: the reduction certifies $H$ isolated decisions rather than a single trajectory prefix, and the horizon factor enters through the per-step tolerance regardless of the cost structure.

\subsection{Proof of \Cref{lemma:sparse-valid-validator-distribution}}

Let $M = H+1$. For each policy $\pi \in \Pi_{\mathsf{version}}$, let $v^\pi \in [M]^n$ be its empirical stopping-time vector on the $n$ test trajectories, where the $j$-th coordinate is given by $(v^\pi)_j = \tau_{\pi_0, \{\pi\}}(T_j)$. Let $\mathcal{T} = \{v^\pi : \pi \in \Pi_{\mathsf{version}}\}$ be the finite set of all such vectors.

Applying \Cref{lemma: generalized sparse selective policies} to the set $\mathcal{T}$ with target error $\epsilon = \rho$ implies the existence of a distribution $q^{\star}$ over subsets $\Phi \subseteq \Pi_{\mathsf{version}}$ satisfying:
\begin{itemize}
    \item For all $\pi \in \Pi_{\mathsf{version}}$:
    $\E_{\Phi \sim q^{\star}}\left[\frac{1}{n}\sum_{j=1}^n
    \mathbf{1}[\tau_{\pi_0, \Phi}(T_j) > \tau_{\pi_0, \{\pi\}}(T_j)]\right]
    < \rho$.
    \item For all $\Phi \in \mathsf{supp}(q^{\star})$: $|\Phi| \leq \lceil 1/\rho \rceil$.
\end{itemize}
This is exactly the $q^{\star}$ required by \Cref{lemma:sparse-valid-validator-distribution}.

\subsection{Proof of \Cref{theorem: deterministic rejection sample complexity}}\label{subsec: proof of deterministic rejection sample complexity}

We prove the theorem through four steps: constructing the validator distribution, bounding the empirical risk, generalizing to the population, and bounding the regret on $N$.

Throughout, let $K = \lceil \log_2(5/\delta) \rceil \lceil 2/\eta \rceil$ denote the maximum ensemble size and\[
    Z\coloneqq (K+1)\log|\Pi| + \log\frac{5}{\delta}
\]
the complexity term appearing in generalization bounds.

\paragraph{Step 1 (Existence of $q^{\star}$).} By \Cref{prop:validator-noregret}, \Cref{alg:sparse-validator-dist} yields a $(\eta/2 + \xi)$-valid distribution $q^\star$ over subsets of $\Pi_{\mathsf{version}}$ such that every $\Phi' \in \mathsf{supp}(q^\star)$ satisfies $|\Phi'| \le \left\lceil \frac{2}{\eta}\right\rceil$ with probability $1-\delta/5$.

\paragraph{Step 2 (Bound the empirical risk).} The output $\Phi = \bigcup_{i=1}^k \Phi_i$ is the union of $k = \lceil \log_2(5/\delta) \rceil$ independent draws from $q^{\star}$, so $|\Phi| \leq k \cdot \lceil 2/\eta \rceil = K$ deterministically. By Markov's inequality applied to the $\eta/2 + \xi$ coverage guarantee,\begin{align*}
     &\Pr_{\Phi_i \sim q^{\star}}\left(
    \frac{1}{n}\sum_{j=1}^n \mathbf{1}[\tau_{\pi_0,\Phi_i}(T_j) >
    \tau_{\pi_0,\{\pi^{\star}\}}(T_j)] > \eta + 2\xi\right)\\
    &\leq \frac{\E_{\Phi_i \sim q^{\star}}\left[\frac{1}{n}\sum_{j=1}^n
    \mathbf{1}[\tau_{\pi_0,\Phi_i}(T_j) > \tau_{\pi_0,\{\pi^{\star}\}}(T_j)]\right]}{\eta + 2\xi}
    < \frac{\eta/2 + \xi}{\eta  + 2\xi} = \frac{1}{2}.
\end{align*}
We refer to the term on the left hand side as the \emph{risk}. Since $\tau_{\pi_0,\Phi}(T_j) = \min_i \tau_{\pi_0,\Phi_i}(T_j)$, then if any $\Phi_i$ achieves risk $\leq \eta$, then so does $\Phi$. Equivalently, $\Phi$ only has high risk if all $k$ independent draws have high risk. Since these draws are independent, \[
    \Pr\left(\frac{1}{n}\sum_{j=1}^n \mathbf{1}[\tau_{\pi_0,\Phi}(T_j) >
    \tau_{\pi_0, \{\pi^{\star}\}}(T_j)] > \eta +2\xi\right)
    \leq \prod_{i=1}^k \frac{1}{2}
    \leq \left(\frac{1}{2}\right)^{\lceil \log_2(5/\delta) \rceil}
    \leq \delta/5.
\]
For the training data, all policies in $\Phi \cup \{\pi_0\}$ belong to $\Pi_{\mathsf{version}}$ by construction, meaning they agree with $\pi^{\star}$ on every observed state-action pair in $\mathcal{D}_{\mathsf{train}}$. Therefore, for all $i \in [m]$:\[
    \tau_{\pi_0,\Phi}(S_i) = \tau_{\pi_0,\{\pi^{\star}\}}(S_i) = H+1,
\]
where the first equality holds because no policy in $\Phi$ ever disagrees with $\pi_0$ on any training trajectory (since both agree with $\pi^{\star}$ on all observed states), and the second holds because $\pi^{\star}$ and $\pi_0$ both belong to $\Pi_{\mathsf{version}}$ and hence agree on all observed actions. Since $\tau_{\pi_0,\Phi}(S_i) = H+1 > H$ for all $i$, the empirical stopping rate $\frac{1}{m}\sum_{i=1}^m \mathbf{1}[\tau_{\pi_0,\Phi}(S_i) \leq H] = 0$, and since $\tau_{\pi_0,\Phi}(S_i) =\tau_{\pi_0,\{\pi^{\star}\}}(S_i)$ for all $i$, the empirical risk $\frac{1}{m}\sum_{i=1}^m \mathbf{1}[\tau_{\pi_0,\Phi}(S_i) > \tau_{\pi_0,\{\pi^{\star}\}}(S_i)] = 0$ as well.

\paragraph{Step 3 (Bound generalization error on the expert's distribution).}
We use uniform convergence over $\mathcal{H}_K = \{(\pi_0, \Phi) : \pi_0 \in \Pi,\ \Phi \subseteq \Pi, |\Phi| \leq K\}$, which satisfies $|\mathcal{H}_K| \leq |\Pi|^{K+1}$ as $\pi_0$ and each of the at most $K$ elements of $\Phi$ are chosen from $\Pi$. We identify three failure events each occurring with probability at most $\delta/5$.

\textit{Test risk generalization.} Let $Y(\pi_0, \Phi) = \mathbf{1}[\tau_{\pi_0, \Phi}(T) > \tau_{\pi_0, \{\pi^{\star}\}}(T)]$, so that $\hat{\E}[Y] = \frac{1}{n}\sum_{j=1}^n Y(T_j)$ is the empirical test risk over $\mathcal{D}_{\mathsf{test}}$ and $\E_N^{\pi^{\star}}[Y] = \Pr_{N,\pi^{\star}}[\tau_{\pi_0, \Phi}(T) > \tau_{\pi_0, \{\pi^{\star}\}}(T)]$ is the true test risk. Since $Y \in \{0,1\}$, $\widehat{\mathrm{Var}}(Y) \leq \hat{\E}[Y^2] = \hat{\E}[Y]$. By the empirical Bernstein inequality applied to each fixed hypothesis in $\mathcal{H}_K$, then union-bounded over all $|\mathcal{H}_K|$ hypotheses, with probability at least $1-\delta/4$, every $(\pi_0, \Phi) \in \mathcal{H}_K$ simultaneously satisfies:
\[
    \E_N^{\pi^{\star}}[Y] \leq \hat{\E}[Y] + \sqrt{\frac{2\hat{\E}[Y]\log(5|\mathcal{H}_K|/\delta)}{n}} + \frac{3\log(5|\mathcal{H}_K|/\delta)}{n}.
\]
Since $|\mathcal{H}_K| \leq |\Pi|^{K+1}$, we have $\log(5|\mathcal{H}_K|/\delta) \leq (K+1)\log|\Pi| + \log(5/\delta) = Z$. Since the empirical step guarantees $\hat{\E}[Y] \leq \eta + 2\xi$, this implies:
\[
    \E_N^{\pi^{\star}}[Y] \leq \eta +2\xi + \sqrt{\frac{2(\eta+2\xi)Z}{n}} + \frac{3Z}{n}.
\]

\textit{Train abstention generalization.} The true stopping rate of $(\pi_0, \Phi)$ on expert trajectories is $\Pr_{M,\pi^{\star}}[\tau_{\pi_0, \Phi}(S) \leq H]$ and the empirical stopping rate over $\mathcal{D}_{\mathsf{train}}$ is $\frac{1}{m}\sum_{i=1}^m \mathbf{1}[\tau_{\pi_0,\Phi}(S_i) \leq H]$. For any fixed $(\pi_0, \Phi) \in \mathcal{H}_K$ with $\Pr_{M,\pi^{\star}}[\tau_{\pi_0, \Phi}(S) \leq H] > \frac{Z}{m}$, each of the $m$ i.i.d.\ draws $S_i \sim (M, \pi^{\star})$ independently fails to trigger abstention with probability at most $1-\frac{Z}{m}$, so the probability of observing zero empirical abstention is at most $(1-\frac{Z}{m})^m \leq e^{-Z}$. A union bound over all $|\mathcal{H}_K| \leq |\Pi|^{K+1}$ hypotheses gives failure probability at most $|\mathcal{H}_K|e^{-Z} \leq \delta/5$.

\textit{Train risk generalization.} The true train risk of $(\pi_0, \tau_{\pi_0,\Phi})$ is $\Pr_{M,\pi^{\star}}[\tau_{\pi_0, \Phi}(S) > \tau_{\pi_0, \{\pi^{\star}\}}(S)]$ and the empirical train risk is $\frac{1}{m}\sum_{i=1}^m \mathbf{1}[\tau_{\pi_0, \Phi}(S_i) > \tau_{\pi_0, \{\pi^{\star}\}}(S_i)]$. Just like above, any hypothesis with $\Pr_{M,\pi^{\star}}[\tau_{\pi_0, \Phi}(S) > \tau_{\pi_0, \{\pi^{\star}\}}(S)] > \frac{Z}{m}$ achieves zero empirical train risk with probability at most $(1-\frac{\mathcal{Z}}{m})^m \leq e^{-\mathcal{Z}}$, and a union bound over $\mathcal{H}_K$ gives failure probability at most $\delta/5$.

Taking a union bound over all failure events, with probability at least $1-\delta$, uniform convergence holds simultaneously for all three failure events across all hypotheses in $\mathcal{H}_K$. 

\paragraph{Step 4 (Bound stopping rate and regret on the learned policy's distribution).} We first address the stopping rate on $M$ under $(\pi_0, \tau_{\pi_0, \Phi})$. The generalization steps for the train abstention and train risk, respectively, give $\Pr_{M,\pi^{\star}}[\tau_{\pi_0, \Phi} \leq H] \leq \frac{Z}{m}$ and $\Pr_{M,\pi^{\star}}[\tau_{\pi_0, \Phi} > \tau_{\pi_0, \{\pi^{\star}\}}] \leq \frac{Z}{m}$. Define the safe event $B = \{\tau_{\pi_0,\Phi} \leq \tau_{\pi_0, \{\pi^{\star}\}}\}$, so $\Pr_{M,\pi^{\star}}[B^c] \leq \frac{Z}{m}$. Then,
\begin{align*}
    \Pr_{M,\pi_0}[\tau_{\pi_0, \Phi} \leq H] &\leq \Pr_{M,\pi_0}[\{\tau_{\pi_0, \Phi} \leq H\} \cap B] + \Pr_{M,\pi_0}[B^c].
\end{align*}
To apply \Cref{lemma:prefix-coupling}, let $h^{\star} = \min(\tau_{\pi_0, \Phi}, \tau_{\pi_0,\{\pi^{\star}\}})$. Both $\{\tau_{\pi_0, \Phi} \leq H\} \cap B$ and $B^c$ lie in $\mathcal{G}_{h^{\star}}$: for each $h \in [H]$, the slice $\{\tau_{\pi_0, \Phi} \leq H\} \cap B \cap \{h^{\star} = h\}$ equals $\{\tau_{\pi_0, \Phi} = h\} \cap \{\tau_{\pi_0,\{\pi^{\star}\}} \geq h\}$, which is $\mathcal{G}_h$-measurable since $\tau_{\pi_0, \Phi}$ is a stopping time and $\{\tau_{\pi_0,\{\pi^{\star}\}} \geq h\}$ depends only on whether $\pi_0$ and $\pi^{\star}$ agree on $s_1,\ldots,s_{h-1}$ (the case $h = H + 1$ is trivial as the slice is the empty set); the same argument applies to $B^c$. \Cref{lemma:prefix-coupling} therefore gives
\begin{align*}
    \Pr_{M,\pi_0}[\{\tau_{\pi_0, \Phi} \leq H\} \cap B] = \Pr_{M,\pi^{\star}}[\{\tau_{\pi_0, \Phi} \leq H\} \cap B] \leq \Pr_{M,\pi^{\star}}[\tau_{\pi_0, \Phi} \leq H] \leq \frac{Z}{m},
\end{align*}
and $\Pr_{M,\pi_0}[B^c] = \Pr_{M,\pi^{\star}}[B^c] \leq \frac{Z}{m}$. Combining yields $\Pr_{M,\pi_0}[\tau_{\pi_0, \Phi} \leq H] \leq \frac{2Z}{m}$.

To bound the stopped regret on $N$, note that $\Pr_{N,\pi_0}[B^c] = \Pr_{N,\pi^{\star}}[B^c] \leq \eta + 2\xi + \sqrt{\frac{2(\eta+2\xi)Z}{n}} + \frac{3Z}{n}$ by the test generalization guarantee and \Cref{lemma:prefix-coupling}. Decomposing the expected stopped cost of $(\pi_0, \tau_{\pi_0, \Phi})$ over $B$ and $B^c$,
\begin{align*}
 \Regret_N(\pi_0, \tau_{\pi_0, \Phi};c)
 &= \E_N^{\pi_0}\left[\Bigl(\sum_{h=1}^{\tau_{\pi_0, \Phi}-1} c_h\Bigr)\mathbf{1}(B)\right] + \E_N^{\pi_0}\left[\Bigl(\sum_{h=1}^{\tau_{\pi_0, \Phi}-1} c_h\Bigr)\mathbf{1}(B^c)\right] - \E_N^{\pi^{\star}}\left[\sum_{h=1}^{\tau_{\pi_0, \Phi} - 1} c_h\right] \\
 &\leq \E_N^{\pi^{\star}}\left[\Bigl(\sum_{h=1}^{\tau_{\pi_0, \Phi}-1} c_h\Bigr)\mathbf{1}(B)\right] + \costmax \cdot \Pr_{N,\pi_0}[B^c] - \E_N^{\pi^{\star}}\left[\sum_{h=1}^{\tau_{\pi_0, \Phi}-1} c_h\right] \\
 &= -\E_N^{\pi^{\star}}\left[\Bigl(\sum_{h=1}^{\tau_{\pi_0, \Phi}-1} c_h\Bigr)\mathbf{1}(B^c)\right] + \costmax \cdot \Pr_{N,\pi_0}[B^c] \\
 &\leq \costmax \cdot \Pr_{N,\pi_0}[B^c]\tag{costs are nonnegative}\\
 &\leq \costmax\left(\eta + 2\xi + \sqrt{\frac{2(\eta + 2\xi) Z}{n}} + \frac{3Z}{n}\right),
\end{align*}
where the first inequality applies \Cref{lemma:prefix-coupling} on $B$ (because $\tau_{\pi_0, \Phi} \leq \tau_{\pi_0, \{\pi^{\star}\}}$ on $B$, the selective policy's expected prefix cost $\sum_{h=1}^{\tau_{\pi_0, \Phi}-1} c_h$ matches the expert's expected cost over that prefix) and bounds the total cost on $B^c$ by $\costmax$. 

\subsection{Transferring the completeness guarantee to the test environment}\label{subsec: test-side abstention}

While our objective does not require bounding the test-side stopping rate $\alpha_N(\pi_0,\tau) \coloneqq \Pr_{N,\pi_0}[\tau \le H]$ distribution-free, it is natural to ask how this rate scales if the environment shift is bounded. Let $d_{\pi^\star}^{\mathrm{state}}(M,N)$ denote the total variation distance between the state-trajectory distributions induced by the expert in $M$ and $N$.

Define the safe event $B = \{\tau \le \tau_{\pi_0, \{\pi^{\star}\}}\}$ and let $h^{\star} = \min(\tau, \tau_{\pi_0, \{\pi^{\star}\}})$. Both $\{\tau \le H\} \cap B$ and $B^c$ lie in $\mathcal{G}_{h^{\star}}$ by the same measurability argument as in the proof of \Cref{theorem: deterministic rejection sample complexity}. By a union bound and two applications of \Cref{lemma:prefix-coupling},
\begin{align*}
    \alpha_N(\pi_0,\tau_{\pi_0, \Phi}) 
    &\le \Pr_{N,\pi_0}[\{\tau_{\pi_0, \Phi} \le H\} \cap B] + \Pr_{N,\pi_0}[B^c] \\
    &= \Pr_{N,\pi^\star}[\{\tau_{\pi_0, \Phi} \le H\} \cap B] + \Pr_{N,\pi^\star}[B^c] \\
    &\le \Pr_{N,\pi^\star}[\tau_{\pi_0, \Phi} \le H] + \Pr_{N,\pi^\star}[\tau_{\pi_0, \Phi} > \tau_{\pi_0, \{\pi^{\star}\}}].
\end{align*}
For a fixed validator set, the abstention event $\{\tau \le H\}$ is strictly a measurable function of the state trajectory. Thus, its probability under the expert can differ between the test environment ($N$) and training environment ($M$) by at most the TV distance:
\[
    \Pr_{N,\pi^\star}[\tau_{\pi_0, \Phi} \le H] \le \Pr_{M,\pi^\star}[\tau_{\pi_0, \Phi} \le H] + d_{\pi^\star}^{\mathrm{state}}(M,N).
\]
Finally, substituting the high-probability bounds from the proof of \Cref{theorem: deterministic rejection sample complexity} (where the train-side abstention is bounded by $Z/m$ and the late-stop risk by $\eta + \sqrt{2\eta Z/n} + 3Z/n$) yields
\[
    \alpha_N(\pi_0,\tau) 
    \le 
    \underbrace{\frac{Z}{m} + d_{\pi^\star}^{\mathrm{state}}(M,N)}_{\text{PQ-style TV transfer}} 
    + 
    \underbrace{\eta + \sqrt{\frac{2\eta Z}{n}} + \frac{3Z}{n}}_{\text{Sequential penalty}}.
\]
The first two terms are the sequential analogue of the rejection-rate transfer in batch selective classification \citep{goldwasser2020beyond}: the abstention event is a fixed measurable set, so its probability shifts by at most the TV distance. The final term has no batch analogue; it arises because the deployed policy's actions can steer the trajectory out of the safe prefix before the validators stop it.

\subsection{Proof of \Cref{prop:validator-noregret}}

By the definition of $\bar q$, the expected coverage for any fixed $\pi \in \Pi_{\mathsf{version}}$ is exactly the average payoff over all rounds:
\[
    \E_{\Phi \sim \bar q}\left[ \frac{1}{n}\sum_{j=1}^n \mathbf{1}\left[\tau_{\pi_0,\Phi}(T_j) > \tau_{\pi_0,\{\pi\} }(T_j)\right] \right] = \frac{1}{T}\sum_{t=1}^T u^t(\pi).
\]
By the definition of external regret, this average payoff is bounded by the algorithm's average expected payoff plus the regret term:
\[
    \frac{1}{T}\sum_{t=1}^T u^t(\pi) \le \frac{1}{T}\sum_{t=1}^T \E_{\pi' \sim p^t}[u^t(\pi')] + \frac{\mathrm{Reg}_T}{T}.
\]
It therefore remains to control the average expected payoff of the mixed strategy $p^t$ against the sampled validator set $\Phi^t$. 

Conditioned on the history $\mathcal{H}_{t-1}$ up to round $t-1$, the distribution $p^t$ is fixed, and $\Phi^t$ is obtained by drawing $K=\lceil 1/\rho \rceil$ policies i.i.d.\ from $p^t$. The exchangeability argument from \Cref{lemma:sparse-valid-validator-distribution} bounds the conditional expected payoff:
\[
    \E\Big[ \E_{\pi' \sim p^t}[u^t(\pi')] \;\Big|\; \mathcal{H}_{t-1} \Big] \le \frac{1}{K+1} < \rho.
\]
Since the expected payoffs $\E_{\pi' \sim p^t}[u^t(\pi')]$ are bounded in $[0,1]$, we can apply the Azuma--Hoeffding inequality. With probability at least $1-\delta$, the empirical average concentrates around its conditional expectation upper bound:
\[
    \frac{1}{T}\sum_{t=1}^T \E_{\pi' \sim p^t}[u^t(\pi')] \le \rho + \sqrt{\frac{\log(1/\delta)}{2T}}.
\]
Combining these bounds completes the proof.

\subsection{Oracle-efficient constructions of a validator distribution}\label{subsec: proof of oracle efficiency}

As mentioned in \Cref{section: implementation}, the generic no-regret construction of a sparse validator distribution is inefficient. In this section, we instantiate the generalized FTPL framework of \cite{dudik2020oracle} to obtain an oracle-efficient implementation of the no-regret strategy required by \Cref{alg:sparse-validator-dist}.

The generalized FTPL framework guarantees efficient no-regret learning, provided an online game can be reduced to a linearly perturbed offline optimization problem. To achieve this, we must map our abstract policy class into a structured vector space via a translation matrix. Crucially, this matrix must satisfy two structural conditions to ensure the random perturbations effectively smooth the leader's choices: admissibility (the matrix must have bounded entries and separate distinct policies to control the geometric complexity of the action space) and implementability (any payoff generated by the adversary must be expressible as a linear combination of the matrix's columns). If these conditions hold, FTPL can simulate the online game using synthetic datasets passed to a standard offline optimization oracle.

We now formulate the online learning problem by defining the strategy space $X$, the adversary action space $Y$, and the translation matrix $\Gamma$. This is very similar to the online learning problem constructed in \Cref{section: implementation}, except slightly more pedantic to ensure that our game satisfies the technical conditions required by generalized FTPL.
\begin{itemize}
    \item \textbf{Strategy space $X$:} Quotient $\Pi_{\mathsf{version}}$ by empirical stopping-time equivalence: $\pi \sim \pi' \iff \tau_{\pi_0,\{\pi\}}(T_j)=\tau_{\pi_0, \{\pi'\}}(T_j)$ for all $j\in[n]$. Let $X$ denote the set of equivalence classes, and for $\pi \in X$ write $x_\pi \coloneqq (\tau_{\pi_0,\{\pi\}}(T_1),\dots,\tau_{\pi_0,\{\pi\}}(T_n)) \in \{1,\dots,H+1\}^n$. 
    
    \item \textbf{Adversary action space $Y$:} Let $Y \coloneqq \{1,\dots,H+1\}^n$. For any validator set $\Phi$, we define its empirical cutoff vector $h^\Phi \coloneqq (\tau_{\pi_0,\Phi}(T_1),\dots,\tau_{\pi_0,\Phi}(T_n))$, which is an element of $Y$.
\end{itemize}

We define the payoff function $f \colon X \times Y \to [0,1]$ as $f(\pi,h)\coloneqq \frac{1}{n}\sum_{j=1}^n \mathbf{1}[x_{\pi,j}<h_j]$. Observe that for any validator set $\Phi$, the payoff evaluates to $f(\pi,h^\Phi)=\frac{1}{n}\sum_{j=1}^n \mathbf{1}[\tau_{\pi_0,\Phi}(T_j)>\tau_{\pi_0,\{\pi\}}(T_j)]$. Thus, the validator-distribution problem is exactly equivalent to online learning over $X$ against adversary actions in $Y$ with payoff function $f$.

Finally, we define the generalized FTPL translation matrix $\Gamma$. Its rows are indexed by the strategy space $X$, its columns by the environment time steps $(j,c)\in [n]\times \{1,\dots,H\}$, and its entries map the policies into structured cumulative features:
\[
    \Gamma_{\pi,(j,c)} \coloneqq \mathbf{1}[\tau_{\pi_0,\{\pi\}}(T_j) \leq c].
\]
We now prove that $\Gamma$ satisfies the technical conditions necessary for generalized FTPL.

\begin{lemma}[Admissibility and implementability of the cutoff matrix]
\label{lemma:cutoff-matrix}
The matrix $\Gamma$ is $1$-admissible and implementable with complexity $1$, in the sense of \cite{dudik2020oracle}.
\end{lemma}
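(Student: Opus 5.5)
The plan is to verify the two conditions of \cite{dudik2020oracle} directly from the definitions of $X$, $Y$, $f$ and $\Gamma$. Both checks are short combinatorial verifications; the main point is to pick the right single adversary action for each column.

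\textbf{Admissibility.} I take $1$-admissibility to mean that the rows of $\Gamma$ are pairwise distinct, and that any two distinct entries within a column differ by at least $1$.
\begin{itemize}
\item The entry condition is immediate: every entry $\Gamma_{\pi,(j,c)}$ lies in $\{0,1\}$, so any two differing entries in a column differ by exactly $1$. Each column also takes at most two distinct values, so any bound on the number of values per column is satisfied trivially.
\item For distinctness of rows, I will show that the row of $\pi$ determines $x_\pi$. For each $j$, the map $c \mapsto \mathbf{1}[x_{\pi,j}\le c]$ is a step function on $\{1,\dots,H\}$. We recover $x_{\pi,j}$ as the smallest $c$ with $\Gamma_{\pi,(j,c)}=1$, or as $H+1$ if no such $c$ exists.
\item Hence two rows of $\Gamma$ coincide only if $x_\pi = x_{\pi'}$. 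By the definition of $X$ as a quotient under empirical stopping-time equivalence, this forces $\pi$ and $\pi'$ to be the same element of $X$. So the rows are distinct.
\end{itemize}

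\textbf{Implementability with complexity $1$.} For each column $(j,c)$ I must exhibit a weighted set of adversary actions $\{(w_k,y^{(k)})\}$ with $\Gamma_{\pi,(j,c)} = \sum_k w_k f(\pi,y^{(k)})$ for every $\pi\in X$. I will use a single action.
\begin{itemize}
\item Take $y^{(j,c)}\in Y$ with $y_j = c+1$ and $y_{j'} = 1$ for all $j'\neq j$. This is a valid element of $Y=\{1,\dots,H+1\}^n$ because $c\le H$.
\item For $j'\neq j$, the indicator $\mathbf{1}[x_{\pi,j'}<1]$ vanishes, since every stopping time is at least $1$.
\item Therefore $f(\pi,y^{(j,c)}) = \frac1n \mathbf{1}[x_{\pi,j} < c+1] = \frac1n\mathbf{1}[x_{\pi,j}\le c]$.
\item With weight $w=n$, this gives exactly $\Gamma_{\pi,(j,c)}$. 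Each column is therefore implemented by a dataset of size $1$, which is complexity $1$.
\end{itemize}

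The only step needing care is matching our normalization conventions with the precise definitions in \cite{dudik2020oracle}. In particular, I need to confirm that nonnegative weights such as $n$ are allowed, and whether the admissibility parameter refers to the minimal gap between entries or to the number of distinct values per column. The construction satisfies either reading, since entries are binary and the weight is a positive scalar.
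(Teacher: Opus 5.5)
Your proof is correct and follows essentially the same route as the paper. Binary entries give the gap-$1$ condition, and each column $(j,c)$ is implemented by the single action with $y_j=c+1$, $y_{j'}=1$ and weight $n$; the only cosmetic difference is that for row distinctness you recover $x_\pi$ from its row (the first $c$ with entry $1$), whereas the paper directly exhibits the separating column $c^\star=\tau_{\pi_0,\{\pi'\}}(T_j)-1$, and the two arguments are equivalent.
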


\begin{proof}
We first prove admissibility. Let $\pi,\pi' \in X$ be distinct rows. Since they represent distinct stopping-time vectors, there exists some trajectory $j$ such that $\tau_{\pi_0,\{\pi\}}(T_j)\neq \tau_{\pi_0,\{\pi'\}}(T_j)$. Assuming without loss of generality that $\tau_{\pi_0,\{\pi\}}(T_j)<\tau_{\pi_0,\{\pi'\}}(T_j)$, we set $c^{\star}\coloneqq \tau_{\pi_0,\{\pi'\}}(T_j) - 1$. Then $\Gamma_{\pi,(j,c^\star)}=1$ and $\Gamma_{\pi',(j,c^\star)}=0$. Thus any two distinct rows are separated by some column. Because all matrix entries lie in $\{0,1\}$, distinct values in a column differ by exactly $1$, making $\Gamma$ $1$-admissible.

To prove implementability, we must show that each column can be simulated by a synthetic adversary action. Fix a column $(j,c)$ and define the adversary action $y^{(j,c)}\in Y$ by $y^{(j,c)}_j=c+1$ and $y^{(j,c)}_{j'}=1$ for all $j'\neq j$. Then for any $\pi\in X$:\begin{align*}
    f(\pi,y^{(j,c)}) &= \frac{1}{n}\left( \mathbf{1}\left[\tau_{\pi_0,\{\pi\}}(T_j)<c+1\right] + \sum_{j'\neq j}\mathbf{1}\left[\tau_{\pi_0,\{\pi\}}(T_{j'})<1\right] \right) \\
    &= \frac{1}{n}\,\mathbf{1}\left[\tau_{\pi_0,\{\pi\}}(T_j) \leq c\right] = \frac{1}{n}\,\Gamma_{\pi,(j,c)},
\end{align*}
since $\tau_{\pi_0,\{\pi\}}(T_{j'})\ge 1$ always. Therefore, the weighted singleton dataset $S_{j,c}\coloneqq \{(n,y^{(j,c)})\}$ implements column $(j,c)$, because for all $\pi,\pi'\in X$, $\sum_{(w,y)\in S_{j,c}} w\bigl(f(\pi,y)-f(\pi',y)\bigr) = \Gamma_{\pi,(j,c)}-\Gamma_{\pi',(j,c)}$. Thus, each column is implementable by a single synthetic adversary action.
\end{proof}

By Corollary 2.11 of \cite{dudik2020oracle}, the admissibility and implementability of $\Gamma$ guarantee the existence of an oracle-efficient, randomized no-regret maximization player. Specifically, because our translation matrix $\Gamma$ is $1$-admissible (i.e., $\delta=1$) and contains $N = nH$ columns, the generalized FTPL framework yields an expected regret bound of $O(N\sqrt{T}/\delta) = O(nH\sqrt{T})$. It remains only to identify the specific offline optimization problem required by the oracle on each round.

It remains only to identify the specific offline optimization problem required by the oracle on each round. According to the generalized FTPL algorithm \cite{dudik2020oracle}, on each round $t$, the maximization player draws independent, non-negative perturbations $\alpha_{j,c}^{\,t}$ for every column of the translation matrix. The player must then compute the perturbed best response against the empirical history of the adversary's actions—which, in our game, are the cutoff vectors $h^{\Phi^s}$ of the validator sets $\Phi^s$ sampled on previous rounds $s < t$. Formally, the oracle must solve:
\begin{equation}\label{eq:ftpl-raw-objective}
    \arg\max_{\pi\in\Pi_{\mathsf{version}}} \left\{ \sum_{s=1}^{t-1} f(\pi,h^{\Phi^s}) + \sum_{j=1}^n\sum_{c=1}^{H} \alpha_{j,c}^{\,t}\, \Gamma_{\pi,(j,c)} \right\},
\end{equation}
where the first term represents the cumulative historical payoff and the second term injects the shared linear perturbations. We now connect this oracle optimization problem to the multiple instance learning (MIL) framework of \cite{maron1997framework}.

\paragraph{Problem \eqref{eq:ftpl-raw-objective} reduces to weighted MIL over the disagreement class.} To implement the oracle efficiently, we show that the perturbed leader problem \eqref{eq:ftpl-raw-objective} can be cast as a (weighted) MIL problem. The intuition behind this reduction is straightforward: a policy stops by step $c$ on trajectory $j$ if and only if it disagrees with the base policy $\pi_0$ on \emph{at least one} state in the trajectory prefix up to step $c$. This ``at least one'' condition exactly mirrors the definition of a positive bag in MIL. By treating each trajectory prefix as a ``bag'' of states, and any disagreement with $\pi_0$ as a positive instance label, we can formally map the FTPL objective to a realizable MIL objective.

\begin{proposition}\label{prop:exact-perturbed-best-response-mil}
For the base policy $\pi_0$ and any policy $\pi \in \Pi_{\mathsf{version}}$, define the disagreement hypothesis $h_\pi(h,s) \coloneqq \mathbf{1}[\pi_h(s) \neq \pi_{0,h}(s)]$, and let $\mathcal{H}_{\pi_0} \coloneqq \{h_\pi : \pi \in \Pi_{\mathsf{version}}\}$ be the corresponding disagreement class. On round $t$, the perturbed leader problem in \eqref{eq:ftpl-raw-objective} is exactly equivalent to the following weighted realizable multiple-instance disagreement problem over $\mathcal{H}_{\pi_0}$:
\begin{equation}\label{eq:weighted-mil-objective}
\arg\max_{\pi\in\Pi_{\mathsf{version}}} \sum_{j=1}^n\sum_{c=1}^{H} w^{t-1}_{j,c}\, \mathbf{1}\left[\exists (h,s_{jh})\in B_{j,c}: h_\pi(h,s_{jh})=1\right],
\end{equation}
where the aggregated weights and prefix bags are defined as
\[ w^{t-1}_{j,c} \coloneqq \left|\left\{s<t:\tau_{\pi_0,\Phi^s}(T_j) - 1 = c\right\}\right| + n\alpha_{j,c}^{\,t}, \qquad B_{j,c} \coloneqq \{(1,s_{j1}),\dots,(c,s_{jc})\}. \]
\end{proposition}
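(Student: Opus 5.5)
The plan is to expand both terms of the objective in \eqref{eq:ftpl-raw-objective} into linear combinations of the indicators $\Gamma_{\pi,(j,c)}=\mathbf{1}[\tau_{\pi_0,\{\pi\}}(T_j)\le c]$. Then each such indicator is identified with a MIL bag-positivity event. The first step is the bag identity. By the definition \eqref{eq: validator stopping time}, $\tau_{\pi_0,\{\pi\}}(T_j)\le c$ holds if and only if there is some $h\le c$ with $\pi_h(s_{jh})\neq \pi_{0,h}(s_{jh})$. That is exactly the event $\exists (h,s_{jh})\in B_{j,c}$ with $h_\pi(h,s_{jh})=1$. Hence $\Gamma_{\pi,(j,c)}$ equals the positivity indicator of bag $B_{j,c}$ under $h_\pi$, for every $c\in[H]$.

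Next I rewrite the historical payoff. For a fixed round $s<t$, let $c_j^s \coloneqq \tau_{\pi_0,\Phi^s}(T_j)-1\in\{0,\dots,H\}$. Then
\[
f(\pi,h^{\Phi^s})=\frac1n\sum_{j=1}^n \mathbf{1}\bigl[\tau_{\pi_0,\{\pi\}}(T_j)<c_j^s+1\bigr]=\frac1n\sum_{j=1}^n \mathbf{1}\bigl[\tau_{\pi_0,\{\pi\}}(T_j)\le c_j^s\bigr].
\]
When $c_j^s\ge 1$, the summand is $\Gamma_{\pi,(j,c_j^s)}$. When $c_j^s=0$ (the validator set stops at step $1$), the summand vanishes, because $\tau\ge 1$ always. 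This matches the weight definition, since such rounds contribute to no column with $c\in[H]$. Summing over $s<t$ and regrouping by column gives
\[
\sum_{s<t} f(\pi,h^{\Phi^s})=\frac1n\sum_{j=1}^n\sum_{c=1}^H \bigl|\{s<t:\tau_{\pi_0,\Phi^s}(T_j)-1=c\}\bigr|\,\Gamma_{\pi,(j,c)}.
\]

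Adding the perturbation term, the whole objective in \eqref{eq:ftpl-raw-objective} becomes $\frac1n\sum_{j,c}\bigl(|\{s<t:\cdots\}|+n\alpha^t_{j,c}\bigr)\Gamma_{\pi,(j,c)}=\frac1n\sum_{j,c}w^{t-1}_{j,c}\Gamma_{\pi,(j,c)}$. Multiplying by the positive constant $n$ does not change the argmax. Substituting the bag identity from the first step then yields \eqref{eq:weighted-mil-objective} exactly.

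It remains to justify the word ``realizable''. The constraint $\pi\in\Pi_{\mathsf{version}}$ is preserved, and $\pi_0$ itself induces $h_{\pi_0}\equiv 0$. Together with the fact that every $\pi\in\Pi_{\mathsf{version}}$ agrees with $\pi_0$ on the training states, this means the labeled training instances are all consistently negative. So the problem is a weighted realizable MIL problem over $\mathcal{H}_{\pi_0}$.

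The only delicate point is the bookkeeping of boundary cases. These are the validator stopping time equal to $1$ (column $c=0$, which is absent) and equal to $H+1$ (column $c=H$, which is present). I also have to check that the quotient by stopping-time equivalence used in defining $X$ does not affect the argmax, which holds because the objective depends on $\pi$ only through $x_\pi$.
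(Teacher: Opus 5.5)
Your proposal is correct and follows essentially the same route as the paper: multiply by $n$, expand $f$ and $\Gamma$, regroup the historical payoffs by cutoff column to obtain $w^{t-1}_{j,c}$, identify $\mathbf{1}[\tau_{\pi_0,\{\pi\}}(T_j)\le c]$ with positivity of the bag $B_{j,c}$ under $h_\pi$, and use version-space membership to get $h_\pi\equiv 0$ on the training states. Your explicit treatment of the boundary case $\tau_{\pi_0,\Phi^s}(T_j)=1$, which contributes to no column, is a bookkeeping detail the paper leaves implicit.
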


\begin{proof}
Multiplying the objective in \eqref{eq:ftpl-raw-objective} by $n$ (which preserves the argmax) and expanding the definitions of $f$ and $\Gamma$ yields:
\[ \arg\max_{\pi\in\Pi_{\mathsf{version}}} \left\{ \sum_{s=1}^{t-1}\sum_{j=1}^n \mathbf{1}\left[\tau_{\pi_0,\{\pi\}}(T_j) \leq \tau_{\pi_0,\Phi^s}(T_j) - 1\right] + n\sum_{j=1}^n\sum_{c=1}^{H} \alpha_{j,c}^{\,t}\, \mathbf{1}\left[\tau_{\pi_0,\{\pi\}}(T_j)\leq c\right] \right\}. \]
Regrouping the first term by the cutoff value $c$ and combining it with the perturbation term gives the simplified objective $\sum_{j=1}^n\sum_{c=1}^{H} w^{t-1}_{j,c}\, \mathbf{1}[\tau_{\pi_0,\{\pi\}}(T_j)\leq c]$. We map this to the MIL framework using the disagreement hypothesis $h_\pi(h,s) = \mathbf{1}[\pi_h(s)\neq \pi_{0,h}(s)]$. By definition, the event $\tau_{\pi_0,\{\pi\}}(T_j)\leq c$ occurs if and only if there is some step $h \leq c$ where $\pi_h(s_{jh})\neq \pi_{0,h}(s_{jh})$. This is precisely the condition that the prefix bag $B_{j,c}$ is labeled positive by $h_\pi$, establishing \eqref{eq:weighted-mil-objective}. Finally, since any $\pi \in \Pi_{\mathsf{version}}$ matches the expert on the labeled training data, $h_\pi$ is strictly zero on those states, satisfying the negative singleton constraints of realizable MIL.
\end{proof}

\paragraph{Problem \eqref{eq:ftpl-raw-objective} reduces to weighted MIL over the original class.} 
Alternatively, for finite action spaces $\mathcal{A}$, we can avoid the binary disagreement wrapper and reduce directly over $\Pi_{\mathsf{version}}$ via data augmentation. By duplicating each state in a prefix for every alternative action $a \neq \pi_{0}(s)$, we create an augmented bag that a standard multi-class MIL solver evaluates as positive if $\pi$ outputs \emph{any} of those alternative actions. This inflates bag sizes by $|\mathcal{A}| - 1$.

\begin{proposition}[Exact perturbed best response as multi-class MIL over $\Pi_{\mathsf{version}}$]
\label{prop:exact-perturbed-best-response-mil-original}
Assume a finite action space $\mathcal{A}$. On round $t$, the perturbed leader problem in \eqref{eq:ftpl-raw-objective} is exactly equivalent to the following weighted realizable multi-class multiple-instance problem over $\Pi_{\mathsf{version}}$:
\begin{equation}\label{eq:weighted-mil-objective-original}
\arg\max_{\pi\in\Pi_{\mathsf{version}}} \sum_{j=1}^n\sum_{c=1}^{H} w^{t-1}_{j,c}\, \mathbf{1}\left[\exists ((h,s_{jh}), a)\in \tilde{B}_{j,c}: \pi_h(s_{jh})=a\right],
\end{equation}
with weights $w^{t-1}_{j,c}$ as in \Cref{prop:exact-perturbed-best-response-mil} and augmented prefix bags:
\[ \tilde{B}_{j,c} \coloneqq \left\{ \bigl((h,s_{jh}), a\bigr) \;\big|\; h \in \{1,\dots,c\},\; a \in \mathcal{A} \setminus \{\pi_{0,h}(s_{jh})\} \right\}. \]
\end{proposition}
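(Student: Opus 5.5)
The plan is to reduce to \Cref{prop:exact-perturbed-best-response-mil}, which already shows that the perturbed leader problem \eqref{eq:ftpl-raw-objective} equals
\[
\arg\max_{\pi\in\Pi_{\mathsf{version}}} \sum_{j=1}^n\sum_{c=1}^{H} w^{t-1}_{j,c}\,\mathbf{1}\bigl[\tau_{\pi_0,\{\pi\}}(T_j)\le c\bigr].
\]
This holds with the same weights $w^{t-1}_{j,c}$. It therefore suffices to prove, for every $\pi\in\Pi_{\mathsf{version}}$ and every $(j,c)$, the pointwise identity
\[
\mathbf{1}\bigl[\tau_{\pi_0,\{\pi\}}(T_j)\le c\bigr]=\mathbf{1}\bigl[\exists((h,s_{jh}),a)\in\tilde B_{j,c}:\pi_h(s_{jh})=a\bigr].
\]
Once this holds, the two objectives agree term by term, so the argmax sets coincide.

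To prove the identity, first unfold \eqref{eq: validator stopping time} with $\Phi=\{\pi\}$. The event $\tau_{\pi_0,\{\pi\}}(T_j)\le c$ holds iff there is some $h\le c$ with $\pi_h(s_{jh})\neq\pi_{0,h}(s_{jh})$. Because $\pi$ is deterministic and $\mathcal A$ is finite, $\pi_h(s_{jh})\neq\pi_{0,h}(s_{jh})$ holds iff $\pi_h(s_{jh})=a$ for some $a\in\mathcal A\setminus\{\pi_{0,h}(s_{jh})\}$. The right-hand side is exactly membership of such a pair $((h,s_{jh}),a)$ in $\tilde B_{j,c}$. Both directions are immediate: pick the witnessing $h$ and set $a=\pi_h(s_{jh})$, or conversely read off the disagreement from the witness pair.

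Next I would verify the ``realizable'' qualifier. Every $\pi\in\Pi_{\mathsf{version}}$ satisfies $\pi_h(s_{ih})=a_{ih}$ on $\mathcal D_{\mathsf{train}}$, and so does $\pi_0$. Hence, on a training state, $\pi$ never outputs an action different from $\pi_0$'s. Augmented negative bags built the same way from training states (one instance per alternative action) are therefore labeled negative by every feasible hypothesis. This mirrors the last step of \Cref{prop:exact-perturbed-best-response-mil}.

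There is no real obstacle here. The only delicate point is bookkeeping about what a multi-class MIL instance $((h,s),a)$ means: it is ``positive under $\pi$'' iff $\pi_h(s)=a$. I will fix that semantics explicitly at the start so the bag-positivity condition matches the displayed indicator. I will also note that the bag size grows by the factor $|\mathcal A|-1$, though this is not needed for the equivalence.
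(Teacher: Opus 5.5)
Your proposal is correct and follows essentially the same route as the paper. Like the paper, you reuse the regrouped objective $\sum_{j,c} w^{t-1}_{j,c}\,\mathbf{1}[\tau_{\pi_0,\{\pi\}}(T_j)\le c]$ from \Cref{prop:exact-perturbed-best-response-mil}, show that a disagreement with $\pi_0$ by step $c$ is equivalent to $\pi$ outputting some alternative action in $\tilde B_{j,c}$, and note that version-space policies never output alternative actions on training states, which gives realizability. One small remark: the indicator identity uses only determinism of $\pi$, and finiteness of $\mathcal A$ serves only to keep the augmented bags finite.
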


\begin{proof}
The simplified FTPL objective matches \Cref{prop:exact-perturbed-best-response-mil}; we only need to verify the positive bag condition. A policy $\pi$ disagrees with $\pi_0$ on the prefix up to step $c$ if and only if $\pi_h(s_{jh}) = a$ for some step $h \leq c$ and alternative action $a \in \mathcal{A} \setminus \{\pi_{0,h}(s_{jh})\}$. This exactly matches the condition that $\pi$ predicts at least one target label $a$ in the augmented bag $\tilde{B}_{j,c}$. Finally, since all $\pi \in \Pi_{\mathsf{version}}$ perfectly match the expert on the labeled training data, they never predict alternative actions on those states, naturally satisfying the negative singleton constraints of realizable MIL.
\end{proof}

\paragraph{Agnostic oracle and computational-statistical tradeoff.} So far, we have established the existence of an oracle-efficient subroutine to compute a validator distribution. However, the specific optimization problem, (weighted) MIL, is somewhat of an exotic class of problem and it is not clear such oracles may actually be implemented or approximated in practice. It turns out, however, that we can relax the requirement of a MIL oracle for $\Pi$ to a standard \emph{agnostic learning} oracle for $\Pi$ (in fact, we can relax this to a weaker oracle known as a \emph{reliable learning} oracle \citep{kalai2021efficient}). This is simply due to the stepwise reduction to PQ-learning of \Cref{subsec: per step pq}.

Of course, the cost is a degradation in our statistical guarantees. In particular, by moving from our trajectory-wise problem to the stepwise reduction, we solve an easier computational problem but pick up a dependence on $H$ in our sample complexity. We leave the question of whether this tradeoff is fundamental, or just a by-product of our algorithms, to future work.

\section{Proofs from the Stochastic Section}
\label{sec: stochastic appendix proofs}

\subsection{Proof of \Cref{theorem: stochastic rejection sample complexity}}\label{subsec: stochastic exact bound}

Let $(\pi_0,\tau^\theta_{\pi_0, \Phi})$ be the output of \Cref{alg: stochastic mdprejectron}. We first control the abstention probability on $M$, and then the stopped trajectory Hellinger distance on $N$.

\paragraph{Step 1 (Control abstention on $M$).} 
The quantity we want to bound is the probability that the cumulative Hellinger distance exceeds $\theta$. Define $\gamma^{\star}\coloneqq \frac{\log|\Pi| + \log(8/\delta)}{m}$, and let $\gamma$ be the version space radius which is used by \Cref{alg: stochastic mdprejectron}. By assumption on $m$, we have that $\gamma^{\star}\leq \gamma$. Thus, by \Cref{lemma: approximate minimizers are hellinger close}, with probability at least $1-\delta/4$, we have $\pi^{\star}\in \Pi_{\mathsf{version}}^\gamma$ and every $\pi \in \Pi_{\mathsf{version}}^\gamma$ satisfies
\[
D_H^2(\calP_M^\pi, \calP_M^{\pi^\star}) \le \frac{\gamma}{2} + \gamma^{\star}\le 1.5\gamma.
\]
Combining this with the Hellinger triangle inequality, we have that for every $\pi \in \Pi_{\mathsf{version}}^\gamma$,
\[
D_H^2(\calP_M^\pi, \calP_M^{\pi_0})
\le 2D_H^2(\calP_M^\pi, \calP_M^{\pi^\star}) + 2D_H^2(\calP_M^{\pi_0}, \calP_M^{\pi^\star})
\le 6\gamma.
\]
Now consider any validator $\pi \in \Phi \subseteq \Pi_{\mathsf{version}}^\gamma$. Let $E$ be the event that the cumulative Hellinger distance between $\pi$ and $\pi_0$ exceeds $\theta$. To bound the probability of $E$ under $\pi_0$, we introduce the geometric mixture policy $\bar\pi \propto \sqrt{\pi \pi_0}$, using the convention in \Cref{lemma: hellinger tensorization under geometric mixture policy}. By \Cref{lemma: hellinger tensorization under geometric mixture policy}, the trajectory affinity satisfies
\[
1 - D_H^2(\calP^\pi_M, \calP^{\pi_0}_M) \le \E_{T \sim \calP^{\bar \pi}_M}\left[e^{-\sum_{h=1}^H d_H^2(\pi_h, \pi_{0,h})}\right] \le \Pr_{\calP^{\bar\pi}_M}[E]e^{-\theta} + 1 - \Pr_{\calP^{\bar\pi}_M}[E].
\]
Rearranging this inequality yields $\Pr_{\calP^{\bar\pi}_M}[E] \le \frac{1}{1-e^{-\theta}} D_H^2(\calP^\pi_M, \calP^{\pi_0}_M) \le \frac{1+\theta}{\theta} D_H^2(\calP^\pi_M, \calP^{\pi_0}_M)$. To shift this probability from the mixture policy to the deployed policy $\pi_0$, we apply a Hellinger change-of-measure inequality (e.g., Lemma~3.1 of \citet{foster2024behavior}) alongside \Cref{lemma: geometric mixture hellinger bound}:
\begin{align*}
    \Pr_{\calP^{\pi_0}_M}[E]
    &\le 2\Pr_{\calP^{\bar\pi}_M}[E] + 4D_H^2(\calP^{\bar\pi}_M, \calP^{\pi_0}_M) \\
    &\le \frac{2(1+\theta)}{\theta}D_H^2(\calP^\pi_M, \calP^{\pi_0}_M) + 6D_H^2(\calP^\pi_M, \calP^{\pi_0}_M) \\
    &= \left(\frac{2}{\theta} + 8\right) D_H^2(\calP^\pi_M, \calP^{\pi_0}_M).
\end{align*}
Substituting our version space bound $D_H^2(\calP_M^\pi, \calP_M^{\pi_0}) \le 6\gamma$, the probability of $E$ under $\pi_0$ is at most $\left(\frac{12}{\theta} + 48\right)\gamma$. Note that the algorithm abstains ($\tau^\theta_{\pi_0, \Phi} \leq H$) if and only if at least one of these cumulative distances exceeds $\theta$. By a union bound over the $|\Phi| \le K_{\mathsf{ens}}$ validators,
\[
\alpha_M(\pi_0,g) \le K_{\mathsf{ens}}\left(\frac{12}{\theta}+48\right)\gamma.
\]

\paragraph{Step 2 (Bound Hellinger Distance of stopped trajectories on $N$).}

Let $M = H+1$. For each policy $\pi \in \Pi_{\mathsf{version}}^{\gamma}$, let $v^\pi \in [M]^n$ be its empirical stopping-time vector on the $n$ test trajectories, where the $j$-th coordinate is given by $(v^\pi)_j = \tau_{\pi_0, \{\pi\}}^{\theta}(T_j)$. Let $\mathcal{T} = \{v^\pi : \pi \in \Pi_{\mathsf{version}}^{\gamma}\}$ be the finite set of all such vectors.

By \Cref{lemma: generalized sparse selective policies} applied to $\mathcal{T}$ with target error $\eta/2$, there exists a distribution $q^{\star}$ over subsets $\Phi^{\prime} \subseteq \Pi_{\mathsf{version}}^{\gamma}$ satisfying $|\Phi^{\prime}| \le \lceil 2/\eta \rceil$ such that, because $\pi^{*} \in \Pi_{\mathsf{version}}^{\gamma}$:
\[
\E_{\Phi' \sim q^\star}\left[\frac{1}{n}\sum_{j=1}^n \mathbf{1}\left[\tau_{\pi_0, \Phi'}^\theta(T_j) > \tau_{\pi_0, \{\pi^\star\}}^\theta(T_j)\right]\right] \le \frac{\eta}{2}.
\]
For a validator set $\Phi$, define the empirical risk $\widehat p(\Phi) \coloneqq \frac{1}{n}\sum_{j=1}^{n} \mathbf{1}[\tau_{\pi_0, \{\pi^\star\}}^\theta(T_j) < \tau_{\pi_0, \Phi}^\theta(T_j)]$. By an identical argument to \Cref{theorem: deterministic rejection sample complexity} (Markov's inequality and multiple draws) we have $\widehat p(\Phi) \le \eta$ with probability at least $1 - \delta / 4$.

To generalize to expert trajectories in $N$, note the final validator set satisfies $|\Phi| \le K_{\mathsf{ens}}$. Consider the indicator class\begin{align*}
    \mathcal{Y}_{K_{\mathsf{ens}}} \coloneqq \{ T \mapsto \mathbf{1}[\tau_{\pi_0, \{\pi^\star\}}^\theta(T) < \tau_{\pi_0, \Phi}^\theta(T)] : \pi_0 \in \Pi, \Phi \subseteq \Pi, |\Phi| \le K_{\mathsf{ens}} \},
\end{align*}
which has cardinality at most $|\Pi|^{K_{\mathsf{ens}}+1}$. Applying the empirical Bernstein inequality with a union bound, with probability at least $1-\delta/4$, simultaneously for all such $(\pi_0,\Phi)$,
\[
p(\Phi) \le \widehat p(\Phi) + \sqrt{\frac{2\widehat p(\Phi) Z}{n}} + \frac{3Z}{n} \le \eta + \sqrt{\frac{2\eta Z}{n}} + \frac{3Z}{n},
\]
where $p(\Phi) \coloneqq \Pr_{T \sim (\pi^\star,N)}\left(\tau_{\pi_0, \{\pi^\star\}}^\theta(T) < \tau_{\pi_0, \Phi}^\theta(T)\right)$. Under the assumption $n \ge 8Z/\eta$, each correction term is at most $\eta/2$, yielding $p(\Phi) \le 2\eta$.

Finally, to transfer this to the deployed trajectories, let $\mathcal{E} \coloneqq \{\tau_{\pi_0, \{\pi^\star\}}^\theta(T) < \tau_{\pi_0, \Phi}^\theta(T)\}$ and define $\tau'(T) \coloneqq \tau_{\pi_0, \Phi\cup\{\pi^\star\}}^\theta(T)$. Applying \Cref{lemma: Hellinger bound in N} with validator set $\Phi \cup \{\pi^\star\}$ and comparison policy $\pi^\star$, we have deterministically $D_H^2(P^{\pi_0}_{|\tau'}, P^{\pi^\star}_{|\tau'}) \le \theta$, because the event $\tau_{\pi_0, \{\pi^\star\} }^\theta(T) < \tau_{\pi_0, \Phi\cup\{\pi^\star\}}^\theta(T)$ has probability zero under $(\pi_0,N)$. Since $\mathcal{E}$ is measurable with respect to the trajectory stopped at $\tau'$, the binary Hellinger change-of-measure inequality yields
\begin{align*}
\Pr_{(\pi_0,N)}(\mathcal{E})
&\le \left(\sqrt{\Pr_{(\pi^\star,N)}(\mathcal{E})} + \sqrt{2D_H^2\left(\calP^{\pi_{0{|\tau'}}}_N, \calP^{\pi^\star_{|\tau'}}_N\right)}\right)^2 \\
&\le 2\Pr_{(\pi^\star,N)}(\mathcal{E}) + 4D_H^2\left(\calP^{\pi_{0{|\tau'}}}_N, \calP^{\pi^\star_{|\tau'}}_N\right) \\
&\le 4\eta + 4\theta,
\end{align*}
where the last step uses $p(\Phi) \le 2\eta$. A final application of \Cref{lemma: Hellinger bound in N}, now with validator set $\Phi$, comparison policy $\pi^\star$, and failure probabilities $4\eta+4\theta$ and $2\eta$, gives\begin{align*}
    D_H^2(P^{\pi_{0{|\tau}}}_N, P^{\pi^\star_{|\tau}}_N) &\le \theta + \frac{(4\eta+4\theta)+2\eta}{2} \\
    &= 3\theta + 3\eta.
\end{align*}

The four failure events above each occur with probability at most $\delta/4$, so the claimed bounds hold simultaneously with probability at least $1-\delta$.

\subsection{Supporting Hellinger lemmas}

We collect the proofs of the four Hellinger lemmas used in the proof of \Cref{theorem: stochastic rejection sample complexity}.

\begin{lemma}[Approximate minimizers are Hellinger-close]\label{lemma: approximate minimizers are hellinger close}
Let $\pi_0$ be the MLE on a dataset of $m$ trajectories, and define $\gamma^{\star}\coloneqq (\log|\Pi| + \log(2/\delta))/m$. With probability at least $1-\delta$:
\begin{enumerate}
\item $\pi^\star\in \Pi_{\mathsf{version}}^{\gamma^\star}$, i.e., $\mathsf{LogLoss}(\pi^\star) \le \mathsf{LogLoss}(\pi_0) + \gamma^\star$.
\item Every $\pi \in \Pi$ with $\mathsf{LogLoss}(\pi) \le \mathsf{LogLoss}(\pi_0) + \gamma$ satisfies $D_H^2(\calP_M^\pi, \calP_M^{\pi^\star}) \le \gamma/2 + \gamma^\star$.
\end{enumerate}
\end{lemma}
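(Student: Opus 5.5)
The plan is to follow the standard MLE analysis for finite classes under log-loss, as in \citet{foster2024behavior}, treating each training rollout as one sample from the trajectory law $\calP_M^{\pi}$. The key observation is that, because the dynamics $P$ do not depend on $\pi$, the likelihood ratio between two trajectory laws contains only the policy factors:
\[
\frac{\calP_M^{\pi}(S_i)}{\calP_M^{\pi^\star}(S_i)}=\prod_{h=1}^H \frac{\pi_h(a_{ih}\mid s_{ih})}{\pi^\star_h(a_{ih}\mid s_{ih})}.
\]
Hence $m\bigl(\mathsf{LogLoss}(\pi)-\mathsf{LogLoss}(\pi^\star)\bigr)=-\sum_{i=1}^m \log\bigl(\calP_M^{\pi}(S_i)/\calP_M^{\pi^\star}(S_i)\bigr)$. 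This reduces everything to density estimation over the finite family $\{\calP_M^\pi:\pi\in\Pi\}$, with realizable truth $\calP_M^{\pi^\star}$.

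\textbf{Part 1.} The first part is deterministic given the MLE: $\pi_0$ minimizes the log-loss and $\pi^\star\in\Pi$, so $\mathsf{LogLoss}(\pi^\star)\ge \mathsf{LogLoss}(\pi_0)$. The claim as stated, however, is $\mathsf{LogLoss}(\pi^\star)\le\mathsf{LogLoss}(\pi_0)+\gamma^\star$, which is a one-sided concentration bound. I would prove it from the exponential moment identity
\[
\E_{\pi^\star}\Bigl[\exp\Bigl(\log\tfrac{\calP^\pi}{\calP^{\pi^\star}}(S)\Bigr)\Bigr]\le 1
\]
for each $\pi$. Markov's inequality then gives $\sum_i \log\bigl(\calP^{\pi}/\calP^{\pi^\star}\bigr)(S_i)\le \log(2|\Pi|/\delta)$ simultaneously for all $\pi$, with probability $1-\delta/2$. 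Taking $\pi=\pi_0$ yields $\mathsf{LogLoss}(\pi^\star)-\mathsf{LogLoss}(\pi_0)\le \gamma^\star$.

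\textbf{Part 2.} For the second part I would use the Hellinger-type exponential moment, as in the classical Zhang/Foster argument. For each fixed $\pi$,
\[
\E\Bigl[\exp\Bigl(\tfrac12\sum_i \log\tfrac{\calP^\pi}{\calP^{\pi^\star}}(S_i)\Bigr)\Bigr]=\bigl(1-D_H^2(\calP^\pi,\calP^{\pi^\star})\bigr)^m\le \exp\bigl(-mD_H^2\bigr).
\]
A Markov inequality plus a union bound over $\Pi$ gives, with probability $1-\delta/2$, for all $\pi$ simultaneously,
\[
D_H^2(\calP^\pi,\calP^{\pi^\star})\le \tfrac{1}{2}\bigl(\mathsf{LogLoss}(\pi)-\mathsf{LogLoss}(\pi^\star)\bigr)+\tfrac{\log(2|\Pi|/\delta)}{m}.
\]
For any $\pi$ in the $\gamma$-ball, $\mathsf{LogLoss}(\pi)-\mathsf{LogLoss}(\pi^\star)\le \mathsf{LogLoss}(\pi)-\mathsf{LogLoss}(\pi_0)\le\gamma$, using that $\pi_0$ is the minimizer. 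This gives $D_H^2\le \gamma/2+\gamma^\star$. A final union bound over the two events yields overall probability $1-\delta$.

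\textbf{Main obstacle.} The delicate point is the Hellinger convention. The paper uses $d_H^2=1-\sum\sqrt{pq}$ without a factor of $2$, so the Bhattacharyya affinity is exactly $1-D_H^2$, and then $1-x\le e^{-x}$ is exactly what gives the constant $1/2$ on $\gamma$. I would double-check that the affinity factorizes over i.i.d.\ trajectories, which it does. I would also check that Part 2 uses only the MLE ordering and not Part 1's event; it does not need it.
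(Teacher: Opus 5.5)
Your proposal is correct and follows essentially the same route as the paper. Part 1 uses Markov's inequality on the dataset likelihood ratio (mean at most $1$) with a union bound over $\Pi$; Part 2 uses Markov on its square root, whose mean is $(1-D_H^2)^m \le e^{-mD_H^2}$, combined with the MLE ordering $\mathsf{LogLoss}(\pi_0)\le \mathsf{LogLoss}(\pi^\star)$; each event costs $\delta/2$. The only cosmetic difference is that you first derive the uniform inequality $D_H^2(\calP_M^\pi,\calP_M^{\pi^\star}) \le \tfrac12\bigl(\mathsf{LogLoss}(\pi)-\mathsf{LogLoss}(\pi^\star)\bigr)+\gamma^\star$ for all $\pi$, which then holds simultaneously for every $\gamma$, whereas the paper fixes $\gamma$ and union-bounds over the policies with $D_H^2 > \gamma/2+\gamma^\star$.
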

\begin{proof}
Let $L(\pi) = \prod_{k=1}^{m}\prod_{h=1}^{H}\frac{\pi_h(a_h^{(k)}\mid s_h^{(k)})}{\pi^\star_h(a_h^{(k)}\mid s_h^{(k)})}$ denote the likelihood ratio of the dataset under $\pi$ versus $\pi^\star$. For a single trajectory, the likelihood ratio under $\calP_M^{\pi^\star}$ sums only the $\calP_M^\pi$-mass on trajectories in the support of $\calP_M^{\pi^\star}$, so its expectation is at most~1 (and exactly~1 whenever $\calP_M^\pi \ll \calP_M^{\pi^\star}$). Since $L(\pi)$ is a product of $m$ independent trajectory likelihood ratios, we have $\E[L(\pi)] \le 1$ for all $\pi$. Thus, by Markov's inequality and a union bound,
\[
    \Pr\left[\sup_{\pi\in\Pi} L(\pi) > \frac{2|\Pi|}{\delta}\right] \leq \frac{\delta}{2}.
\]
Taking the log and dividing by $m$, with probability at least $1-\delta/2$ the log-loss gap satisfies $\mathsf{LogLoss}(\pi^\star) \leq \mathsf{LogLoss}(\pi_0) + \gamma^\star$, proving Part~1.

For Part~2, if $\pi$ is a $\gamma$-approximate minimizer then $\mathsf{LogLoss}(\pi) \leq \mathsf{LogLoss}(\pi^\star) + \gamma$ (since $\pi_0$ is the MLE), which rearranges to
\begin{align}
    -\frac{1}{m}\sum_{k=1}^{m}\log\left(\prod_{h=1}^{H}\frac{\pi_h(a_h^{(k)}\mid s_h^{(k)})}{\pi^\star_h(a_h^{(k)}\mid s_h^{(k)})}\right) \leq \gamma 
    &\implies \sum_{k=1}^{m}\log\sqrt{\prod_{h=1}^{H}\frac{\pi_h(a_h^{(k)}\mid s_h^{(k)})}{\pi^\star_h(a_h^{(k)}\mid s_h^{(k)})}} \geq -\frac{m\gamma}{2} \\
    &\implies \prod_{k=1}^{m}\sqrt{\prod_{h=1}^{H}\frac{\pi_h(a_h^{(k)}\mid s_h^{(k)})}{\pi^\star_h(a_h^{(k)}\mid s_h^{(k)})}} \geq e^{-m\gamma/2}.\label{eq: log loss error rewritten}
\end{align} 
By definition, the expectation of each factor in the outer product equals $1 - D_H^2(\calP_M^\pi, \calP_M^{\pi^\star})$, so the LHS has expectation $(1-D_H^2(\calP_M^\pi, \calP_M^{\pi^\star}))^m \le e^{-m D_H^2(\calP_M^\pi,\calP_M^{\pi^\star})}$. For any $\pi$ with $D_H^2(\calP_M^\pi,\calP_M^{\pi^\star}) > \gamma/2 + \gamma^\star$, Markov's inequality applied to \eqref{eq: log loss error rewritten} gives failure probability at most $\delta/(2|\Pi|)$. A union bound over $\Pi$ completes the proof with overall probability at least $1-\delta$.
\end{proof}

\begin{lemma}[Hellinger tensorization under geometric mixture]\label{lemma: hellinger tensorization under geometric mixture policy}
    Let $\calP^{\pi}$ and $\calP^{\pi'}$ denote the trajectory distributions under policies $\pi$ and $\pi'$ in an MDP $M$. Define the geometric mixture policy by $\bar\pi_h(a\mid s) \coloneqq \sqrt{\pi_h(a\mid s)\,\pi'_h(a\mid s)}/Z_h(s)$, where $Z_h(s) \coloneqq \sum_{a} \sqrt{\pi_h(a \mid s)\,\pi'_h(a \mid s)}$.\footnote{If $Z_h(s)=0$, define $\bar\pi_h(\cdot\mid s)$ arbitrarily. In this case $\pi_h(\cdot\mid s)$ and $\pi'_h(\cdot\mid s)$ have disjoint support, so $\sqrt{\pi_h(a\mid s)\pi'_h(a\mid s)}=0$ for every $a$, and the identities below still hold.} Then
    \[
        1 - D_H^2(\calP^{\pi}, \calP^{\pi'}) = \E_{T \sim \calP^{\bar \pi}}\left[\prod_{h=1}^H \bigl(1 - d_H^2(\pi_h(\cdot \mid s_h), \pi'_h(\cdot \mid s_h))\bigr)\right].
    \]
\end{lemma}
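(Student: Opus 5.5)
The plan is to verify the identity by writing the Bhattacharyya affinity of the two trajectory laws as a sum over trajectories and factoring it step by step. Since both $\calP^{\pi}$ and $\calP^{\pi'}$ are generated in the same MDP $M$, their probability of a full trajectory $T=(s_1,a_1,\dots,s_H,a_H)$ factors as
\[
\calP^{\pi}(T) = P_0(s_1)\prod_{h=1}^{H}\pi_h(a_h\mid s_h)\prod_{h=1}^{H-1}P_h(s_{h+1}\mid s_h,a_h),
\]
and similarly for $\pi'$. The transition and initial-state factors are shared. Hence $\sqrt{\calP^{\pi}(T)\calP^{\pi'}(T)}$ equals the same transition factors times $\prod_h\sqrt{\pi_h(a_h\mid s_h)\pi'_h(a_h\mid s_h)}$. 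Using $1-D_H^2(\calP^\pi,\calP^{\pi'})=\sum_T\sqrt{\calP^\pi(T)\calP^{\pi'}(T)}$ (the paper's convention $d_H^2(p,q)=1-\sum\sqrt{pq}$, applied at the trajectory level), the left-hand side is this sum.

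Next I substitute the definition of the geometric mixture. Wherever $Z_h(s_h)>0$ we have
\[
\sqrt{\pi_h(a_h\mid s_h)\pi'_h(a_h\mid s_h)} = Z_h(s_h)\,\bar\pi_h(a_h\mid s_h).
\]
The product over $h$ therefore becomes $\prod_h Z_h(s_h)\cdot\prod_h\bar\pi_h(a_h\mid s_h)$. Reassembled with the transition factors, $\prod_h\bar\pi_h$ gives exactly $\calP^{\bar\pi}(T)$, so
\[
\sum_T\sqrt{\calP^\pi(T)\calP^{\pi'}(T)}=\sum_T \calP^{\bar\pi}(T)\prod_{h}Z_h(s_h)=\E_{\calP^{\bar\pi}}\Bigl[\prod_h Z_h(s_h)\Bigr].
\]
Finally, $Z_h(s)=\sum_a\sqrt{\pi_h(a\mid s)\pi'_h(a\mid s)}=1-d_H^2(\pi_h(\cdot\mid s),\pi'_h(\cdot\mid s))$ by definition, which gives the claim.

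The only delicate point is the degenerate case $Z_h(s_h)=0$, where $\bar\pi_h$ is arbitrary. I would handle it by noting that both sides vanish on any trajectory passing through such a state. On the left, $\sqrt{\pi_h\pi'_h}(a_h\mid s_h)=0$ for every $a_h$, so the trajectory contributes zero. On the right, the factor $Z_h(s_h)=0$ kills the integrand regardless of what $\bar\pi_h$ does. The identity $\sqrt{\pi_h\pi'_h}=Z_h\bar\pi_h$ thus holds pointwise in all cases, and the computation goes through unchanged.

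Because $\calS,\calA$ are countable and all terms are nonnegative, interchanging sums (Tonelli) is justified. I expect no step to be a real obstacle; the main care is simply this bookkeeping of the zero-normalizer case.
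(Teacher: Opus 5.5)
Your proposal is correct and follows essentially the same route as the paper's proof. You expand the trajectory affinity, use the shared transition factors, apply the pointwise identity $\sqrt{\pi_h\pi'_h}=Z_h\,\bar\pi_h$ (including the degenerate case $Z_h=0$ exactly as the paper's footnote does), and reassemble into $\E_{\calP^{\bar\pi}}\bigl[\prod_h Z_h(s_h)\bigr]$.
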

\begin{proof}
By definition, the trajectory-level squared Hellinger affinity is
\[
    1 - D_H^2(\calP^{\pi}, \calP^{\pi'}) = \sum_T \sqrt{\calP^{\pi}(T)\,\calP^{\pi'}(T)}.
\]
Expanding each trajectory probability,
\[
    \sqrt{\calP^{\pi}(T)\,\calP^{\pi'}(T)}
    =
    P_0(s_1)
    \left(\prod_{h=1}^{H-1} P_h(s_{h+1} \mid s_h, a_h)\right)
    \left(\prod_{h=1}^{H} \sqrt{\pi_h(a_h \mid s_h)\,\pi'_h(a_h \mid s_h)}\right).
\]
Let $Z_h(s_h) \coloneqq \sum_{a} \sqrt{\pi_h(a \mid s_h)\,\pi'_h(a \mid s_h)} = 1 - d_H^2(\pi_h(\cdot \mid s_h), \pi'_h(\cdot \mid s_h))$. With the convention in the lemma statement, the identity $\sqrt{\pi_h(a_h \mid s_h)\,\pi'_h(a_h \mid s_h)} = \bar\pi_h(a_h \mid s_h)\,Z_h(s_h)$ holds for every $s_h,a_h$: it is the definition when $Z_h(s_h)>0$, and both sides are zero when $Z_h(s_h)=0$. Substituting,
\[
    \sqrt{\calP^{\pi}(T)\,\calP^{\pi'}(T)} = \calP^{\bar\pi}(T)\prod_{h=1}^H Z_h(s_h).
\]
Summing over all trajectories $T$ gives the result.
\end{proof}

\begin{lemma}[Geometric mixture Hellinger bound]\label{lemma: geometric mixture hellinger bound}
    Let $\pi, \pi', \bar\pi$ be as in \Cref{lemma: hellinger tensorization under geometric mixture policy} and let $\gamma = D_H^2(\calP^{\pi}, \calP^{\pi'}).$ Then
    \[
        D_H^2(\calP^{\bar \pi}, \calP^{\pi'}) \leq \frac{3}{2}D_H^2(\calP^{\pi}, \calP^{\pi'}).
    \]
\end{lemma}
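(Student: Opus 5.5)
The plan is to compare the two trajectory laws through their affinity, reducing the claim to an elementary pointwise inequality. Throughout, $\calP^{\pi}$ and $\calP^{\pi'}$ are full probability distributions over trajectories, each summing to $1$.

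\textbf{Step 1: rewrite $\calP^{\bar\pi}$ and lower bound the affinity.} The proof of \Cref{lemma: hellinger tensorization under geometric mixture policy} gives, for every trajectory $T$,
\[
\calP^{\bar\pi}(T)\prod_{h=1}^H Z_h(s_h)=\sqrt{\calP^{\pi}(T)\calP^{\pi'}(T)}.
\]
When every $Z_h(s_h)>0$, this gives $\calP^{\bar\pi}(T)=\sqrt{\calP^{\pi}(T)\calP^{\pi'}(T)}\big/\prod_h Z_h(s_h)$. In that case
\[
\sqrt{\calP^{\bar\pi}(T)\calP^{\pi'}(T)}=\frac{\calP^{\pi}(T)^{1/4}\calP^{\pi'}(T)^{3/4}}{\sqrt{\prod_h Z_h(s_h)}}\ \ge\ \calP^{\pi}(T)^{1/4}\calP^{\pi'}(T)^{3/4},
\]
because each $Z_h\in[0,1]$.

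If some $Z_h(s_h)=0$, then $\pi_h(\cdot\mid s_h)$ and $\pi'_h(\cdot\mid s_h)$ have disjoint supports. Hence $\calP^{\pi}(T)\calP^{\pi'}(T)=0$, the right-hand side above is $0$, and the inequality holds trivially. Summing over $T$ yields
\[
1-D_H^2(\calP^{\bar\pi},\calP^{\pi'})\ \ge\ \sum_T a_T^{1/2}b_T^{3/2},\qquad a_T\coloneqq\sqrt{\calP^{\pi}(T)},\ b_T\coloneqq\sqrt{\calP^{\pi'}(T)}.
\]

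\textbf{Step 2: reduce to a pointwise inequality.} Write $\gamma=D_H^2(\calP^{\pi},\calP^{\pi'})=\tfrac12\sum_T(a_T-b_T)^2$, using $\sum_T a_T^2=\sum_T b_T^2=1$. By Step 1 it suffices to show
\[
\sum_T\bigl(b_T^2-a_T^{1/2}b_T^{3/2}\bigr)\le\tfrac34\sum_T(a_T-b_T)^2,
\]
since the right-hand side equals $\tfrac32\gamma$ and $\sum_T b_T^2=1$. Because $\sum_T(a_T^2-b_T^2)=0$, we may add $\tfrac14(a_T^2-b_T^2)$ to each summand on the left for free. It then suffices to prove, for all $a,b\ge0$,
\[
b^2-a^{1/2}b^{3/2}+\tfrac14(a^2-b^2)\le\tfrac34(a-b)^2 .
\]
The coefficient $\tfrac14$ is chosen so that the first-order terms match at $a=b$.

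\textbf{Step 3: verify the scalar inequality.} If $b=0$ the inequality reads $a^2/4\le 3a^2/4$, which holds. Otherwise both sides are homogeneous of degree $2$, so we may set $b=1$ and $a=u^2$ with $u\ge0$. The difference of the right-hand side minus the left-hand side becomes
\[
\tfrac34(u^2-1)^2-1+u-\tfrac14(u^4-1)=\tfrac12u^4-\tfrac32u^2+u=\tfrac12\,u\,(u-1)^2(u+2)\ \ge 0 .
\]
This completes the argument.

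\textbf{Main obstacle.} The main difficulty is Step 2. A direct per-step log-convexity argument on $t\mapsto\log\sum\pi^t\pi'^{1-t}$ bounds the affinity in the wrong direction. The fix is to discard the normalizers $Z_h\le1$, which only helps, and to use the zero-sum correction $\tfrac14(a^2-b^2)$, after which a clean factorization appears.
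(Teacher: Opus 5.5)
Your proof is correct. Step 1 coincides with the paper's first step: using $\prod_h Z_h(s_h)\le 1$ gives $\calP^{\bar\pi}(T)\ge\sqrt{\calP^{\pi}(T)\calP^{\pi'}(T)}$, and hence the affinity is at least $\sum_T \calP^{\pi}(T)^{1/4}\calP^{\pi'}(T)^{3/4}$. Your treatment of the $Z_h=0$ case is slightly more explicit than the paper's.

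The routes diverge when relating this quantity to $\gamma$. The paper applies H\"older with exponents $3/2$ and $3$ to get $\sum_T \calP^{\pi}(T)^{1/4}\calP^{\pi'}(T)^{3/4}\ge(1-\gamma)^{3/2}$, and then uses Bernoulli. Equivalently, this is log-convexity of $t\mapsto\log\sum_T\calP^{\pi}(T)^t\calP^{\pi'}(T)^{1-t}$, interpolating between $t=1/4$ and $t=1$ (where the sum is at most $1$). You instead use a pointwise comparison of $f$-divergence generators: you add the zero-sum term $\tfrac14(a_T^2-b_T^2)$ and check a one-variable inequality. Your factorization $\tfrac12u(u-1)^2(u+2)$ is right.

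The paper's route gives the sharper intermediate bound $D_H^2(\calP^{\bar\pi},\calP^{\pi'})\le 1-(1-\gamma)^{3/2}$ before linearizing. Yours works atom by atom and needs no norm inequality. Its equality case $u=0$ (atoms charged by $\calP^{\pi'}$ but not by $\calP^{\pi}$) shows exactly where the constant $3/2$ comes from.

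One correction to your closing remark. Log-convexity does not point the wrong way if $t=1/2$ is the interior point and $t=1/4$, $t=1$ are the endpoints; that is precisely the paper's H\"older step. So your zero-sum correction is a valid alternative, not a necessary fix.
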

\begin{proof}
Define
\[
P := \mathcal P^\pi,
\qquad
Q := \mathcal P^{\pi'},
\qquad
\bar P := \mathcal P^{\bar\pi}.
\]

From the proof of \Cref{lemma: hellinger tensorization under geometric mixture policy}, we have that pointwise $\bar{P}(T) Z(T) = \sqrt{P(T)Q(T)}$ where $Z(T) = \prod_{h=1}^H Z_h(s_h) \le 1$. Therefore $\bar{P}(T) \geq \sqrt{P(T)Q(T)}$, and
\begin{align*}
    1 - D_H^2(\bar{P}, Q) &= \sum_T \sqrt{\bar{P}(T)Q(T)} \geq \sum_T \sqrt{\sqrt{P(T)Q(T)} \cdot Q(T)} 
    = \sum_T P(T)^{1/4}Q(T)^{3/4}.
\end{align*}
Applying Hölder's inequality with conjugate exponents $3/2$ and $3$, we obtain
\begin{align*}
\sum_T P(T)^{1/2}Q(T)^{1/2}
&=
\sum_T
\bigl(P(T)^{1/4}Q(T)^{3/4}\bigr)^{2/3} P(T)^{1/3} \\
&\le
\left(
\sum_T
\Bigl[\bigl(P(T)^{1/4}Q(T)^{3/4}\bigr)^{2/3}\Bigr]^{3/2}
\right)^{2/3}
\left(
\sum_T \bigl(P(T)^{1/3}\bigr)^3
\right)^{1/3} \\
&=
\left(
\sum_T P(T)^{1/4}Q(T)^{3/4}
\right)^{2/3}
\left(
\sum_T P(T)
\right)^{1/3} \\
&=
\left(
\sum_T P(T)^{1/4}Q(T)^{3/4}
\right)^{2/3}.
\end{align*}
Hence $1 - D_H^2(\calP^{\pi}, \calP^{\pi'}) \leq \left(\sum_T P(T)^{1/4}Q(T)^{3/4}\right)^{2/3}$. Raising to the $3/2$ power, $\sum_T P(T)^{1/4}Q(T)^{3/4} \geq (1-D_H^2(\calP^{\pi}, \calP^{\pi'}))^{3/2}$, so by Bernoulli's inequality: \[D_H^2(\bar{P}, Q) \leq 1 - (1-D_H^2(\calP^{\pi}, \calP^{\pi'}))^{3/2} \leq \frac32D_H^2(\calP^{\pi}, \calP^{\pi'}).\] 
\end{proof}

\begin{lemma}[Stopped trajectory Hellinger bound]\label{lemma: Hellinger bound in N}  
    Let $\pi_0, \pi' \in \Pi$ and $\Phi \subseteq \Pi$. Let $\tau \coloneqq \tau_{\pi_0, \Phi}^\theta(T)$. If $\Phi$ satisfies
    \begin{align*}
        \Pr_{T \sim (\pi_0, N)}\bigl(\tau_{\pi_0, \{\pi'\}}^\theta(T) < \tau\bigr) &< \eta \quad\text{and}\quad\Pr_{T \sim (\pi', N)}\bigl(\tau_{\pi_0, \{\pi'\} }^\theta(T) < \tau\bigr) < \eta'
    \end{align*}
    then, with probability at least $1 - \eta$ over $T \sim (\pi_0, N)$, the cumulative per-step Hellinger distance up to $\tau$ is bounded by $\theta$:
    \[
        \sum_{h=1}^{\tau - 1} d_H^2(\pi'_h(\cdot \mid s_h), \pi_{0,h}(\cdot \mid s_h)) \leq \theta.
    \]
    Furthermore, the stopped trajectory distributions satisfy
    \[
        D_H^2\bigl( \calP^{\pi_{0|\tau}}_N, \calP^{\pi'_{|\tau}}_N \bigr) \leq \theta + \frac{\eta + \eta'}{2}.
    \]
\end{lemma}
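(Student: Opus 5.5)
Throughout, write $\tau$ for $\tau^\theta_{\pi_0,\Phi}$, $\sigma$ for $\tau^\theta_{\pi_0,\{\pi'\}}$, and $d_h \coloneqq d_H^2(\pi'_h(\cdot\mid s_h),\pi_{0,h}(\cdot\mid s_h))$. The plan has two parts, matching the two claims of the lemma.

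\textbf{First claim.} By the definition \eqref{eq: stochastic stopping time}, $\sigma$ is the first $h$ with $\sum_{k\le h} d_k>\theta$. Hence on the event $\{\sigma\ge\tau\}$ we have $\sum_{h=1}^{\tau-1} d_h\le\theta$. This event has probability at least $1-\eta$ under $(\pi_0,N)$ by hypothesis. No further work is needed here.

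\textbf{Second claim: setup.} Both stopped laws live on histories up to $s_\tau$. Their Hellinger affinity factorizes as in \Cref{lemma: hellinger tensorization under geometric mixture policy}, except that the product only runs over steps $h<\tau$. This is because $\tau$ is a stopping time: whether step $h$ is included depends only on $\mathcal{G}_h$. Concretely, I would prove
\[
1-D_H^2\bigl(\calP^{\pi_{0|\tau}}_N,\calP^{\pi'_{|\tau}}_N\bigr)=\E_{\bar\pi}\Bigl[\textstyle\prod_{h<\tau}(1-d_h)\Bigr],
\]
where $\bar\pi$ is the geometric mixture. The proof is the same telescoping expansion over prefixes, summed over the slices $\{\tau=k\}$.

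\textbf{Second claim: splitting on the good event.} Let $G=\{\sigma\ge\tau\}$; this event is measurable with respect to the stopped history. On $G$ we have $\prod_{h<\tau}(1-d_h)\ge e^{-c\sum d_h}\ge 1-\theta$ (more crudely, $\prod(1-x_h)\ge 1-\sum x_h$). This gives
\[
1-D_H^2\ge(1-\theta)\Pr_{\bar\pi}[G],
\quad\text{so}\quad
D_H^2\le\theta+\Pr_{\bar\pi}[G^c].
\]
To reach the stated bound, $\Pr_{\bar\pi}[G^c]$ must be controlled by $(\eta+\eta')/2$, and this is the main obstacle.

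\textbf{Second claim: the hard step.} Changing measure from $\bar\pi$ to $\pi_0$ or $\pi'$ through Hellinger would be circular. I would therefore avoid $\bar\pi$ and work directly with the affinity $\sum_x\sqrt{p(x)q(x)}$ over stopped histories $x$, with $p=\calP^{\pi_{0|\tau}}_N$ and $q=\calP^{\pi'_{|\tau}}_N$. Split the sum into $x\in G$ and $x\in G^c$ and drop the $G^c$ part, since it is nonnegative. On $G$, write the affinity via the conditional-tensorization bound, giving $\sum_{x\in G}\sqrt{pq}\ge(1-\theta)\cdot$(geometric mass of $G$).

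Then use the pointwise inequality $\sqrt{pq}\ge\min(p,q)$ wherever the likelihood ratio is controlled. Alternatively, apply AM--GM in the form $1-D_H^2=\sum\sqrt{pq}$ together with
\[
\sum_{x\in G}\sqrt{pq}\ \ge\ \sum_{x\in G}\tfrac{p+q}{2}\;-\;\sum_{x\in G}\tfrac{(\sqrt p-\sqrt q)^2}{2}.
\]
The first sum equals $1-\tfrac{p(G^c)+q(G^c)}{2}\ge 1-\tfrac{\eta+\eta'}{2}$. The second sum is the Hellinger mass restricted to histories where the cumulative per-step divergence is at most $\theta$, which should be bounded by $\theta$ via the prefix tensorization. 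Combining the two pieces yields $D_H^2\le\theta+(\eta+\eta')/2$.

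The delicate point I expect is justifying that the restricted Hellinger mass on $G$ is at most $\theta$. To do this I would again use the stopping-time structure: define a modified stopping time $\tau\wedge\sigma$. On it the cumulative divergence is deterministically at most $\theta$, and $\tau\wedge\sigma$ agrees with $\tau$ on $G$.
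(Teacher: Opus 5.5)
Your final chain is correct and is essentially the paper's proof: restrict the $\tau$-stopped affinity to $G=\{\sigma\ge\tau\}$ and split $\sqrt{pq}=\tfrac{p+q}{2}-\tfrac{(\sqrt p-\sqrt q)^2}{2}$ there; the hypotheses bound the first part, and the second part is at most $D_H^2$ of the laws stopped at $\tau\wedge\sigma$, which agree with the $\tau$-stopped laws on $G$ and have $D_H^2\le\theta$ by the geometric-mixture tensorization and the pathwise bound. The paper instead applies AM--GM to the $G^c$ part of the $(\tau\wedge\sigma)$-affinity, which is the same computation, and your earlier detours through $\Pr_{\bar\pi}[G^c]$ and $\sqrt{pq}\ge\min(p,q)$ are unnecessary and can be dropped.
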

\begin{proof}
Let
\[
\tau' \coloneqq \tau_{\pi_0,\{\pi'\}}^\theta(T),
\qquad
\mathcal E \coloneqq \{\tau \le \tau'\}.
\] By hypothesis, $\Pr_{(\pi_0,N)}(\mathcal{E}^c) < \eta$, so $\Pr(\mathcal{E}) \geq 1 - \eta$. Conditioned on $\mathcal{E}$, the sum \[\sum_{h=1}^{\tau-1} d_H^2(\pi'_h(\cdot|s_h), \pi_{0,h}(\cdot|s_h)) \le \theta\] by the definition of $\tau'$ and monotonicity of the sum.

For the second claim, define $\tau^\star(T) \coloneqq \min(\tau, \tau')$. Since $\tau^{\star}\leq \tau'$ on every trajectory, we have $\sum_{h < \tau^\star} d_H^2(\pi'_h(\cdot \mid s_h), \pi_{0,h}(\cdot \mid s_h)) \leq \theta$ pathwise. The stopped affinity therefore satisfies\begin{align*}
    \sum_T \sqrt{\calP^{\pi_{0|\tau^\star}}_N(T) \calP^{\pi'_{|\tau^\star}}_N(T)}
    &= \E_{T \sim \calP^{\bar \pi}}\left[\prod_{h < \tau^\star}\bigl(1 - d_H^2(\pi'_h(\cdot|s_h),\pi_{0,h}(\cdot|s_h))\bigr)\right]\\
    &\geq \E_{T \sim \calP^{\bar \pi}}\left[1 - \sum_{h<\tau^\star} d_H^2(\pi'_h(\cdot|s_h),\pi_{0,h}(\cdot|s_h))\right]\geq 1 - \theta,
\end{align*}
so $D_H^2(\calP^{\pi_{0|\tau^\star}}_N, \calP^{\pi'_{|\tau^\star}}_N) \leq \theta$.

Write $A(P,Q) = \sum_T \sqrt{P(T)Q(T)}$ for the Hellinger affinity. On $\mathcal{E}$, the stopped distributions at $\tau$ and $\tau^\star$ agree, so
\[
    A(\calP^{\pi_{0|\tau}}_N, \calP^{\pi'_{|\tau}}_N) \ge \sum_{T \in \mathcal{E}} \sqrt{\calP^{\pi_{0|\tau^\star}}_N(T) \calP^{\pi'_{|\tau^\star}}_N(T)}.
\]
On the other hand,
\[
    A(\calP^{\pi_{0|\tau^\star}}_N, \calP^{\pi'_{|\tau^\star}}_N) = \sum_{T \in \mathcal{E}} \sqrt{\calP^{\pi_{0|\tau^\star}}_N(T) \calP^{\pi'_{|\tau^\star}}_N(T)} + \sum_{T \in \mathcal{E}^c} \sqrt{\calP^{\pi_{0|\tau^\star}}_N(T) \calP^{\pi'_{|\tau^\star}}_N(T)}.
\]
By AM-GM, the second sum satisfies
\begin{align*}
\sum_{T \in \mathcal{E}^c} \sqrt{\calP^{\pi_{0|\tau^\star}}_N(T) \calP^{\pi'_{|\tau^\star}}_N(T)}&\le \sum_{T\in \mathcal E^c} \frac{\calP^{\pi_{0|\tau^\star}}_N(T) + \calP^{\pi'_{|\tau^\star}}_N(T)}{2}\le \frac{\eta+\eta'}{2}
\end{align*} Therefore $    A(\calP^{\pi_{0|\tau}}_N, \calP^{\pi'_{|\tau}}_N) \ge A(\calP^{\pi_{0|\tau^\star}}_N, \calP^{\pi'_{|\tau^\star}}_N) - \frac{\eta+\eta'}{2},$
which gives 
\begin{align*}
    D_H^2(\calP^{\pi_{0|\tau}}_N, \calP^{\pi'_{|\tau}}_N) &\le D_H^2(\calP^{\pi_{0|\tau^\star}}_N, \calP^{\pi'_{|\tau^\star}}_N) + (\eta+\eta')/2 \le \theta + \frac{\eta+\eta'}2
\end{align*}
\end{proof}

\subsection{Proof of \Cref{thm: mixed regret bound}}

Because $\pi_{\mathsf{sw}}$ and $\pi^\star$ are valid policies evaluated over the full-horizon MDP $N$, we can directly invoke Theorem 3.1 of \cite{foster2024behavior} to bound their regret difference:
\[
    J(\pi_{\mathsf{sw}}) - J(\pi^\star) \le \sqrt{6\sigma_{\pi^\star}^2 \cdot D_H^2(\calP^{\pi_{\mathsf{sw}}}, \calP^{\pi^\star})} + O\left(\costmax \log(\costmax\epsilon^{-1})\right) \cdot D_H^2(\calP^{\pi_{\mathsf{sw}}}, \calP^{\pi^\star}) + \epsilon,
\]
where $\calP^{\pi_{\mathsf{sw}}}$ and $\calP^{\pi^\star}$ are the full-horizon trajectory distributions of $\pi_{\mathsf{sw}}$ and $\pi^{\star}$, respectively. 

To evaluate the full-horizon Hellinger distance, we expand the Hellinger affinity between $\calP^{\pi_{\mathsf{sw}}}$ and $\calP^{\pi^\star}$. Recall that the squared Hellinger distance satisfies $1 - D_H^2(P, Q) = \sum_T \sqrt{P(T)Q(T)}$. Let $T_{|\tau}$ denote the prefix of the trajectory up to the stopping time $\tau$, and let $T_{>\tau}$ denote the suffix. Therefore, the full-horizon trajectory distribution for any policy $\pi$ factors as $P^\pi(T) = P^\pi(T_{|\tau}) P^\pi(T_{>\tau} \mid T_{|\tau})$.

By construction, the mixed policy $\pi_{\mathsf{sw}}$ executes the learner $\hat{\pi}$ up to $\tau$ and the expert $\pi^\star$ for all steps $h > \tau$. Thus, its prefix distribution is exactly the stopped learner distribution, $\calP^{\pi_{\mathsf{sw}}}(T_{|\tau}) = \calP^{\hat{\pi}_{|\tau}}(T_{|\tau})$. Crucially, because $\tau$ is a stopping time, the event $\{\tau = t\}$ depends exclusively on the trajectory prefix $T_{1:t}$. Consequently, for any prefix where the switch occurs at step $t$, the conditional distribution of the remaining trajectory is governed entirely by the expert policy, yielding $\calP^{\pi_{\mathsf{sw}}}(T_{t+1:H} \mid T_{1:t}) = \calP^{\pi^\star}(T_{t+1:H} \mid T_{1:t})$. Thus,
\begin{align*}
    1 - D_H^2(\calP^{\pi_{\mathsf{sw}}}, \calP^{\pi^\star}) &= \sum_{T_{|\tau}} \sum_{T_{>\tau}} \sqrt{\calP^{\pi_{\mathsf{sw}}}(T_{|\tau}) \calP^{\pi_{\mathsf{sw}}}(T_{>\tau} \mid T_{|\tau}) \cdot \calP^{\pi^\star}_{|\tau}(T_{|\tau}) \calP^{\pi^\star}(T_{>\tau} \mid T_{|\tau})} \\
    &= \sum_{T_{|\tau}} \sqrt{\calP^{\hat{\pi}_{|\tau}}(T_{|\tau}) \calP^{\pi^\star_{|\tau}}(T_{|\tau})} \sum_{T_{>\tau}} \calP^{\pi^\star}(T_{>\tau} \mid T_{|\tau}).
\end{align*}
Because $P^{\pi^\star}(T_{>\tau} \mid T_{|\tau})$ is a valid probability distribution over the suffix trajectories, the inner sum evaluates exactly to $1$ for every prefix $T_{|\tau}$. The remaining sum is exactly the Hellinger affinity of the stopped trajectories. Therefore, the full-horizon trajectory distance perfectly collapses to the distance accumulated up to the stopping time:\begin{align}
    D_H^2(\calP^{\pi_{\mathsf{sw}}}, \calP^{\pi^\star}) = D_H^2\bigl(\calP^{\hat{\pi}_{|\tau}},\, \calP^{\pi^\star_{|\tau}}\bigr).\label{eq: stopped hellinger equality}
\end{align}
Substituting \eqref{eq: stopped hellinger equality} into the bound from Theorem 3.1 of \cite{foster2024behavior} concludes the proof.

\subsection{Proof of \Cref{theorem: stochastic lower bound}}

Let the state space be $\mathcal{X} = \{1, 2, \dots, N\}$. Let $P$ be uniform over $\mathcal{X}$, $Q$ be uniform over a known subset $S_Q = \{1, \dots, d\}$, and let $\Delta \in [0,1/2]$ be a parameter to be chosen later. Let $\Pi$ be the class of stochastic policies parameterized by $\sigma \in \{-1, 1\}^d$: for $x \in S_Q$, the expert $\pi^\star$ takes action $1$ with probability $1/2 + \sigma_x \Delta$; for $x \notin S_Q$, the probability is $1/2$. Similarly, define the cost function $c_\sigma : \mathcal X \times \{0,1\} \rightarrow [0,1]$ by $c_\sigma (x, 1) = 1\{\sigma_x = -1\}$ and $c_\sigma (x,0)=1\{\sigma_x = 1\}$ for $x\in S_Q$ and 0 otherwise. 

We now apply Yao's minimax principle. It is enough to lower bound the expected performance of a deterministic proper learner under the uniform prior $\sigma \sim \text{Unif}\left(\{-1,1\}^d\right)$. Let $D$ be the $m$ labeled samples from $P$ the learner observes. Since the learner is deterministic, after observing $D$, it will, for each state $x$, choose an acceptance probability $\alpha_x(D) \in [0,1]$ and a bernoulli parameter $q_x(D) = \hat \pi (1|x)$. Crucially, because the learner is proper, it must output a policy in $\Pi$, meaning $q_x(D) \in \{1/2 - \Delta, 1/2 + \Delta\}$.

For $x\in S_Q$, define the one-step selective regret at state $x$ as\begin{align*}
    R_x(D,\sigma)
&\coloneqq
\mathbb E_{a\sim \hat\pi_D(\cdot\mid x)}\!\left[c_\sigma(x,a)\right]
-
\mathbb E_{a\sim \pi_\sigma(\cdot\mid x)}\!\left[c_\sigma(x,a)\right]\\
&=\Delta+\sigma_x\!\left(\frac12-q_x(D)\right).\tag{direct computation}
\end{align*}
Now, define $\eta_x = \mathbb E[\sigma_x |D]$. By conditioning on $D$, we get 
\[
\mathbb E[R_x(D,\sigma)|D] = \Delta + \eta_x\!\left(\frac12-q_x(D)\right) \ge \Delta -\Delta |\eta_x|
\] 
where we used the proper learner constraint $\left|\frac12 - q_x(D)\right| = \Delta$. 

Multiplying by $\alpha_x$ and then taking the expectation over $D$ gives 
\[\mathbb E[\alpha_x(D) R_x(D,\sigma)] \ge \Delta \mathbb E[\alpha_x(D)] - \Delta \mathbb E[|\eta_x|]\]
where the full expectation is now over $\sigma$ and $D$. 

Let $n_x(D) \sim \mathrm{Binomial}(m, 1/N)$ be the number of times state $x$ is observed. The posterior bias satisfies 
\begin{align*}
    \mathbb E[|\eta_x(D)| \mid n_x(D)=k] &= \TV\left(\text{Binomial}\left(k, \frac12 + \Delta\right),\text{Binomial}\left(k, \frac12 - \Delta\right)\right)\\
    &\le 2\sqrt2 \Delta \sqrt k
\end{align*}
by Pinsker's inequality. Taking expectations yields $\mathbb E[|\eta_x(D)|] \le 2\sqrt2 \Delta \mathbb E\left[\sqrt{n_x (D)}\right]$.
Then, averaging over the possibilities of $x$ gives 
\[\mathbb E[\Regret_Q(\hat\pi,\tau;c)] \ge \Delta \mathbb E[\overline \alpha(D)] - \frac{2\sqrt2\Delta^2}{d} \sum_{x=1}^d \mathbb E\left[\sqrt{n_x(D)}\right]\]
where $\overline \alpha(D) = \frac1d \sum_{x=1}^d \alpha_x(D)$. Jensen's inequality on the second expectation then yields 
\[\mathbb E[\Regret_Q(\hat\pi,\tau;c)] \ge \Delta \mathbb E[\overline \alpha(D)] - 2{\sqrt2\Delta^2}\sqrt{\frac mN}.\]

We now bound the other expectation. Since $P$ is uniform, $\mathbb E_{x\sim P}[1-\alpha_x(D)] = 1- \frac1N \sum_{x=1}^N \alpha_x(D)$. Thus, if the learner satisfies $\mathbb E\!\left[\mathbb E_{x\sim P}[1-\alpha_x(D)]\right]\le \epsilon$, then $\frac1N\sum_{x=1}^N \mathbb E[\alpha_x(D)]\ge 1-\epsilon$. Even if the learner accepts with probability \(1\) on all states outside \(S_Q\), this still forces $\mathbb E[\bar\alpha(D)]\ge 1-\frac{N\epsilon}{d}$.

Now, setting $N= \frac{d}{2\epsilon}$ and $\Delta = 4\epsilon$ gives us the bounds 
\[E[\Regret_Q(\hat\pi,\tau;c)] \ge 2\epsilon - 32\sqrt{2}\epsilon^{2} \sqrt{\frac {2\epsilon m}d}.\]
Requiring this expected regret to be at most $\epsilon$ yields:
\[
1 \le 32\sqrt{2}\epsilon \sqrt{\frac{2\epsilon m}{d}} \implies m \ge \frac{d}{4096\epsilon^3}.
\]
Setting $c_0 = 1/4096$ completes the proof.

\section{Proofs from the Misspecification Section}
\paragraph{Notation.} For a base policy $\pi_0 \in \Pi$, comparator $\pi \in \Pi$, and validator sequence $\Phi$, define the expert-side late-stop risk
\[
    r_N(\pi_0,\pi,\Phi)\coloneqq\Pr_{T\sim P_N^\star}\left[\tau_{\pi_0,\Phi}(T)>\tau_{\pi_0,\{\pi\}}(T)\right],
\]
with empirical analogue
\[
    \widehat r_N(\pi_0,\pi,\Phi)\coloneqq\frac{1}{n}\sum_{j=1}^n \mathbf 1\left[\tau_{\pi_0,\Phi}(T_j)>\tau_{\pi_0,\{\pi\}}(T_j)\right],
\]
where $T_1,\dots,T_n$ are the unlabeled expert test trajectories. This is the expert-side late-stop quantity that appears in the equilibrium lemma and in the generalization argument.

\subsection{Proof of \Cref{lem:symmetrically-regularized-equilibrium}}

Define a zero-sum game in which the maximizer chooses $p \in \Delta(\Pi)$ and the minimizer chooses $q \in \Delta(\Pi^K)$, with payoff
\begin{align*}
    U(p, q) \coloneqq \mathbb{E}_{\pi \sim p}\, \mathbb{E}_{\Phi \sim q}\left[
    \widehat{r}_N(\pi_0, \pi, \Phi) - \Lambda \cdot \widehat{d}_M(\pi) + \frac{\Lambda}{K} \sum_{k=1}^K \widehat{d}_M(\phi_k)
    \right].
\end{align*}
Since $U$ is bilinear, von Neumann's minimax theorem implies $\min_{q}\max_{p} U(p,q) = \max_{p}\min_{q} U(p,q)$. To bound the maximin, fix any $p \in \Delta(\Pi)$ and let $p^K$ denote the strategy of forming $\Phi = (\phi_1, \dots, \phi_K)$ by drawing $\phi_1, \dots, \phi_K$ i.i.d.\ from $p$. Then,
\begin{align*}
    \min_{q}\; U(p, q) \leq U(p, p^K) &= \mathbb{E}_{\pi \sim p,\, \Phi \sim p^K}\big[\widehat{r}_N(\pi_0, \pi, \Phi)\big] - \Lambda \cdot \mathbb{E}_{\pi \sim p}\big[\widehat{d}_M(\pi)\big] + \frac{\Lambda}{K} \sum_{k=1}^K \mathbb{E}_{\phi_k \sim p}\big[\widehat{d}_M(\phi_k)\big]\\
    &= \mathbb{E}_{\pi \sim p,\, \Phi \sim p^K}\big[\widehat{r}_N(\pi_0, \pi, \Phi)\big] \tag{the other two terms cancel}\\
    &\leq \frac{1}{K + 1}.\tag{same exchangeability argument as \Cref{lemma: generalized sparse selective policies}}
\end{align*}
Since this holds for every $p \in \Delta(\Pi)$, we conclude $\min_{q}\max_{p} U(p, q) = \max_{p}\min_{q} U(p, q) \leq \frac{1}{K+1}$. Hence, there exists a minimizer strategy $q^{\star}\in \Delta(\Pi^K)$ such that $\max_{p}\, U(p, q^\star) \leq \frac{1}{K+1}$. Because this bound holds against any $p$, it holds in particular for the pure strategies $\delta_\pi$ for any $\pi \in \Pi$.

To prove \eqref{eq: misspecified completeness}, take $p = \delta_{\pi_0}$ to be the pure strategy playing the empirical minimizer. Since $\pi_0$ never deviates from itself, the late-stop risk against any committee is exactly zero ($\widehat{r}_N = 0$). Thus,\begin{align*}
    &U(\delta_{\pi_0}, q^\star) = 0 - \Lambda \cdot \widehat{d}_M(\pi_0) + \frac{\Lambda}{K} \mathbb{E}_{\Phi \sim q^\star}\left[ \sum_{k=1}^K \widehat{d}_M(\phi_k) \right] \leq \frac{1}{K+1}\\
    &\implies \mathbb{E}_{\Phi \sim q^\star}\left[ \sum_{k=1}^K \widehat{d}_M(\phi_k) \right] \leq K \cdot \widehat{d}_M(\pi_0) + \frac{1}{\Lambda}.
\end{align*}
To prove \eqref{eq: misspecified soundness}, take $p = \delta_\pi$ for an arbitrary target policy $\pi \in \Pi$. We have\begin{align*}
    &U(\delta_\pi, q^\star) = \mathbb{E}_{\Phi \sim q^\star}\big[\widehat{r}_N(\pi_0, \pi, \Phi)\big] - \Lambda \cdot \widehat{d}_M(\pi) + \frac{\Lambda}{K} \mathbb{E}_{\Phi \sim q^\star}\left[ \sum_{k=1}^K \widehat{d}_M(\phi_k) \right] \leq \frac{1}{K+1}\\
    &\implies \mathbb{E}_{\Phi \sim q^\star}\big[\widehat{r}_N(\pi_0, \pi, \Phi)\big] \leq \Lambda \left(\widehat{d}_M(\pi) - \frac{1}{K} \mathbb{E}_{\Phi \sim q^\star}\left[ \sum_{k=1}^K \widehat{d}_M(\phi_k) \right]\right) + \frac{1}{K+1}\\
    &\implies \mathbb{E}_{\Phi \sim q^\star}\big[\widehat{r}_N(\pi_0, \pi, \Phi)\big] \leq \Lambda \left(\widehat{d}_M(\pi) - \widehat{d}_M(\pi_0)\right) + \frac{1}{K}. \tag{$\pi_0$ minimizes $\widehat d_M$}
\end{align*}

\subsection{Proof of \Cref{thm:misspecified-deterministic-main}}

The proof closely follows the proof of \Cref{theorem: deterministic rejection sample complexity} (\Cref{subsec: proof of deterministic rejection sample complexity}); we highlight the key differences and omit steps that are identical.

\paragraph{Existence of a sparse regularized validator distribution.} By \Cref{lem:symmetrically-regularized-equilibrium}, for any $\Lambda > 0$ and integer $K \geq 1$, there exists a distribution $q^{\star}$ satisfying \eqref{eq: misspecified completeness} and \eqref{eq: misspecified soundness}. Throughout this proof, we let $\Lambda, K$ be free and optimize them at the end.

\paragraph{Bounding the empirical risk.} Unlike the deterministic non-misspecified setting—which draws $k$ independent committees to suppress test risk and relies on the version space to prevent source abstention—here we bound the expected empirical metrics over a single regularized sequence $\Phi \sim q^\star$ (recall that this is done so as to not amplify the irreducible error).

For the source data, the policy abstains if any validator in $\Phi$ disagrees with $\pi_0$. Combining the union bound with the completeness property \eqref{eq: misspecified completeness} bounds the expected empirical abstention:
\begin{align}
    \mathbb{E}_{\Phi \sim q^\star}\big[\widehat{\alpha}_M(\pi_0,\tau_{\pi_0,\Phi})\big] 
    \leq \mathbb{E}_{\Phi \sim q^\star}\left[ \sum_{k=1}^K \widehat{d}_M(\phi_k) \right] 
    \leq K \cdot \widehat{d}_M(\pi_0) + \frac{1}{\Lambda}.\label{eq: misspecified empirical abstention}
\end{align}
For the test data, evaluating against any comparator $\pi \in \Pi$, the soundness property \eqref{eq: misspecified soundness} directly bounds the expected empirical late-stop risk by the empirical suboptimality of $\pi$. In particular, taking our comparator to be $\tilde \pi = \arg\min_{\pi\in \Pi} \Delta_\pi$,
\begin{align}
    \mathbb{E}_{\Phi \sim q^\star}\big[ \widehat{r}_N(\pi_0, \tilde \pi, \Phi) \big] 
    \leq \Lambda \Big( \widehat{d}_M(\tilde \pi) - \widehat{d}_M(\pi_0) \Big) + \frac{1}{K}.\label{eq: misspecified empirical risk}
\end{align}

\paragraph{Bounding the generalization error on the expert's distribution.} We establish uniform convergence over all $|\Pi|^{K+1}$ possible combinations of base policy $\pi_0$ and validator set $\Phi$, exactly as in the non-misspecified proof. Unlike that proof, the empirical source disagreements and target late-stop risks are no longer small due to misspecification, so we use standard Hoeffding bounds rather than fast-rate bounds. Let $Z \coloneqq (K+1)\log|\Pi| + \log(4/\delta)$. By Hoeffding's inequality and a union bound, with probability at least $1-\delta$, simultaneously for all $(\pi_0, \Phi)$:
\begin{align}
    \big| d_M(\pi_0) - \widehat{d}_M(\pi_0) \big| &\le \sqrt{\frac{Z}{2m}}, \qquad \big| r_N(\pi_0, \tilde \pi, \Phi) - \widehat{r}_N(\pi_0, \tilde \pi, \Phi) \big| \le \sqrt{\frac{Z}{2n}}. \label{eq: misspecified uniform convergence}
\end{align}
For the source data, the true stopping rate satisfies $\alpha_M(\pi_0,\tau_{\pi_0,\Phi}) \le \sum_{k=1}^K d_M(\phi_k)$ by a union bound. Taking the expectation over $\Phi \sim q^\star$ and applying \eqref{eq: misspecified uniform convergence} for each $\phi_k \in \Phi$,
\begin{align*}
    \mathbb{E}_{\Phi \sim q^\star}\big[\alpha_M(\pi_0,\tau_{\pi_0,\Phi})\big] 
    &\le \mathbb{E}_{\Phi \sim q^\star}\left[ \sum_{k=1}^K \widehat{d}_M(\phi_k) \right] + K \sqrt{\frac{Z}{2m}} \tag{by \eqref{eq: misspecified uniform convergence}}\\
    &\le K \cdot \widehat{d}_M(\pi_0) + \frac{1}{\Lambda} + K \sqrt{\frac{Z}{2m}} \tag{by \eqref{eq: misspecified empirical abstention}} \\
    &\leq K \cdot \widehat{d}_M(\tilde \pi) + \frac{1}{\Lambda} + K \sqrt{\frac{Z}{2m}} \tag{$\pi_0$ minimizes $\widehat d_M$} \\
    &\le K \Delta + \frac{1}{\Lambda} + 2K \sqrt{\frac{Z}{2m}}.\tag{by \eqref{eq: misspecified uniform convergence} and definition of $\Delta$}
\end{align*}
For the test data,
\begin{align*}
    \mathbb{E}_{\Phi \sim q^\star}\big[ r_N(\pi_0, \tilde \pi, \Phi) \big] 
    &\le \mathbb{E}_{\Phi \sim q^\star}\big[ \widehat{r}_N(\pi_0, \tilde \pi, \Phi) \big] + \sqrt{\frac{Z}{2n}} \tag{by \eqref{eq: misspecified uniform convergence}}\\
    &\le \Lambda \Big( \widehat{d}_M(\tilde \pi) - \widehat{d}_M(\pi_0) \Big) + \frac{1}{K} + \sqrt{\frac{Z}{2n}}\tag{by \eqref{eq: misspecified empirical risk}}\\
    &\le \Lambda \Big( d_M(\tilde \pi) - d_M(\pi_0) \Big) + \frac{1}{K} + 2\Lambda \sqrt{\frac{Z}{2m}} + \sqrt{\frac{Z}{2n}} \tag{by \eqref{eq: misspecified uniform convergence}}\\
    &\le \Lambda \Delta + \frac{1}{K} + 2\Lambda \sqrt{\frac{Z}{2m}} + \sqrt{\frac{Z}{2n}}, \tag{definition of $\Delta$, and $-d_M(\pi_0) \leq 0$}
\end{align*}
where the last step uses $d_M(\tilde\pi) - d_M(\pi_0) \leq \Delta_{\tilde\pi} \leq \Delta$.

\paragraph{Bounding the stopping rate and regret on the learned policy's distribution.}
For the source data, we apply the same union bound and prefix coupling argument as in the proof of \Cref{theorem: deterministic rejection sample complexity}. The one difference is that $\pi_0$ may now deviate from $\pi^\star$, contributing an additional $d_M(\pi_0)$ term. Since $d_M(\pi_0) \leq d_M(\tilde\pi) + 2\sqrt{Z/2m} \leq \Delta + 2\sqrt{Z/2m}$ by \eqref{eq: misspecified uniform convergence} and the empirical minimality of $\pi_0$, taking the expectation over $q^\star$ yields:
\begin{align*}
    \mathbb{E}_{\Phi \sim q^\star}\big[\alpha_M(\pi_0, \tau_{\pi_0,\Phi})\big] 
    &\leq \mathbb{E}_{\Phi \sim q^\star}\big[ \alpha_M(\pi^\star, \tau_{\pi_0,\Phi}) \big] + d_M(\pi_0) \\
    &\leq \left( K \Delta + \frac{1}{\Lambda} + 2K \sqrt{\frac{Z}{2m}} \right) + \Delta + 2\sqrt{\frac{Z}{2m}} \\
    &= (K+1)\Delta + \frac{1}{\Lambda} + 2(K+1)\sqrt{\frac{Z}{2m}}.
\end{align*}

For the target data, let $B \coloneqq \{ \tau_{\pi_0,\{\pi^\star\}} < \tau_{\pi_0,\Phi} \}$ be the event that the deployed policy $\pi_0$ deviates from the true expert $\pi^\star$ before the validator set stops execution. As in the realizable case, the event $B$ is $\mathcal{G}_{h^{\star}}$-measurable with $h^{\star} = \min(\tau_{\pi_0,\Phi}, \tau_{\pi_0,\{\pi^\star\}})$. \Cref{lemma:prefix-coupling} therefore gives $\Pr_{N,\pi_0}[B] = \Pr_{N,\pi^\star}[B]$. 

To bound this probability on the expert's distribution, we compare $\pi^\star$ to the best-in-class policy $\tilde \pi$. On any expert trajectory, if $\pi_0$ deviates from $\pi^\star$ before $\Phi$ triggers (event $B$), then either (1) $\tilde \pi$ deviates from $\pi^\star$ at some point during the trajectory, or (2) $\tilde \pi$ perfectly matches $\pi^\star$ everywhere, meaning the deviation between $\pi_0$ and $\pi^\star$ is simultaneously a deviation between $\pi_0$ and $\tilde \pi$, and this deviation occurs before $\Phi$ triggers. This implies the set containment $B \subseteq \{ \tau_{\tilde \pi, \{\pi^\star\}} \le H \} \cup \{ \tau_{\pi_0,\Phi} > \tau_{\pi_0,\{\tilde \pi\}} \}$ on expert trajectories. Applying a union bound, we have $\Pr_{N,\pi^\star}[B] \le d_N(\tilde \pi) + r_N(\pi_0, \tilde \pi, \Phi)$. 

As in the proof of \Cref{theorem: deterministic rejection sample complexity}, the stopped regret is bounded by $\costmax$ times the probability of an unrejected deviation from the expert, so taking the expectation over $q^\star$ yields:
\begin{align*}
    \mathbb{E}_{\Phi \sim q^\star} \big[ \Regret_N(\pi_0, \tau_{\pi_0,\Phi}; c) \big] 
    &\leq \costmax \cdot \mathbb{E}_{\Phi \sim q^\star}\big[ \Pr_{N,\pi_0}[B] \big] \\
    &= \costmax \cdot \mathbb{E}_{\Phi \sim q^\star}\big[ \Pr_{N,\pi^\star}[B] \big] \\
    &\leq \costmax \left( d_N(\tilde \pi) + \mathbb{E}_{\Phi \sim q^\star}\big[ r_N(\pi_0, \tilde \pi, \Phi) \big] \right) \\
    &\leq \costmax \left( \Delta + \Lambda \Delta + \frac{1}{K} + 2\Lambda \sqrt{\frac{Z}{2m}} + \sqrt{\frac{Z}{2n}} \right).
\end{align*}
Setting $\Lambda = K = \Theta(\Delta^{-1/2})$ balances the $\Delta$-dependent and $\Delta$-independent terms in both bounds, yielding the rates in \Cref{thm:misspecified-deterministic-main}.

\subsection{Proof of Corollary~\ref{cor: misspecified optimal rates}}

Let $c_0 \coloneqq \log|\Pi| + \log(4/\delta)$. We define the $\epsilon^\star \coloneqq \max\left\{ (c_0/m)^{1/5}, (c_0/n)^{1/3} \right\}$ and set $K = \Lambda = \lceil (\Delta^{1/2} + \epsilon^\star)^{-1} \rceil$. Substituting this into \Cref{thm:misspecified-deterministic-main},
\[
    K\Delta + \frac{1}{K} \le \frac{\Delta}{\Delta^{1/2}} + (\Delta^{1/2} + \epsilon^\star) = 2\Delta^{1/2} + \epsilon^\star.
\]
For the sample estimation penalties, noting $C \le (K+1)c_0$ and $K \le (\epsilon^\star)^{-1}$, the labeled penalty is bounded by $(\epsilon^\star)^{-3/2}\sqrt{c_0/m}$. By definition, $\epsilon^\star \ge (c_0/m)^{1/5}$, which implies $\sqrt{c_0/m} \le (\epsilon^\star)^{5/2}$. Thus, the labeled penalty is at most $(\epsilon^\star)^{-3/2}(\epsilon^\star)^{5/2} = \epsilon^\star$. An identical balancing argument bounds the unlabeled penalty: $K^{1/2}\sqrt{c_0/n} \le (\epsilon^\star)^{-1/2}(\epsilon^\star)^{3/2} = \epsilon^\star$. 

Because all terms in the abstention and regret bounds of \Cref{thm:misspecified-deterministic-main} are upper-bounded by $O(\Delta^{1/2} + \epsilon^\star)$, expanding $\epsilon^\star$ yields the claimed result.

\subsection{Proof of Proposition~\ref{cor:off-policy-misspecified-rates}}

The proof is a direct adaptation of the proof of \Cref{thm:misspecified-deterministic-main}. We first redefine the suboptimality gaps and their empirical estimators with respect to the off-policy behavior policies $\pi^{\mathsf{tr}}$ and $\pi^{\mathsf{te}}$ instead of the expert $\pi^\star$:
$$d_M^{\mathsf{tr}}(\bar\pi) \coloneqq \Pr_{M, \pi^{\mathsf{tr}}}\big[\tau_{\bar\pi, \{\pi^{\mathsf{tr}}\}} \le H\big], \qquad d_N^{\mathsf{te}}(\bar\pi) \coloneqq \Pr_{N, \pi^{\mathsf{te}}}\big[\tau_{\bar\pi, \{\pi^{\mathsf{te}}\}} \le H\big]$$
$$\widehat d_M^{\mathsf{tr}}(\pi) \coloneqq \frac{1}{m}\sum_{i=1}^m \mathbf 1\big[\tau_{\pi, \{\pi^{\mathsf{tr}}\}}(S_i)\le H\big], \qquad \widehat r_N^{\mathsf{te}}(\pi_0,\pi,\Phi) \coloneqq \frac{1}{n}\sum_{j=1}^n \mathbf 1\big[\tau_{\pi_0,\Phi}(T_j) > \tau_{\pi_0,\{\pi\}}(T_j)\big].$$
Let $\pi_0 \in \arg\min_{\pi \in \Pi} \widehat d_M^{\mathsf{tr}}(\pi)$. Applying the symmetrically regularized equilibrium (\Cref{lem:symmetrically-regularized-equilibrium}) to these modified losses guarantees the existence of a distribution $q^\star$ such that:\begin{align*}
    &\mathbb{E}_{\Phi \sim q^\star}\left[\sum_{k=1}^K \widehat d_M^{\mathsf{tr}}(\phi_k)\right] \le K\widehat d_M^{\mathsf{tr}}(\pi_0) + \frac{1}{\Lambda},\\
    &\mathbb{E}_{\Phi \sim q^\star}\big[\widehat r_N^{\mathsf{te}}(\pi_0,\pi,\Phi)\big] \le \Lambda \bigl(\widehat d_M^{\mathsf{tr}}(\pi) - \widehat d_M^{\mathsf{tr}}(\pi_0)\bigr) + \frac{1}{K} \qquad \text{for every } \pi \in \Pi.
\end{align*}
From here, the mechanics follow exactly as before. Applying Hoeffding bounds introduces the $O(\sqrt{C/m})$ and $O(\sqrt{C/n})$ concentration penalties to these empirical terms. The prefix-coupling argument bounds the true completeness $\alpha_M$ by the sum of training disagreements, and the test regret by $\costmax$ times the late-stop risk against $\pi^{\mathsf{te}}$.

Finally, to bound the regret against the training demonstrator $\pi^{\mathsf{tr}}$ in the test environment, coupling $\pi^{\mathsf{tr}}$ to $\bar\pi$ and then $\bar\pi$ to $\pi^{\mathsf{te}}$ naturally introduces the discrepancy terms $d_N^{\mathsf{tr}}(\bar\pi)$ and $d_N^{\mathsf{te}}(\bar\pi)$ via the union bound, yielding the secondary result.

Finally, to obtain the optimal rates claimed in the proposition, we evaluate these bounds at the comparator $\bar\pi_{\mathrm{off}}\in \arg\min_{\bar\pi\in\Pi}\max\{d_M^{\mathsf{tr}}(\bar\pi),d_N^{\mathsf{te}}(\bar\pi)\}$. Setting $K=\Lambda=\left\lceil(\Delta_{\mathrm{off}}^{1/2}+\epsilon^\star)^{-1}\right\rceil$ with $\epsilon^\star \coloneqq \max\{(\log|\Pi|/m)^{1/5},(\log|\Pi|/n)^{1/3}\}$ and applying the identical balancing argument from the proof of Corollary~\ref{cor: misspecified optimal rates} yields the final $\widetilde{O}$ rate.

\end{document}